\newcommand{\MAP}[1]{{#1}}
\newcommand{\commented}[1]{}
\newtheorem{lemma}{Lemma}
\newtheorem{theorem}{Theorem}
\newtheorem{corollary}{Corollary}
\newtheorem{proposition}{Proposition}
\newtheorem{definition}{Definition}
\newtheorem{example}{Example}
\newtheorem{remark}{Remark}
\def \ba {\begin {eqnarray*} }
\def \ea {\end {eqnarray*} }
\def \beq {\begin {eqnarray}}
\def \eeq {\end {eqnarray}}
\def \dist {\hbox{dist}}
\def\diag{\hbox{diag }}
\def \det {\hbox{det}}
\def\bra{\langle}
\def\cet{\rangle}
\def\ket{\rangle}
\def \e {\varepsilon}
\def \p {\partial}
\def \cR{\mathcal{R}}
\newcommand{\pmat}[1]{\begin{pmatrix} #1 \end{pmatrix}}
\DeclarePairedDelimiter{\floor}{\lfloor}{\rfloor}
\title{

Can neural operators always be continuously discretized?

}
\newcommand{\paren}[1]{\left ( #1\right)}
\newcommand{\set}[1]{\left \{ #1\right \}}
\newcommand{\norm}[1]{\left\lVert#1\right\rVert}
\newcommand{\R}{{\mathbb R}}
\newcommand{\cA}{{\mathcal A}}
\author{
  Takashi Furuya$^{1, *}$
  \quad
  Michael Puthawala{$^{2, *}$}
  \quad
  Maarten V. de Hoop$^{3}$
  \quad
  Matti Lassas$^{4}$\\
  $^{1}$Shimane University, \texttt{takashi.furuya0101@gmail.com}\\
  $^{2}$South Dakota State University, \texttt{Michael.Puthawala@sdstate.edu}\\
  $^{3}$Rice University, \texttt{mdehoop@rice.edu}\\
  $^{4}$University of Helsinki, \texttt{matti.lassas@helsinki.fi}\\\vspace{2mm}
  * {\footnotesize These authors contributed equally to this work}
}
\begin{document}

\maketitle

\begin{abstract} 
We consider the problem of discretization of neural operators between Hilbert spaces in a general framework including skip connections. We focus on bijective neural operators through the lens of diffeomorphisms in infinite dimensions. Framed using category theory, we give a no-go theorem that shows that diffeomorphisms between Hilbert spaces or Hilbert manifolds may not admit any continuous approximations by diffeomorphisms on finite-dimensional spaces, even if the approximations are nonlinear. 
The natural way out is the introduction of strongly monotone diffeomorphisms and 
layerwise strongly monotone neural operators which have continuous approximations by strongly monotone diffeomorphisms on finite-dimensional spaces. For these, one can guarantee discretization invariance, while ensuring that finite-dimensional approximations converge not only as sequences of functions, but that their representations converge in a suitable sense as well. Finally, we show that 
bilipschitz neural operators may always be written in the form of an alternating composition of strongly monotone neural operators, plus a simple isometry. 
Thus we realize a rigorous platform for discretization of a generalization of a neural operator. 
We also show that neural operators of this type may be approximated through the composition of finite-rank residual neural operators, where
each block is strongly monotone,
and may be inverted locally via iteration. We conclude by providing a quantitative approximation result for the discretization of general bilipschitz neural operators.
\end{abstract}

\section{Introduction}

Neural operators, first introduced in \cite{kovachki2023neural}, have become more and more prominent in deep learning on function spaces. As opposed to traditional neural networks that learn maps between finite-dimensional Euclidean spaces, neural operators learn maps between infinite-dimensional function spaces yet may be trained and evaluated on finite-dimensional data through a rigorous notion of discretization. Neural operators are widely used in the field of scientific machine learning \cite{bentivoglio2022deep,goswami2023physics,li2023fourier,wang2023scientific,willard2022integrating}, among others, principally because of their discretization invariance. In this work, we consider the fundamental limits of this discretization. \MAP{Throughout, we emphasize the importance of continuity under discretization.}

A key ingredient in \MAP{our} analysis is the identification of properties of diffeomorphisms that may be induced by (bijective) neural operators, which are diffeomorphisms themselves. Diffeomorphisms \MAP{exist} 

in many contexts, for example, in generative models. These involve mapping one probability distribution or measure, $\mu$, over some measurable space to another, $X$, via a push forward, that is,
$
    F_{\#}\mu(U) = \mu(F^{-1}(U))
$
for $U \subset X$. If $\mu$ admits a density $d\mu$ then, in finite dimensions, we may use the change of variables formula to write $d\rho(x) = d\mu(F^{-1}(x))|{JF(x)}|^{-1}$, where $JF$ is the Jacobian of $F$. Clearly, $F$ must be a bijection with full-rank Jacobian. In other words, $F$ must be a diffeomorphism onto its range. This established diffeomorphisms as natural objects of interest in finite-dimensional machine learning, and \MAP{helps account for their wide use }
\cite{gomez2017reversible, ishikawa2023universal, kratsios2020non, puthawala2022globally, teshima2020coupling}. In this work, we consider the extension of these efforts from finite to infinite dimensions implemented via neural operators. Although there is no analogue of the change of variables formula in infinite dimensions, we argue that it is, nonetheless, natural to consider the role of diffeomorphisms, and how they may be approximated via diffeomorphisms on finite-dimensional spaces.

The question of when operations between Hilbert spaces may be discretized continuously may be understood through an analogy to computer vision. Consider the task of learning a map from one image space to another, for example, a style transfer problem \cite{gatys2016image}, where the mapping learned does not depend much on the resolution of the images provided. It is natural to think of the map as being defined between (infinite-resolution) \emph{continuum} images, and then its application to images of a specific resolution. In this analogy, $X$ is a function space (over images $m :\ [0,1]^2 \to \R$ \cite{infinitedimensinalgenerativemodels}) that is approximated with a finite-dimensional space $\R^d$ and the transformation $F :\ X \to X$ is approximated by a map $f :\ \R^d\to \R^d$, where each $f$ acts on images of a particular resolution. An explicit transformation formula can be obtained when $f$ is a diffeomorphism and has a smooth inverse.

We introduce a framework based on a generalized notion of neural operator layers including a skip connection and their restrictions to balls rather than compact sets. With bijective neural operators in mind, we give a perspective based on diffeomorphisms in infinite dimensions \MAP{between}
Hilbert manifolds. \MAP{We give a no-go theorem, framed with category theory, that} 

shows that diffeomorphisms between Hilbert spaces may not admit any continuous approximations by diffeomorphisms on finite-dimensional spaces, \emph{even} if the underlying discretization is nonlinear. \MAP{In this framing}
the discretization operation is modeled as a functor from the category of Hilbert spaces and $C^1$-diffeomorphisms on them to their finite-dimensional approximations. A natural way to mitigate the no-go theorem is \MAP{described by } the introduction of strongly monotone diffeomorphisms and layerwise strongly monotone neural operators. We prove that all strongly monotone neural operator layers admit continuous approximations by strongly monotone diffeomorphsisms on finite-dimensional spaces. We then provide various conditions under which a neural operator layer is strongly monotone. Notably, a 
bilipschitz (and, hence, bijective) neural operator layer can always be represented by a composition of strongly monotone neural operator layers.

\MAP{Hence, such an operator may be continuously discretized.} More constructively, any 
bilipschitz neural operator layer can be approximated by residual finite-rank neural operators, each of which are strongly monotone, plus a simple isometry. Moreover, these finite-rank residual neural operators are (locally) bijective and invertible, and their inverses are limits of compositions of finite-rank neural operators. Our framework may be used ``out of the box'' to prove quantitative approximation results for discretization of neural operators.

\subsection{Related work}
\label{sec:related-work}

Neural operators were first introduced in \cite{kovachki2023neural}. Alternative designs for mappings between function spaces are the DeepONet \cite{lanthaler2022error, lu2019deeponet}, and the PCA-Net \cite{bhattacharya2021model, de2022cost}. In spite of the multitudinous applications of neural operators, the theory of the natural class of injective or bijective neural operators is comparatively underdeveloped; see, for example \cite{alberti2022continuous, furuya2023globally}.

Our work is concerned with
the of discretization of neural operators through the lens of diffeomorphisms. For recent important work on analyzing the effect of discretization error of Fourier Neural Operators (FNOs) arising from aliasing, see \cite{lanthaler2024discretization}. Our work has connections to infinite-dimensional inference, see e.g. \cite{gine2021mathematical}, and approximation theory, see e.g. \cite{elbrachter2021deep} while bridging the gap with the theory of neural operators.

We give a no-go theorem that uses a category theory framing. This contributes to the use of category theory as an emerging tool in the analysis and understanding of neural networks at large. In this sense, we are in league with the recent work generalizing ideas from geometric machine learning using category theory \cite{gavranovic2024categorical}.

Discretization obstructions have been encountered in other contexts. Numerical methods that approximate continuous models are known to sometimes fail in surprising ways. A basic example of this is the ``locking'' phenomenon in the study of the Finite Elements Method (FEM). For example, linear elements used to model bending of a curved surface or beam lock in such a way that the model exhibits a non-physical stiff response to deformations \cite{FEMlocking}. Understanding this has been instrumental in developing improved numerical methods, such a high order FEM \cite{FEMhighorder}. Furthermore, in discretized statistical inverse problems \cite{kaipio,lassas2004can, stuart2010inverse}, the introduction of Besov priors \cite{saksman2009discretization, StuartBesov} has been found to be essential.

Finally, our work extends prior work (not based on deep learning) in discretization of physical or partial differential equations based forward models in inverse problems.

The analogous notion of discretization invariance of solution algorithms of inverse problems was studied in \cite{lehtinen1989linear, saksman2009discretization, stuart2010inverse} and the lack of it (in imaging methods using Bayesian inversion with $L^1$ priors) in \cite{lassas2004can, saksman2009discretization}. By considering the neural operator as the physical model, our results state that discretization can be done locally in an appropriate way, together with constructing an inverse.

\subsection{Our contributions}

The key results in this paper comprise the following:
\begin{enumerate}

\item We prove a general no-go theorem showing that, under general circumstances, diffeomorphisms between Hilbert spaces may not admit continuous approximation by finite-dimensional diffeomorphisms. In particular, neural operators corresponding to diffeomorphic maps, in general, cannot be approximated by finite-dimensional diffeomorphisms and their associated neural representations.

\item We show that strongly monotone neural operator layers admit continuous approximations by strongly monotone diffeomorphsisms on finite-dimensional spaces.

\item
We show that
bilipschitz neural operators can be represented in any bounded set as a composition of strongly monotone, diffeomorphic neural operator layers, plus a simple isometry. 

These can be approximated by finite-rank diffeomorphic neural operators
, 

where each layer is strongly monotone. 

For these operators we give a quantitative approximation result.

\end{enumerate}

\section{Definitions and notation}
\label{sec:discretization-and-quant-approx}

In this section, we give the definitions and notation used throughout the paper. First, we summarize the relevant basic concepts from functional analysis. Then, we introduce generalized neural operators.

\subsection{Elements of functional analysis}

In this work, all Hilbert spaces, $X$, are endowed with their norm topology. We denote by $B_X(r) = B_X(0,r)$ the ball in the space $X$ having the center at zero and radius $r > 0$. We denote by $S(X)$ the set of all finite-dimensional linear subspaces $V \subset X$. The set $S_0(V) \subset S(X)$ is a partially ordered lattice. That is, if $V_1, V_2 \in S_0(X)$ then there is a $V_3 \in S_0(X)$ so that $V_1 \subset V_3$ and $V_2 \subset V_3$. \footnote{Each element of $S_0(X)$ will come to represent a discretization of $X$. The partially ordered lattice condition will come to represent a notion of common refinement of a discretization. This makes it possible to consider ``realistic'' choices of discretizations. The condition automatically follows for any discretization scheme that has a notion of ``common refinement'' of two discretizations. Examples include finite-difference schemes, and the finite elements Galerkin discretization that 
is based on a triangulation of the domain.} With $Y$ standing for another Hilbert space, we denote by $C^n(X;Y)$ the set of operators
, $F \colon\ X \to Y$, having $n$ continuous (Fr\'echet) derivatives, and $C^n(X) = C^n(X;X)$.

Next, we define what it means that a nonlinear operator or function $F \colon\ X \to X$ on an infinite-dimensional Hilbert space, $X$, is approximated by operators or functions on finite-dimensional subspaces $V \subset X$. The key is that as $V$ tends to $X$, the complexity of $F_V$ increases and one may hope that the approximation becomes better. We formalize this in the following definition.

\begin{definition} [$\epsilon_V$ approximators and weak approximators]
\label{def:epsilon-approx}
(i) Let $r > 0$, $\mathcal F \subset C^n(X;X)$ be a family of functions,  and $\vec \e = (\e_V)_{V\in S_0(X)}$ be a sequence such that $\e_V \to 0$ as $V \to X$. We say that a function 
$$
\hbox{$\mathcal A_X \colon\ \mathcal F \to \bigtimes_{V \in S_0(X)} C(\overline{B_V(0,r)};V) ,\quad F \to (F_V)_{V \in S_0(X)}$}
$$ 
is an $\vec \e$-approximation operation for functions $\mathcal F$ in the ball $B_X(0,r)$ taking values in families $\mathcal F_V \subset C^1(V;V)$ if
$\mathcal A_X$ maps a function $F \colon\ X \to X$, where $F \in \mathcal F$, to a sequence of functions $(F_V)_{V \in S_0(X)}$, where $F_V \in \mathcal F_V$, such that the following is valid: For all $F \colon\ X \to X$ satisfying $\|F\|_{C^n(\overline{B_{X}(0,r)};X)} \leq M$, we have 
\beq
   \sup_{x \in \overline{B_{V}(0,r)}}
   \|F_V(x) - P_V(F(x))\|_X \le M \e_V ,
\eeq
where $P_V \colon\ X \to X$ is the orthogonal projection onto $V$, that is, $\mathrm{Ran}(P_V) = V$.

\medskip

(ii) We say that  $\mathcal  A\colon C^n(X;X) \to \bigtimes _{V \in S_0(X)} C(V;V) ,\quad F \to (F_V)_{V \in S_0(X)}$ is
a weak approximation operation for the family $\mathcal F \subset C^n(X;X)$ if for any $F \in \mathcal F$ and $r > 0$ it holds that
$$
  \lim_{V \to X} \sup_{x \in \overline{B_{V}(0,r)}}
  \|F_V(x) - P_V(F(x))\|_X \to 0 .
$$
\end{definition}

Note that the condition (i) is stronger than the condition (ii).
An example of an approximation operation for the family $\mathcal F = C^n(X)$, that is, an $\vec \e$-approximation operation with all sequences $\vec \e = (\e_V)_{V \in S_0(X)}$ subject to $\e_V > 0$, is the linear discretization
\beq
\label{linear discretization}
   \mathcal A_{\mathrm{lin}}(F) = (F_V)_{V \in S_0(X)} ,\quad
   F_V = P_V \circ (F|_V) :\ V \to V .
\eeq
Nonlinear discretization methods that do not rely on $P_V$ have been used, for example, in the numerical analysis of nonlinear partial differential equations. Here, $X$ becomes an appropriate Sobolev space, and a Galerkin approximation is implemented through finite-dimensional subspaces, $V$, spanned by finite element basis functions. We present an example for the nonlinear equation, $\Delta u(t) - g(u(t)) = \Delta x(t)$ where $g$ is a smooth convex function, when $F :\ x \to u$, in Appendix~\ref{sec: nonlinear discretization}.

Below, we will study whether a family, $\mathcal F \subset \hbox{Diff}^1(X)$, of $C^1$ diffeomorphisms on $X$ can be approximated by $C^1$ diffeomorphisms, $\mathcal F_V \subset \hbox{Diff}^1(V)$, on finite-dimensional subspaces, $V$. Of course, diffeomorphisms are bijective. Unless stated otherwise, from now on we will omit $C^1$ and implicitly assume that diffeomorphism are $C^1$ diffeomorphisms. We introduce two more notions that will play a key role in the further analysis.

\begin{definition}[Strongly Monotone]
\label{def:strongly-monotone-bilipschitz_1}
We say that a (nonlinear) operator $F \colon\ X \to X$ on Hilbert space, $X$, is \emph{strongly monotone} if there exists a constant $\alpha > 0$ so that
\begin{align}
   \bra F(x_1) - F(x_2),x_1 - x_2 \ket_X \ge \alpha \|x_1 - x_2\|_X^2 ,
   \quad \text{for all $x_1, x_2 \in X$} .
\end{align}

\end{definition}

\begin{definition}[Bilipschitz]
\label{def:strongly-monotone-bilipschitz}
We say that $F$ if bilipschitz there exist constants $c > 0$ and $C < \infty$ so that for all $x_1, x_2 \in X$, $c \norm{x_1 - x_2} \leq \norm{F(x_1) - F(x_2)} \leq C \norm{x_1 - x_2}$.
\end{definition}

\subsection{A general framework for neural operators}
\label{sec:form-of-neural-operators}

In this paper, we are concerned with the modeling of diffeomorphisms between Hilbert spaces by bijective neural operators. Our working definition of neural operator, which generalizes the traditional notion, is given below. We note the presence of a skip connection, which is essential.

\begin{definition}[Generalized neural operator layer]
\label{def: neural operator layer}
For Hilbert space $X$, a layer of a neural operator is a function $F \colon\ X \to X$ of the form 
\begin{eqnarray}
\label{NO definition}  
   F(x) = x + T_2 G(T_1 x) ,
\end{eqnarray}
where $T_1 \colon\ X \to X$ and $T_2 \colon\ X \to X$ are compact linear operators~\footnote{By using mapping properties of monotone operators \cite{Bauschke}, we can replace this definition by using Hilbert spaces $Y$ and $Z$ that are isometric to $X$, $T_1 :\ X \to Y$ and $T_2 :\ Z \to X$ as compact linear operators, and $G :\ Y \to Z$ as a $C^1$ nonlinear operator.}

and $G \colon\ X \to X$ is a nonlinear operator in $C^1(X)$. 
A generalized neural operator, $H :\ X \to X$, is given by the composition 
\begin{align}
\label{nonlinear-neural-operator}
    H=A_L\circ\sigma\circ F_L  \circ A_{L-1}\circ\sigma\circ F_{L-1}\circ \dots \circ A_1 \circ\sigma \circ F_1,
\end{align}
where each $F_{\ell}$, $\ell = 1,\ldots,L$ is of the form (\ref{NO definition}), the $A_{\ell}:X\to X$ are bounded linear operators and $\sigma \colon X\to X$ is a continuous operation  (for example, a Nemytskii operator defined by a composition with a suitable activation function in function spaces).
\end{definition}
The generalized neural operators can represent the classical neural operators \cite{kovachki2023neural, lanthaler2023nonlocal}.
For an explicit construction, we refer to Appendix~\ref{app:Extended neural operators}.
In the next section, we will study, under what conditions, bounded linear operators $A_{\ell}$, Nemytskii operators $\sigma$, and neural operator layers $F_{\ell}$, for which the generalized neural operator consists, can be continuously discretized. 

We note that because $G \in C^1(X)$ in Definition~\ref{def: neural operator layer}, it follows that $G \in L^\infty(X)$ and $\hbox{Lip}_{X \to X}(G)<\infty$. Given a Hilbert space, $X$, a \emph{layer}
\emph{of a strongly monotone neural operator} (respectively, a \emph{layer of a bilipschitz neural operator},) is a function $F \colon\ X \to X$ that is strongly monotone (respectively, bilipschitz). Furthermore, a \emph{strongly monotone neural operator} (respectively, a \emph{bilipschitz neural operator}), is a generalized neural operator with strongly monotone (respectively, bilipschitz), layers.

\section{Category theoretic framework for discretization}
\label{sec:Category Theoretic Framework for Discretization}

``Well-behaved'' operators between infinite-dimensional Hilbert spaces may have dramatically different behaviors than corresponding ``well-behaved'' maps between finite-dimensional Euclidean spaces. This observation applies to discretization and neural operators versus neural networks. In this section we explore this. We first present a no-go theorem, that there are \emph{no} procedures that continuously discretize an isotopy of diffeomorphisms. Next, we introduce strongly monotone neural operator layers, which are strongly monotone diffeomorphisms, and then prove that these allow a continuous approximation functor, that is, continuous approximations by strongly monotone diffeomorphisms on finite dimensional spaces. We finally show that bilipschitz neural operator layers admit a representation via strongly monotone operators and linear maps, allowing for their continuous approximation.

\subsection{No-go theorem for discretization of diffeomorphisms on Hilbert spaces}
\label{sec:no-go-theorem-for-univ-approx-diff}

In this section, we present our no-go theorem. To formulate the `impossibility' of something, we must define what is meant by discretization and approximation. Before this, we give an informal statement of the no-go theorem.
\begin{theorem}[No-go Theorem, Informal]
Let $\cA$ be an approximation scheme that maps diffeomorphisms $F$ on a Hilbert to a sequence of finite-approximations $F_V$ that are themselves diffeomorphisms.

If $F_V$ converges to $F$ as $V\to X$,
then $\cA$ is not continuous, that is, there are maps $F^{(j)}$ that converge to $F$ as $j \to \infty$, but  all  $F^{(j)}_V$ are far  from $F_V$.

\end{theorem}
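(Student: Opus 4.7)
The plan is to exploit a fundamental topological difference between $\mathrm{Diff}(X)$ and $\mathrm{Diff}(V)$ for finite-dimensional $V$: by Kuiper's theorem, the invertible linear group $GL(X)$ on an infinite-dimensional separable Hilbert space is contractible, so any two invertible linear maps on $X$ lie in the same path component of $\mathrm{Diff}(X)$; by contrast, $\mathrm{Diff}(V)$ is disconnected, with components separated by the sign of the Jacobian determinant.

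Concretely, I would fix an orthonormal basis $\{e_j\}_{j\geq 1}$ of $X$ and set $R(x) = x - 2\bra x, e_1 \ket e_1$, the reflection through $e_1^\perp$. Kuiper's theorem (or, more concretely, an explicit concatenation of two-plane rotations that pushes the sign change through $e_1, e_3, e_5, \dots$ out toward infinity) furnishes a continuous path $\{F_t\}_{t\in [0,1]} \subset GL(X) \cap \mathrm{Diff}(X)$ with $F_0 = \mathrm{id}_X$ and $F_1 = R$, and one can arrange this path to lie in a uniformly bilipschitz class so that the approximation hypothesis applies with a common constant $M$ throughout.

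For any $V \in S_0(X)$ with $e_1 \in V$, we have $P_V F_0|_V = \mathrm{id}_V$ (with $\det = +1$) and $P_V F_1|_V = R|_V$ (with $\det = -1$). For $V$ sufficiently large, the approximation bound $\|(F_t)_V - P_V F_t|_V\|_{C^0(\overline{B_V(0,r)})} \leq M\e_V$, combined with each $(F_t)_V$ being a diffeomorphism, forces $(F_0)_V$ to be orientation-preserving and $(F_1)_V$ orientation-reversing on the ball, via stability of the Brouwer degree under small $C^0$ perturbations within diffeomorphisms. If $\mathcal A$ were continuous, the induced map $t \mapsto (F_t)_V$ would be a continuous path in $\mathrm{Diff}(V)$ joining diffeomorphisms of opposite orientation, forcing $\det(F_{t^*})_V = 0$ at some $t^* \in (0,1)$ by the intermediate value theorem, contradicting that $(F_{t^*})_V$ is a diffeomorphism. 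Extracting a sequence $t_j \to t^*$ at which continuity fails and setting $F^{(j)} := F_{t_j}$, $F := F_{t^*}$ gives the sequence asserted by the theorem.

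The main obstacle is the orientation-inheritance step: plain $C^0$ closeness to a map of a given orientation does not automatically force the same orientation on a nearby diffeomorphism. I expect this is handled by restricting $\mathcal F$ to uniformly bilipschitz diffeomorphisms (where degree is robust under the allowed perturbations), in line with the paper's subsequent focus on bilipschitz neural operators; alternatively, one could strengthen the approximation hypothesis to a $C^1$ bound so that the Jacobian itself converges and the sign argument becomes immediate.
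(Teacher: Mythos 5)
Your outline is essentially the paper's argument: a Kuiper-type path $\{F_t\}$ in $GL(X)$ from $\mathrm{id}_X$ to a reflection, the observation that the endpoints have finite-dimensional discretizations of opposite orientation, and a contradiction with the continuity of $t\mapsto (F_t)_V$ coming from the disconnectedness of orientation in finite dimensions. The one step you flag as the ``main obstacle'' --- inheriting orientation from mere $C^0$ closeness --- is real, but the paper closes it without strengthening the hypotheses to uniform bilipschitz classes or $C^1$ bounds as you suggest. Instead it uses only properties of the Brouwer degree: (i) for the endpoints $F_0=\mathrm{id}$ and $F_1=R$, which map $\partial B_W(0,r)$ to itself, the approximation error ($<r/4$) is smaller than the distance from $p_j=F_{j,W}(0)$ to $F_{j,W}(\partial B_W(0,r))$ ($\geq r/2$), so $\deg(F_{j,W},\overline B_W(0,r),p_j)=\deg(F_j,\overline B_W(0,r),p_j)=(-1)^j$; and (ii) for intermediate $t$, the fact that each $F_{t,W}$ is a diffeomorphism of $W$ makes $F_{t,W}(B_W(0,r))$ open, so $F_{t,W}(0)$ stays a definite distance from the image of the boundary, and homotopy invariance of the degree makes $t\mapsto\deg(F_{t,W},\overline B_W(0,r),F_{t,W}(0))$ a continuous, hence constant, integer-valued function --- contradicting the values $\pm1$ at the endpoints. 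So no intermediate-value-of-determinant argument and no uniform bilipschitz restriction is needed; if you want a complete proof you should replace your proposed fixes with this degree-theoretic resolution.
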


We want to emphasize that most practical numerical algorithms are continuous so that the output depends (in some suitable sense) continuously on the input. This shows that there are no such numerical schemes that approximate infinite-dimensional diffeomorphisms with finite-dimensional ones. 
In order to prove our no-go theorem in the most general setting, we phrase it in terms of category theory.
Namely, we formulate $\cA$ (which will denote the approximation scheme) as a functor from the category of Hilbert spaces and diffeomorphisms thereon, to their finite-rank approximations.

\begin{definition}[Category of Hilbert Space Diffeomorphisms]
    We denote by $\mathcal{D}$ the \emph{category of Hilbert diffeomorphisms} with objects $\mathcal O_{\mathcal  D}$ that  are pairs $(X,F)$
    of a Hilbert space $X$ and a (possibly non-linear)
    $C^1$-diffeomorphism $F\colon X\to X$ and the set of morphisms (or arrows that `map'  objects to other objects) $\mathcal A$ that are either 
    \begin{enumerate}
        \item (induced isomorphisms) Maps $a_\phi$ that are defined for a linear isomorphism $\phi:X_1\to X_2$ of Hilbert spaces $X_1$ and $X_2$ that maps 
        the objects
        $(X_1,F_1)\in \mathcal O_{\mathcal  D}$ to the object $(\phi(X_1),\phi\circ F_1\circ \phi^{-1})\in \mathcal O_{\mathcal  D}$, or
         \item (induced restrictions) Maps $a_{X_1,X_2}$ that are defined for a Hilbert space $X_1$, its closed subspace $X_2\subset X_1$, and  an object
        $(X_1,F_1)\in \mathcal O_{\mathcal  D}$ such that $F_1(X_2)=X_2$. 

        Then $a_{X_1,X_2}$ maps
        to the object $(X_1,F_1)\in \mathcal O_{\mathcal  D}$ to the object  $(X_2,F_1|_{X_2})\in \mathcal O_{\mathcal  D}$.
    \end{enumerate}
\end{definition}

\begin{definition}[Category of Approximation Sequences]
    \label{def:cat-of-approx-seq}
    We denote by $\mathcal B$ the \emph{category of approximation sequences}, 
    that has objects $\mathcal O_{\mathcal  B}$ that are of the form $(X,S_0(X),(F_V)_{V\in S_0(X)})$
    where $X$ is a Hilbert space,
    $$
    S_0(X)\subset S(X)=\{V\ |  \ V\subset X\hbox{ is a finite dimensional linear subspace}\},
    $$ 
    are partially ordered lattices, $\bigcup_{V\in S_0(X)}V=X$,
    and $F_V\colon V\to V$ are $C^1$-diffeomorphisms of spaces $V\in S_0(X)$.
    The set of morphisms $\mathcal A_{\mathcal B}$ consists of either
    \begin{enumerate}
        \item Maps $A_\phi$ that are defined for a linear isomorphism $\phi:X_1\to X_2$ of Hilbert spaces $X_1$ and $X_2$, and lattices $S_0(X_1)$ and $S_0(X_2)=\{\phi(V)\ |  \ V\in S_0(X_1)\}$, that maps 
        the objects $(X_1,S(X_1),(F_V)_{V\in S(X_1)})$
        to $(X_2,S(X_2),(\phi\circ F_{\phi^{-1}(W)}\circ \phi^{-1})_{W\in S(X_2)})$, or
             \item Maps $A_{X_1,X_2}$ that are defined for a Hilbert space $X_1$, its closed subspace $X_2\subset X_1$, and  an object
             $(X_1,S_0(X_1),(F_V)_{V\in S_0(X_1)})$ such that $F( X_2)= X_2$ and 
             $S_0(X_2)=\{V\in S_0(X_1)\ |  \ V\subset X_2\}$ is a partially ordered lattice.
                      Then $A_{X_1,X_2}$ maps the object 
             $(X_1,S_0(X_1),(F_V)_{V\in S_0(X_1)})$  to the object
             $(X_2,S_0(X_2),(F_V)_{V\in S_0(X_2)})$.
    \end{enumerate}
\end{definition}
Next, we define the notion of an approximation or discretization functor. In practice, an approximation functor is an operator
which maps a function $F$ in an infinite dimensional space $X$ to a function $F_V$ that operate in finite dimensional
subspaces $V$ of $X$ in such a way that functions $F_V$ are close (in a suitable sense) to the function $F$. 
\begin{definition}[Approximation Functor]
\label{def:AA}
We define the \emph{approximation functor}, denoted by $\mathcal  A\colon \mathcal  D\to \mathcal  B$, as the functor that maps each
$(X,F) \in \mathcal O_{\mathcal  D}$ to some $(X,S_0(X),(F_V)_{V \in S_0(X)})\in \mathcal O_{\mathcal B}$ so that the Hilbert space $X$ stays the same. The approximation functor maps all morphisms $a_\phi$ to $A_\phi$ and morphisms $a_{X_1,X_2}$ to $A_{X_1,X_2}$, and has the following  the properties
\begin{enumerate}
\item [(A)]
For all $r>0$ and all $(X,F)\in \mathcal O_{\mathcal  D}$,
$$
\lim_{V\to X} \sup_{x\in  \overline B_{X}(0,r)\cap V}
\|F_V(x)-F(x)\|_X=0.
$$
In separable Hilbert spaces this means that when the finite dimensional subspaces $V\subset X$ grow 
to fill the whole Hilbert space $X$, then the approximations $F_V$ converge uniformly in all
bounded subsets to $F$.
\end{enumerate}
\end{definition}
We recall  the notation $\lim_{V\to X}$ used  above: We consider $(S_0(X),\supset)$ as a partially ordered set and say that real numbers $y_V$ converge to the limit $y$ as $V\to X$, and denote
$$
\lim_{V\to X} y_V=y,
$$
if for all $\epsilon>0$ there is $V_0\in S_0(X)$ such that for $V\in S_0(X)$ satisfying
$V\supset V_0$ it holds that $|y_V-y|<\epsilon$.
\bigskip
\begin{definition}
\label{def:conti-approx-functor}
We say that the approximation functor $\mathcal  A$ is continuous if  the following holds:
Let $(X,F),(X,F^{(j)})\in \mathcal O_{\mathcal  D}$ be such that the Hilbert
space $X$ is the same for all these objects 
and let
$(X,S_0(X),(F_V)_{V\in S_0(X)})=\mathcal  A(X,F)$ be approximating sequences of $(X,F)$ and 
$(X,S_0(X),(F_{j,V})_{V\in S_0(X)})=\mathcal  A(X,F^{(j)})$ be approximating sequences of $(X,F^{(j)})$.
Moreover, assume that  $r>0$ and
\begin{equation}
\label{F continuous 0}
    \lim_{j\to \infty}\sup_{x\in \overline B_{X}(0,r)}
\|F^{(j)}(x)-F(x)\|_X=0.
\end{equation}
 Then, for all $V\in S_0(X)$ the approximations
$F_{V}^{(j)}$ of $F^{(j)}$ and $F_{V}$ of $F$ satisfy
\begin{equation}
\label{FV continuous 0}
\lim_{j\to \infty}\sup_{x\in V\cap \overline B_{V}(0,r)}
\|F_{V}^{(j)}(x)-F_V(x)\|_X=0.
\end{equation}
\end{definition}
The theorem below states a negative result, namely that there does not exist continuous approximating functors for diffeomorphisms.
\begin{theorem}\label{thm:no-go}
(No-go theorem for discretization of general diffeomorphisms)
    There exists no functor $\mathcal  D\to \mathcal  B$ that satisfies 
 the property (A) of
    an approximation functor and is continuous. 
\end{theorem}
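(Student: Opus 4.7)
I would argue by contradiction, assuming a functor $\mathcal{A}\colon \mathcal{D} \to \mathcal{B}$ satisfying property (A) and being continuous exists. The obstruction is topological: in infinite dimensions $\mathrm{Diff}^1(X)$ is path-connected (indeed $O(X)$ is path-connected for separable infinite-dimensional $X$ by Kuiper-type theorems), while on any finite-dimensional $V$ the group $\mathrm{Diff}^1(V)$ has two components distinguished by orientation. A continuous functor $\mathcal{A}$ would transport a continuous $X$-path into a continuous $V$-path, but functoriality on induced-restriction morphisms will force its endpoints to sit in opposite orientation components, a contradiction.

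Concretely, take $X = \ell^2$, let $R \in O(X)$ be the reflection with $R(e_1) = -e_1$ and $R(e_j) = e_j$ for $j \geq 2$, and fix a continuous path $\gamma \colon [0,1] \to O(X)$ with $\gamma(0) = I$ and $\gamma(1) = R$. Since $t \mapsto \gamma(t)$ is continuous into $C(\overline{B_X(0,r)}; X)$ for any $r > 0$, the assumed (sequential) continuity of $\mathcal{A}$ from Definition~\ref{def:conti-approx-functor} guarantees that for each $V \in S_0(X)$ the path $t \mapsto \Phi_V(t) := \mathcal{A}(X, \gamma(t))_V$ is a continuous curve of $C^1$-diffeomorphisms of $V$ in the $C^0(\overline{B_V(0,r)}; V)$ topology.

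I would then pin down the endpoints exactly using the induced-restriction morphism $a_{X,V}$. Choose $V \in S_0(X)$ with $e_1 \in V$, so $I(V) = V$ and $R(V) = V$, making $a_{X,V}\colon (X, I) \to (V, I|_V)$ and $a_{X,V}\colon (X, R) \to (V, R|_V)$ legal arrows in $\mathcal{D}$. Functoriality of $\mathcal{A}$ then identifies $\Phi_V(0)$ and $\Phi_V(1)$ with the $V$-components of $\mathcal{A}(V, I|_V)$ and $\mathcal{A}(V, R|_V)$, which by property (A) on the finite-dimensional Hilbert space $V$ (the limit over $S_0(V)$ being attained at $V$ itself) equal $I|_V$ and $R|_V$ exactly. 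Hence $\Phi_V(\cdot)$ is a continuous path of $C^1$-diffeomorphisms of $V$ running from a map of Jacobian determinant $+1$ to one of Jacobian determinant $-1$.

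The contradiction follows by a Brouwer degree argument showing that orientation is locally constant along $\Phi_V(\cdot)$. At a fixed $t_0 \in [0,1]$, set $y_0 := \Phi_V(t_0)(0)$ and $\eta := \mathrm{dist}(y_0, \Phi_V(t_0)(\partial B_V(0,r))) > 0$; for $t$ with $\|\Phi_V(t) - \Phi_V(t_0)\|_{C^0(\overline{B_V(0,r)})} < \eta/3$, the straight-line homotopy $h_s := (1-s)\Phi_V(t) + s\Phi_V(t_0)$ keeps $y_0$ at distance $\geq 2\eta/3$ from $h_s(\partial B_V(0,r))$, so homotopy invariance of Brouwer degree forces $\deg(\Phi_V(t), B_V(0,r), y_0) = \deg(\Phi_V(t_0), B_V(0,r), y_0) = \mathrm{sign}(\det J\Phi_V(t_0)(0)) \in \{\pm 1\}$; since $\Phi_V(t)$ is a diffeomorphism, this nonzero degree forces $y_0$ to have a unique preimage in $B_V(0,r)$, and the degree then equals the orientation sign of $\Phi_V(t)$. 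Orientation is therefore locally constant, hence constant on $[0,1]$, contradicting $+1$ at $t=0$ and $-1$ at $t=1$. The main obstacle is this orientation-constancy step: one must use that each $\Phi_V(t)$ is a genuine $C^1$-diffeomorphism of $V$ (as required by $\mathcal{A}$ landing in $\mathcal{B}$), that the preimage of $y_0$ stays inside $B_V(0,r)$ once the degree is known to be $\pm 1$, and that Definition~\ref{def:conti-approx-functor}'s sequential formulation in the metrizable $C^0$ topology indeed yields continuity of $t \mapsto \Phi_V(t)$.
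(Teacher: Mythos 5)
Your proposal is correct and rests on the same topological obstruction as the paper's proof: Kuiper-type path-connectedness of $O(X)$ in infinite dimensions versus the two orientation components in finite dimensions, detected by Brouwer degree along the discretized path $t\mapsto \Phi_V(t)$, whose local constancy you establish essentially as the paper does (small uniform perturbations do not change the degree away from the boundary image). The one step where you genuinely diverge is the identification of the endpoints. The paper never invokes the morphisms of $\mathcal{D}$ and $\mathcal{B}$ at all: it only uses property (A) on $X$ to get $F_{0,W}$ and $F_{1,W}$ within $r/4$ of $Id$ and of the reflection $B_e$ on $\overline{B}_W(0,r)$, and then uses the boundary-distance criterion for degree to conclude $\deg = +1$ and $-1$ at the endpoints without knowing them exactly. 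You instead use functoriality on the induced-restriction morphism $a_{X,V}$ (legal since $e_1\in V$ makes $V$ invariant under both $I$ and $R$, and $V\in S_0(X)$ makes $S_0(V)$ a lattice with top element $V$) together with property (A) applied to the finite-dimensional objects $(V,I|_V)$ and $(V,R|_V)$, where the net limit is attained at $W=V$, to force $\Phi_V(0)=I|_V$ and $\Phi_V(1)=R|_V$ exactly. This is a clean observation that makes the endpoint degree computation trivial and shows the categorical structure is not mere decoration; the paper's route is more robust in that it would survive even if the functor were not required to respect restriction morphisms, needing only the approximation property (A) and continuity. Both arguments share the same implicit assumptions (that the lattice $S_0(X)$ assigned along the path is fixed and contains a subspace through $e_1$), so neither is weaker on that score.
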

The proof is given in Appendix~\ref{proof:thm:no-go}, and is quite involved, but we give an overview of some of the steps here.
A generalization of
Theorem \ref{thm:no-go}, in the case where the norm topology is replaced by the weak topology, is considered in
Appendix~\ref{Gen: weak topology}.
\begin{figure}
    \centering
    \includegraphics[width=0.8\linewidth]{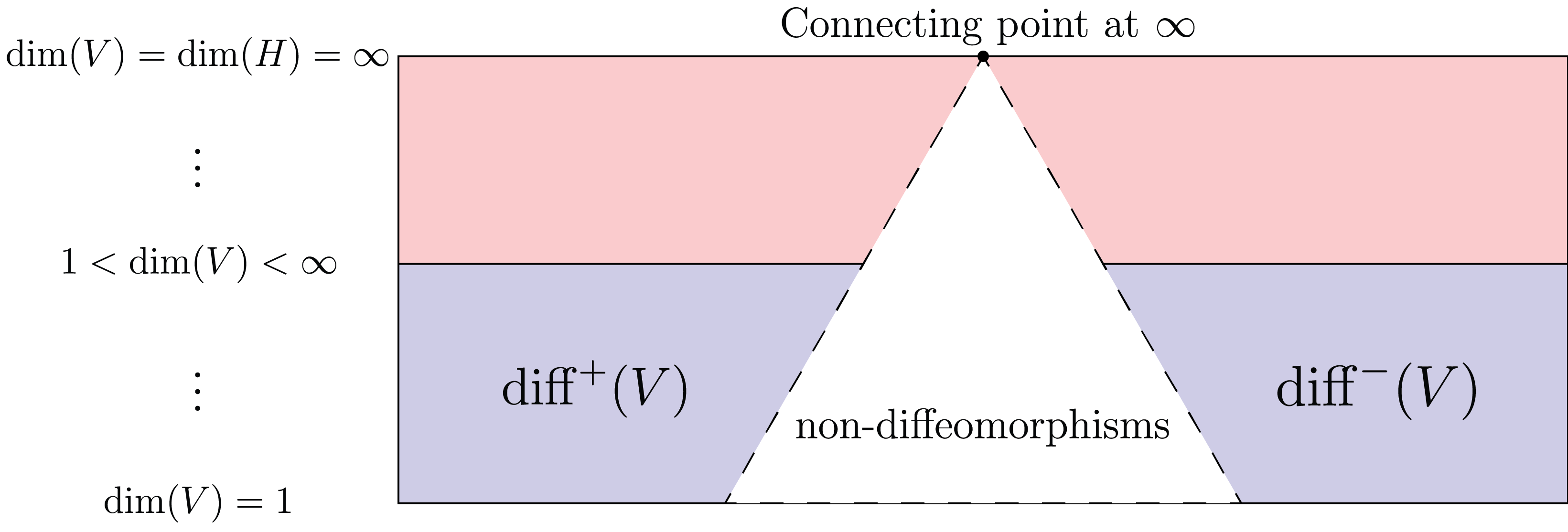}
    \caption{A figure illustrating the proof ideas for Theorem \ref{thm:no-go}. It represents the disconnected components of diffeomorphisms that preserve orientation, notated by $\mathrm{diff}^+$, and reverse orientation, notated, $\mathrm{diff}^-$. The horizontal axis abstractly represents the two disconnected components of $\mathrm{diff}$ for a finite-dimensional vector space $V$. The vertical axis represents the dimension of $V$. Observe how the two components of $\mathrm{diff}$ connect as $\dim(V) \to \infty$, and $V$ becomes a Hilbert space $H$.}
    \label{fig:no-go-theorem-aid}
\end{figure}
A key fact is that for finite dimensional diffeomorphisms the space of smooth embeddings consists of two connected components, one orientation preserving and the other orientation reversing. This is not the case in infinite dimensions, see e.g. \cite{kuiper1965homotopy} and \cite{PutnamPNAS}. For an illustration of this, see Figure~\ref{fig:no-go-theorem-aid}. The proof proceeds by contradiction. First, we consider the action of the approximation functor as it operates on an isotopy (path of diffeomorphisms) that connects two diffeomorphisms. The first has only orientation-preserving discretizations, and the second only orientation-reversing discretizations. We then show that the image of the path under the approximation functor yields a disconnected path, as the discretization `jumps' from the orientation preserving component to the orientation reversing component. This violates continuity. To encode the notions of orientation preserving and orientation reversing that allow for a description of nonlinear discretization theory, we use topological degree theory. This generalizes the familiar notion of orientation that uses the sign of the determinant of the Jacobian matrix.
\subsection{Strongly monotone diffeomorphisms and their approximation on finite-dimensional subspaces}
\label{sec:app-strongly-monotine-diffos 2}
In this section and the next two we show that, although Theorem \ref{thm:no-go} precludes continuous approximation of general diffeomorphisms, stronger constraints on the diffeomorphisms allows one to sidestep the topological obstruction. In this section, in summary, we show that the obstruction to continuous approximation vanishes when the diffeomorphisms in question are assumed to be strongly monotone. Key to our positive result is the following technical result that states that the restriction of the domain and codomain of a strongly monotone diffeomorphism always yields another strongly monotone diffeomorphism.
\begin{lemma}
\label{lem:strongly-monotine-op}
Let $V \subset X$ be a finite-dimensional subspace of $X$, and let $P_{V} :X \to X$ be orthogonal projection onto $V$.
Let $F : X \to X$ be a strongly monotone $C^1$-diffeomorphism. 
Then, $P_{V}F|_V : V \to V$
is strongly monotone, and a $C^1$-diffeomorphism. 
\end{lemma}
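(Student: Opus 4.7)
Write $G := P_V F|_V\colon V \to V$. The plan is to verify each required property in sequence, reducing everything to standard finite-dimensional arguments once strong monotonicity is transferred from $F$ to $G$.

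First I would show $G$ is strongly monotone with the same constant $\alpha$ as $F$. For $v_1, v_2 \in V$, self-adjointness of the orthogonal projection and $v_1 - v_2 \in V$ give
\begin{align*}
\langle G(v_1) - G(v_2), v_1 - v_2 \rangle_X
&= \langle P_V(F(v_1) - F(v_2)), v_1 - v_2 \rangle_X \\
&= \langle F(v_1) - F(v_2), P_V(v_1 - v_2) \rangle_X \\
&= \langle F(v_1) - F(v_2), v_1 - v_2 \rangle_X \ge \alpha \|v_1 - v_2\|_X^2.
\end{align*}
The $C^1$ regularity of $G$ is immediate since $F|_V$ is $C^1$ (restriction of a $C^1$ map to a closed subspace) and $P_V$ is bounded linear.

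Next I would harvest the standard consequences of strong monotonicity of a continuous map on a finite-dimensional inner product space. Cauchy--Schwarz applied to the monotonicity bound gives $\|G(v_1) - G(v_2)\|_X \ge \alpha \|v_1 - v_2\|_X$, so $G$ is injective and coercive (hence proper) on $V$. Differentiating the monotonicity inequality along $v + th$ and letting $t \downarrow 0$ yields $\langle DG(v)h, h\rangle_X \ge \alpha \|h\|_X^2$ for every $v,h \in V$, so $DG(v)$ is positive definite and, since $\dim V < \infty$, invertible. The inverse function theorem then gives that $G$ is a local $C^1$-diffeomorphism at every point.

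Surjectivity is the only nontrivial global step. I would obtain it from either of two equivalent routes: (i) the Browder--Minty theorem (or its elementary finite-dimensional version for continuous, coercive, strongly monotone maps), or (ii) an invariance-of-domain / degree argument, noting that $G(V)$ is simultaneously open (local diffeomorphism), closed (properness plus $V$ complete), and nonempty, so $G(V) = V$. Combined with the pointwise invertibility of $DG$ and bijectivity, the classical global inverse function theorem upgrades $G$ to a $C^1$-diffeomorphism of $V$.

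The main obstacle, such as it is, is keeping the projection and the ambient inner product straight when transferring strong monotonicity from $F$ to $G$; once that is done, the finite-dimensionality of $V$ makes the remaining ingredients (properness, positive definite differential, invariance of domain) entirely standard. No infinite-dimensional subtlety enters because the target of $G$ is $V$ itself, so the global diffeomorphism conclusion follows from the usual finite-dimensional toolkit rather than from any Hilbert-space analogue.
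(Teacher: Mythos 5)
Your proposal is correct and follows essentially the same route as the paper's proof: transfer strong monotonicity to $P_VF|_V$ (with the same constant, via self-adjointness of $P_V$), obtain coercivity and hence bijectivity via Minty--Browder, and then use positive-definiteness of the derivative together with the inverse function theorem in finite dimensions to get a $C^1$ inverse. Your version is in fact slightly more explicit about the projection step and offers invariance of domain as an alternative to Minty--Browder, but the substance is the same.
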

The proof is given in Appendix~\ref{sec:proof:lem:strongly-monotine-op}. 
Lemma \ref{lem:strongly-monotine-op} implies that the discretization functor $\mathcal{A}_{\mathrm{lin}}$ defined in (\ref{linear discretization}) is a well-defined functor from strongly monotone $C^1$-diffeomorphisms of $X$ to $C^1$-diffeomorphisms of $V$.
Note that the discretization functor $\mathcal{A}_{\mathrm{lin}}$ on strongly monotone $C^1$ diffeomorphisms may not be a continuous approximation functor in the strong sense of Definitions~\ref{def:AA} and ~\ref{def:conti-approx-functor}, but it is obviously a continuous approximation functor in the weak sense of Definitions~\ref{def:AA weak} and ~\ref{def:conti-approx-functor weak}. Therefore, we obtain that : 
\begin{proposition}
\label{prop:linear-strong-monotone-weak-conti-disc}
Let $\mathcal{A}_{\mathrm{lin}} $ be the discretization functor that maps $F$ to $P_V F|_V$ for each finite subspace $V \subset X$.
Let $\mathcal{D}_{sm}$ and $\mathcal{B}_{sm}$ be categories where $F\colon X\to X$ and $F_V\colon V \to V$ are strongly monotone $C^1$-diffeomorphisms. 
Then, the functor $\mathcal{A}_{\mathrm{lin}}: \mathcal{D}_{sm} \to \mathcal{B}_{sm}$ satisfies assumption (A') of a weak approximation functor in Definition~\ref{def:AA weak}, and is continuous in the weak sense of Definition~\ref{def:conti-approx-functor weak}.
\end{proposition}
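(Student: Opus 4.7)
The plan is to leverage the fact that, for $F_V := P_V \circ F|_V$, the map $F_V$ is by definition equal to $P_V F$ on $V$, so the weak approximation identity and the weak continuity estimate become almost tautological, with the only nontrivial input being Lemma~\ref{lem:strongly-monotine-op}.

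First, I would use Lemma~\ref{lem:strongly-monotine-op} to confirm that $\mathcal{A}_{\mathrm{lin}}$ actually lands in $\mathcal{B}_{sm}$: for every strongly monotone $C^1$-diffeomorphism $F \colon X \to X$ and every $V \in S_0(X)$, the map $F_V = P_V F|_V \colon V \to V$ is itself a strongly monotone $C^1$-diffeomorphism. The functorial action on the morphisms $A_\phi$ and $A_{X_1,X_2}$ of Definition~\ref{def:cat-of-approx-seq} is then routine: conjugation by a linear isomorphism $\phi$ commutes with $\phi P_V \phi^{-1} = P_{\phi(V)}$, and if $F(X_2) = X_2$ with $V \subset X_2$ then $P_V F|_V$ is unchanged when one first restricts to $X_2$, so the two diagrams required for functoriality commute.

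Next, I would verify property (A') of the weak approximation functor. Because $F_V(x) = P_V F(x)$ holds by construction for every $x \in V$, we have the trivial identity
$$
\sup_{x \in \overline{B_V(0,r)}} \|F_V(x) - P_V(F(x))\|_X = 0
$$
for every $V \in S_0(X)$ and every $r>0$, so the $V \to X$ limit in (A') is automatic. For weak continuity, suppose $F,F^{(j)}\in \mathcal{D}_{sm}$ with the hypothesis (\ref{F continuous 0}). Since $P_V$ is a linear orthogonal projection with $\|P_V\| \le 1$, for every $V \in S_0(X)$ and every $x\in V \cap \overline{B_V(0,r)} \subset \overline{B_X(0,r)}$,
$$
\|F^{(j)}_V(x) - F_V(x)\|_X = \|P_V\bigl(F^{(j)}(x) - F(x)\bigr)\|_X \le \|F^{(j)}(x) - F(x)\|_X.
$$
Taking the supremum over $x$ and passing $j\to\infty$ gives (\ref{FV continuous 0}), which is the required weak continuity.

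The main (and essentially only) obstacle is packaged in Lemma~\ref{lem:strongly-monotine-op}: without it there is no guarantee that the finite-dimensional image $P_V F|_V$ of a strongly monotone diffeomorphism stays strongly monotone and bijective, so that one actually obtains a functor into $\mathcal{B}_{sm}$ rather than into some larger category of $C^1$ maps. Once that closure property is granted, both (A') and weak continuity reduce to the tautology $F_V = P_V F|_V$ together with nonexpansivity of $P_V$, as above.
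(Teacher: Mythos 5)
Your overall architecture is right and matches the paper's (the paper states this proposition as an immediate consequence of Lemma~\ref{lem:strongly-monotine-op} plus the remark that the weak conditions are ``obvious'' for $\mathcal{A}_{\mathrm{lin}}$): Lemma~\ref{lem:strongly-monotine-op} supplies the only nontrivial closure property, and the rest follows from $F_V = P_V F|_V$ and elementary properties of $P_V$. However, you have verified the wrong pair of conditions. Property (A$'$) in Definition~\ref{def:AA weak} is not the statement $\sup_x\|F_V(x)-P_V(F(x))\|_X=0$ (that is the quantity from Definition~\ref{def:epsilon-approx}); it is
\[
\lim_{V\to X}\ \sup_{x\in \overline B_{X}(0,r)\cap V}\ \bra F_V(x)-F(x),y\ket_X=0 \quad\text{for every fixed } y\in X,
\]
which compares $F_V(x)$ with $F(x)$ itself, not with $P_VF(x)$. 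The identity $F_V=P_VF|_V$ alone does not make this automatic: $\|(Id-P_V)F(x)\|_X$ need not tend to $0$ uniformly in $x$ (indeed the paper points out that the strong condition (A) can fail for $\mathcal{A}_{\mathrm{lin}}$). The missing step is the self-adjointness of $P_V$: $\bra F_V(x)-F(x),y\ket_X=\bra F(x),(P_V-Id)y\ket_X$, so $|\bra F_V(x)-F(x),y\ket_X|\le \bigl(\sup_{\|x\|\le r}\|F(x)\|_X\bigr)\,\|(Id-P_V)y\|_X$, and the right-hand side tends to $0$ as $V\to X$ because $\bigcup_{V}V$ is dense in $X$ (one also needs $F$ bounded on bounded balls, which the paper implicitly assumes).

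The same misreading occurs in the continuity step: you assume the \emph{strong} hypothesis \eqref{F continuous 0} and derive the \emph{strong} conclusion \eqref{FV continuous 0}, but Definition~\ref{def:conti-approx-functor weak} requires the implication from the \emph{weak} hypothesis \eqref{F continuous 0 weak} to the \emph{weak} conclusion \eqref{FV continuous 0 weak}. Since the weak hypothesis is strictly weaker, your argument via $\|P_V\|\le 1$ does not cover all pairs $(F^{(j)},F)$ that the definition quantifies over. The repair is again one line using self-adjointness: $\bra F^{(j)}_V(x)-F_V(x),y\ket_X=\bra F^{(j)}(x)-F(x),P_Vy\ket_X$, so the weak conclusion follows by applying the weak hypothesis with the test vector $P_Vy$ in place of $y$. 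With these two substitutions your proof is complete; without them it establishes a different (strong-topology) statement that the proposition deliberately avoids claiming.
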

\subsection{Strongly monotone Nemytskii operators and linear bounded operators and their continuous approximation on finite-dimensional subspaces}
\label{sec:app-strongly-monotine-Nemytskii}
By Proposition~\ref{prop:linear-strong-monotone-weak-conti-disc}, strongly monotone maps can be continuously discretized in the weak sense. 
Thus, we concern under what conditions, the maps, of which generalized neural operator consists, can be strongly monotone. 
In this subsection, we focus on bounded linear operators and Nemytskii operators (layers of neural operator will be discussed in the next subsection). 
The following lemma is obviously given by the definition of a strongly monotone map :
\begin{lemma}
Let $A : X \to X$ be a linear bounded operator and satisfy $\bra Au, u \ket \geq c_0 \|u\|_{X}^2$ for some $c_0>0$. Then, $A : X \to X$ is strongly monotone. 
\end{lemma}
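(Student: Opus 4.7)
The plan is to unpack the definition of strong monotonicity from Definition~\ref{def:strongly-monotone-bilipschitz_1} and reduce it, via linearity of $A$, to the single-argument coercivity hypothesis. Concretely, I would fix arbitrary $x_1, x_2 \in X$, set $u := x_1 - x_2 \in X$, and observe that linearity gives $A(x_1) - A(x_2) = A(x_1 - x_2) = Au$. Substituting this into the inner product one gets
$$
\bra A(x_1) - A(x_2), x_1 - x_2 \ket_X = \bra Au, u \ket_X \ge c_0 \|u\|_X^2 = c_0 \|x_1 - x_2\|_X^2,
$$
which is exactly the strong monotonicity condition with constant $\alpha = c_0$.

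There is essentially no obstacle: the lemma is a one-line translation between the classical coercivity formulation for linear operators and the two-argument definition of strong monotonicity, with the constants identified. The only thing worth flagging is that the hypothesis is automatically independent of the choice of base point because $A$ is linear, so the single-variable estimate $\bra Au, u\ket \ge c_0 \|u\|^2$ already encodes the full two-argument strong monotonicity condition without any loss.

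Taken together with Lemma~\ref{lem:strongly-monotine-op} and Proposition~\ref{prop:linear-strong-monotone-weak-conti-disc}, this means that any coercive bounded linear operator $A_\ell$ appearing in the generalized neural operator of Definition~\ref{def: neural operator layer} admits continuous discretization in the weak sense via $\mathcal A_{\mathrm{lin}}$, so this trivial lemma plays the structural role of certifying the linear building blocks of the composition (\ref{nonlinear-neural-operator}).
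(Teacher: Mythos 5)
Your proof is correct and is exactly the argument the paper has in mind: the paper states this lemma is ``obviously given by the definition of a strongly monotone map,'' and your reduction via linearity, $A(x_1)-A(x_2)=A(x_1-x_2)$, with $\alpha=c_0$, is that one-line argument spelled out.
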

Next, assuming that $X = L^2(D;\mathbb{R})$, we define Nemytskii operator by 
\beq
\label{Nemytskii}
F^\sigma(u)=\sigma\circ u,
\eeq
where $\sigma : \mathbb{R} \to \mathbb{R}$ is continuous. In this case, we can show the following lemma by using \cite[Corollary 3.3]{Showalter}:
\begin{proposition}
\label{prop:Nemytskii-disc-weak}
Assume that $\sigma$ satisfies that $|\sigma(s)|\leq C_1|s|+C_2$
and the derivative of $s\to \sigma(s)$ is defined a.e and 
satisfies $\sigma'(s)\ge \alpha>0$. 
Then, $F^\sigma:L^2(D;\mathbb{R})\to L^2(D;\mathbb{R})$ is strongly monotonous.
\end{proposition}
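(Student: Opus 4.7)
The plan is to reduce the claim to a pointwise inequality and then integrate. Concretely, I aim to establish that, under the hypotheses on $\sigma$, one has
\[
(\sigma(a) - \sigma(b))(a - b) \geq \alpha (a - b)^2 \qquad \text{for all } a, b \in \mathbb{R},
\]
from which strong monotonicity of $F^\sigma$ on $L^2(D;\mathbb{R})$ is immediate by setting $a = u_1(x)$, $b = u_2(x)$ and integrating over $D$.

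First I would verify that $F^\sigma$ actually maps $L^2(D;\mathbb{R})$ into itself: the linear growth bound $|\sigma(s)| \leq C_1|s| + C_2$ gives $\|F^\sigma(u)\|_{L^2}^2 \leq 2C_1^2\|u\|_{L^2}^2 + 2C_2^2|D|$, which is finite provided $D$ has finite measure (which is the implicit setting for the Nemytskii operator). This is the standard well-posedness input invoked in \cite[Corollary 3.3]{Showalter}.

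The core step is the pointwise inequality. I would argue that $\sigma$ is necessarily absolutely continuous on every bounded interval, so that the fundamental theorem of calculus applies and
\[
\sigma(a) - \sigma(b) = \int_0^1 \sigma'\bigl(b + t(a-b)\bigr)\,dt \cdot (a - b),
\]
after which the assumption $\sigma'(\cdot) \geq \alpha$ a.e.\ yields $(\sigma(a)-\sigma(b))(a-b) \geq \alpha(a-b)^2$. This absolute continuity is precisely what the cited \cite[Corollary 3.3]{Showalter} provides: a continuous, a.e.-differentiable function with a.e.-nonnegative derivative that is of linear growth falls in the class of monotone Nemytskii symbols treated there, and the corollary delivers the pointwise monotonicity estimate in the desired $\alpha$-strong form once one applies it to $s \mapsto \sigma(s) - \alpha s$.

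With the pointwise inequality in hand, the conclusion follows directly:
\[
\langle F^\sigma(u_1) - F^\sigma(u_2), u_1 - u_2 \rangle_{L^2} = \int_D \bigl(\sigma(u_1(x))-\sigma(u_2(x))\bigr)\bigl(u_1(x)-u_2(x)\bigr)\,dx \geq \alpha \|u_1 - u_2\|_{L^2}^2.
\]
The main obstacle in the argument is the subtlety that $\sigma'$ is given only almost everywhere; bridging this to the pointwise inequality valid for every $a, b \in \mathbb{R}$ requires the absolute-continuity-style input from Showalter, rather than a naive mean value theorem argument. Once that bridge is in place, the rest is a one-line integration.
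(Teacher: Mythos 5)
Your overall strategy---reduce to the pointwise inequality $(\sigma(a)-\sigma(b))(a-b)\ge \alpha(a-b)^2$ and integrate over $D$---is exactly the intended argument; the paper itself gives no written proof of this proposition beyond the citation to Showalter, and the final integration step and the $L^2\to L^2$ well-definedness via the linear growth bound (using that $D$ is bounded) are fine.

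There is, however, one genuine gap: your claim that ``$\sigma$ is necessarily absolutely continuous on every bounded interval'' does not follow from the stated hypotheses, and no citation can supply it, because the implication is false. Continuity, almost-everywhere differentiability, linear growth, and $\sigma'\ge\alpha>0$ a.e.\ do not force absolute continuity: take $\sigma(s)=s-c(s)$ with $c$ the Cantor function (extended constantly outside $[0,1]$). This $\sigma$ is continuous, has linear growth, and satisfies $\sigma'(s)=1$ a.e., yet $\sigma(1)-\sigma(0)=0$, so the pointwise inequality---and hence strong monotonicity of $F^\sigma$---fails. In other words, the fundamental-theorem-of-calculus step
\[
\sigma(a)-\sigma(b)=\int_0^1 \sigma'\bigl(b+t(a-b)\bigr)\,dt\,(a-b)
\]
is only valid if $\sigma$ is (locally) absolutely continuous, and this must be taken as an additional standing hypothesis rather than derived. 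The fix is to note explicitly that $\sigma$ is assumed locally Lipschitz or locally absolutely continuous (as is the case for all activation functions in practice, and as is implicit in the class of Nemytskii symbols treated in Showalter); with that assumption in place, your FTC computation, the reduction via $s\mapsto\sigma(s)-\alpha s$, and the final integration are all correct.
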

\subsection{Strongly monotone generalized neural operators and their continuous approximation on finite-dimensional subspaces}
\label{sec:app-2-strongly-monotine-diffos}
As seen in Theorem \ref{thm:strong-monotone-preserving} below, strongly monotone layers of neural operators do not suffer from the same topological obstruction to continuous discretization as general diffeomorphisms. We now give sufficient conditions for the layers of a neural operator to be strongly monotone, and show that these conditions imply that those  are diffeomorphisms.
\begin{lemma}\label{lem:diffeo:I+TGT}
All strongly monotone layers of neural operators ($F$) defined by \eqref{NO definition} are diffeomorphisms.
\end{lemma}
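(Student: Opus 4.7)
The plan is to verify the three standard ingredients for a $C^1$-diffeomorphism on a Hilbert space: $F$ is $C^1$ and globally bijective with $C^1$ inverse, by (i) injectivity and quantitative lower Lipschitz bound from strong monotonicity, (ii) surjectivity from a Browder--Minty style argument, and (iii) invertibility of the Fr\'echet derivative $DF(x)$ everywhere, so that the inverse function theorem in Banach spaces upgrades the global bijection to a $C^1$-diffeomorphism.

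First, I would note that $F$ is in $C^1(X)$ by construction: $G \in C^1(X)$ and $T_1, T_2$ are bounded linear, so $F(x) = x + T_2 G(T_1 x)$ and $DF(x) = I + T_2\, DG(T_1 x)\, T_1$ by the chain rule. Strong monotonicity with constant $\alpha > 0$ immediately gives injectivity and the lower Lipschitz bound
\begin{equation*}
   \|F(x_1) - F(x_2)\|_X \,\ge\, \alpha\, \|x_1 - x_2\|_X
\end{equation*}
by Cauchy--Schwarz, so in particular $F$ has closed range.

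Next, I would establish surjectivity. Since $F$ is continuous (hence hemicontinuous) and strongly monotone on the Hilbert space $X$, $F$ is coercive: $\langle F(x) - F(0), x\rangle \ge \alpha \|x\|^2$ implies $\|F(x)\|/\|x\| \to \infty$. By the Browder--Minty theorem (see, e.g., Bauschke--Combettes), any continuous, coercive, strongly monotone operator on a real Hilbert space is a bijection onto $X$, with Lipschitz inverse of constant $1/\alpha$. Alternatively, one can use Banach's fixed point theorem on $x \mapsto x - \tau(F(x) - y)$ for small $\tau > 0$, which is a strict contraction precisely because strong monotonicity plus the global Lipschitz bound on $F$ (both inherited from $G \in C^1$ and boundedness of $T_1, T_2$) make $\|I - \tau F\|$ a contraction for some $\tau$. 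Either route yields that $F \colon X \to X$ is a homeomorphism.

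Third, I would show $DF(x)$ is invertible for every $x \in X$. From strong monotonicity applied to $x$ and $x + th$ and letting $t \to 0^+$, one obtains
\begin{equation*}
\langle DF(x) h,\, h\rangle_X \,\ge\, \alpha\, \|h\|_X^2
\quad \text{for all } x,h \in X.
\end{equation*}
The bounded linear operator $DF(x)$ on the Hilbert space $X$ is therefore coercive and, by Lax--Milgram, a bounded linear bijection with bounded inverse. Then the Banach-space inverse function theorem applies at every $x$: $F$ is a local $C^1$-diffeomorphism everywhere. Combined with the global bijectivity from the previous step, this yields that $F^{-1}$ is globally $C^1$, so $F$ is a $C^1$-diffeomorphism.

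The main (mildly) delicate step is surjectivity, since compact $T_1, T_2$ preclude shortcuts like ``$F$ is a small perturbation of the identity''; the strong monotonicity assumption is exactly what replaces a smallness condition and drives the Browder--Minty / fixed-point argument. Everything else is routine chain rule and Lax--Milgram.
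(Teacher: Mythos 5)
Your proof is correct and follows essentially the same skeleton as the paper's: coercivity from strong monotonicity, the Browder--Minty theorem for surjectivity (hence homeomorphism, since the inverse is $\alpha^{-1}$-Lipschitz), strong monotonicity of the Fr\'echet derivative $\langle DF(x)h,h\rangle \ge \alpha\|h\|^2$ obtained by a difference-quotient limit, and the Banach-space inverse function theorem to upgrade to a $C^1$-diffeomorphism. The one place where you genuinely diverge is the invertibility of $DF(x)$: the paper observes that $DF(x)=I+T_2\,DG(T_1x)\,T_1$ is a compact perturbation of the identity, hence Fredholm of index zero, so injectivity (from coercivity) already forces bijectivity; you instead invoke Lax--Milgram on the bounded coercive bilinear form $(u,v)\mapsto\langle DF(x)u,v\rangle$. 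Your route is slightly more general --- it never uses compactness of $T_1,T_2$ at this step, so it would apply to any strongly monotone $C^1$ map, not just those of the neural-operator form --- while the paper's Fredholm argument emphasizes the structural role of the skip connection. Your alternative fixed-point proof of surjectivity via $x\mapsto x-\tau(F(x)-y)$ is also valid here, but only because the paper's standing convention makes $G$ (hence $F$) globally Lipschitz; it is worth flagging that this contraction estimate needs the global Lipschitz bound on $F$ in addition to strong monotonicity.
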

Also, the following theorem is proven in Appendix~\ref{sec:proof-1-strongly-monotine-op}.
\begin{theorem}
\label{thm:strong-monotone-preserving}
{Let $\mathcal{A}_{\mathrm{lin}} $ be the discretization functor that maps $F$ to $P_V F|_V$ for each finite subspace $V \subset X$.
Let $\mathcal{D}_{smn}$ and $\mathcal{B}_{smn}$ be categories where $F\colon X\to X$ and $F_V\colon V \to V$ are strongly monotone $C^1$-functions of the form \eqref{NO definition}.
Then,
the functor $\mathcal{A}_{\mathrm{lin}}: \mathcal{D}_{smn} \to \mathcal{B}_{smn}$ satisfies assumption (A), and it is continuous in the sense of Definition~\ref{def:conti-approx-functor}.
}
\end{theorem}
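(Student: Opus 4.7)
The plan is to split the proof into verifying (i) that $\mathcal{A}_{\mathrm{lin}}$ maps $\mathcal{D}_{smn}$ into $\mathcal{B}_{smn}$, (ii) that property (A) holds, and (iii) that the functor is continuous in the sense of Definition~\ref{def:conti-approx-functor}. For (i), take $F(x) = x + T_2 G(T_1 x) \in \mathcal{D}_{smn}$. Since $P_V x = x$ for $x \in V$, we have
\[
F_V(x) = P_V F(x) = x + (P_V T_2)\, G(T_1 x), \qquad x \in V,
\]
which is of the form \eqref{NO definition} (using the generalization noted in the footnote of Definition~\ref{def: neural operator layer}, with $T_1|_V$ compact and $P_V T_2$ compact). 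Lemma~\ref{lem:strongly-monotine-op} yields strong monotonicity of $F_V$, and Lemma~\ref{lem:diffeo:I+TGT} then provides that $F_V$ is a $C^1$-diffeomorphism, so $F_V \in \mathcal{B}_{smn}$.

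For (ii), fix $r > 0$ and $x \in V \cap \overline{B_X(0,r)}$. Since $(I - P_V)x = 0$,
\[
F_V(x) - F(x) = -(I - P_V)\, T_2\, G(T_1 x).
\]
The essential observation is that the set $K = T_2\, G\bigl(T_1\,\overline{B_X(0,r)}\bigr)$ is relatively compact in $X$: compactness of $T_1$ sends $\overline{B_X(0,r)}$ to a precompact set; $G$, being $C^1$ and hence continuous, sends this set to another precompact set; and continuity of $T_2$ preserves precompactness. A standard approximation argument---cover $\overline{K}$ by finitely many balls of radius $\varepsilon/2$ centered at $y_1,\ldots,y_n$, then use density of $\bigcup_{V \in S_0(X)} V$ together with the lattice property to produce $V_0 \in S_0(X)$ with $\|(I - P_V) y_i\|_X < \varepsilon/2$ for all $i$ whenever $V \supset V_0$---shows $\sup_{y \in K}\|(I - P_V)\, y\|_X \to 0$ as $V \to X$. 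Taking the supremum over $x \in V \cap \overline{B_X(0,r)}$ then establishes property (A).

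For (iii), contractivity of the orthogonal projection gives, for any $x \in V \cap \overline{B_V(0,r)}$,
\[
\|F_V^{(j)}(x) - F_V(x)\|_X = \|P_V(F^{(j)}(x) - F(x))\|_X \le \|F^{(j)}(x) - F(x)\|_X,
\]
so taking the supremum and invoking hypothesis \eqref{F continuous 0} immediately delivers \eqref{FV continuous 0}. The only substantive step in the whole argument is the compactness reasoning behind (ii): it is the compactness of $T_1$ (combined with continuity of $G$ and $T_2$) that converts bounded sets in $X$ into precompact ones, which is precisely the structural feature of Definition~\ref{def: neural operator layer} that upgrades the weak continuity of Proposition~\ref{prop:linear-strong-monotone-weak-conti-disc} to the strong continuity claimed here.
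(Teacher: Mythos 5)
Your proposal is correct and follows essentially the same route as the paper's proof: well-definedness via Lemma~\ref{lem:strongly-monotine-op}, property (A) via compactness of $T_2\circ G\circ T_1$ on bounded sets, and continuity via contractivity of $P_V$. The only difference is that you spell out the finite $\varepsilon/2$-net argument behind $\sup_{y\in K}\|(I-P_V)y\|_X\to 0$, which the paper leaves implicit.
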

The functor defined in Theorem \ref{thm:strong-monotone-preserving} does not suffer from the same topological obstruction as functors for general diffeomorphisms, shown in the no-go Theorem \ref{thm:no-go}.
This is because
when $F_V=P_VF|_V$ is strongly monotone, its derivative $D|_xF_V\colon V\to V$ is a strongly monotone matrix 
at all points $x\in V$. Therefore it is strictly positive definite (see \cite[Prop. 12.3]{Rockafellar}) and the determinant $\hbox{det}(D|_xF_V)$ is strictly positive.
Due to this, the orientation of the finite-dimensional approximations never switch signs, and the key technique used in the proof of the no-go Theorem \ref{thm:no-go} does not apply.
A straightforward condition to guarantee strong monotonicity of a neural operator layer is given in
\begin{lemma}
\label{lem:I+TGT-to-be-sm}
Let $F : X \to X$ be a layer of neural operator that is of the form $F(u)= u+T_2G(T_1u)$, where
$T_j:X\to X$, $j=1,2$ are compact operators and $G:X\to X$ is a $C^1$-smooth map. 
Assume that Fr\'echet derivative $DG|_x$ of $G$ at $x$ satisfies the following for all $x \in X$,
$$
\|DG|_x\|_{X\to X}\le \tfrac 12\|T_1\|_{X\to X}^{-1}\|T_2\|_{X\to X}^{-1}.
$$
Then, $F\colon X\to X$ is strongly monotone.
\end{lemma}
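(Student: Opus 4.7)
The plan is to prove strong monotonicity directly from the definition by computing the inner product $\langle F(x_1) - F(x_2), x_1 - x_2\rangle$, separating the identity part from the nonlinear residual, and showing that the residual is controlled by at most half of $\|x_1 - x_2\|_X^2$ thanks to the norm bound on $DG$.

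First I would write
\[
  F(x_1) - F(x_2) = (x_1 - x_2) + T_2\bigl(G(T_1 x_1) - G(T_1 x_2)\bigr),
\]
and therefore
\[
  \langle F(x_1)-F(x_2), x_1-x_2 \rangle_X = \|x_1-x_2\|_X^2 + \langle T_2(G(T_1x_1)-G(T_1x_2)), x_1-x_2\rangle_X.
\]
The first term is exactly what we want to keep; the second is the potentially bad cross term. By Cauchy--Schwarz, its absolute value is bounded by $\|T_2(G(T_1 x_1) - G(T_1 x_2))\|_X \,\|x_1-x_2\|_X$, so the problem reduces to estimating $\|T_2(G(T_1 x_1) - G(T_1 x_2))\|_X$.

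Next I would use the $C^1$-smoothness of $G$ together with the derivative bound to obtain a global Lipschitz estimate on $G$. The standard fundamental theorem of calculus in Hilbert space yields
\[
  G(T_1 x_1) - G(T_1 x_2) = \int_0^1 DG\big|_{T_1 x_2 + t (T_1 x_1 - T_1 x_2)}(T_1 x_1 - T_1 x_2)\,dt,
\]
whence $\|G(T_1 x_1) - G(T_1 x_2)\|_X \le \sup_x \|DG|_x\|_{X\to X} \cdot \|T_1\|_{X\to X}\,\|x_1-x_2\|_X$. Applying $T_2$ and using the hypothesis $\|DG|_x\|_{X\to X} \le \tfrac12 \|T_1\|^{-1}\|T_2\|^{-1}$, this yields $\|T_2(G(T_1 x_1) - G(T_1 x_2))\|_X \le \tfrac12 \|x_1 - x_2\|_X$.

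Combining the two estimates gives
\[
  \langle F(x_1)-F(x_2), x_1-x_2\rangle_X \;\ge\; \|x_1-x_2\|_X^2 - \tfrac12 \|x_1-x_2\|_X^2 \;=\; \tfrac12 \|x_1-x_2\|_X^2,
\]
so strong monotonicity holds with $\alpha = 1/2$. There is no real obstacle here; the only point worth being careful about is that the derivative bound is assumed at every $x \in X$, which is precisely what legitimizes passing from $DG$ to a global Lipschitz constant through the mean-value-in-integral-form argument. The factor $\tfrac12$ in the hypothesis is what buys us a strictly positive $\alpha$, rather than merely monotonicity.
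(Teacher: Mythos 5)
Your proof is correct and follows essentially the same route as the paper's: both arguments reduce to the observation that the hypothesis makes the nonlinear part $T_2\circ G\circ T_1$ a Lipschitz-$\tfrac12$ perturbation of the identity (the paper phrases this as $\|DF|_x - I\|\le\tfrac12$ and applies the mean value theorem to the inner product, while you apply the fundamental theorem of calculus to $G$ and then Cauchy--Schwarz), and both land on $\alpha=\tfrac12$. If anything, your write-up is the more carefully stated of the two.
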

See Appendixes~\ref{proof-lem:I+TGT-to-be-sm}
and \ref{proof-lem:diffeo:I+TGT} for the proofs.
\subsection{Bilipschitz neural operators are conditionally strongly monotone diffeomorphisms}
\label{sec:loc-invertibilty-of-nos}
Now we show an analogous result to Theorem \ref{thm:strong-monotone-preserving}, but applied to bilipschitz neural operators. Moreover, we will show that 
all neural operator $F\colon X\to X$ that are bilipschitz  
admit approximations that 
can be locally inverted using iteration for each point in their range
using an iteration.
\begin{theorem}\label{thm: finite product of operators}
    Let $X$ be a Hilbert space. Then there is $e\in X$, $\|e\|_X=1$ such that the following is true:
Let $F\colon X\to X$ be a layer of a bilipschitz neural operator.
    Then for all $r_1>0$ and $\epsilon>0$ there are a linear invertible map $A_0:X\to X$, that is either the identity map or a reflection operator\footnote{Note that we can write the reflection operator across the hyperplane $\{e\}^\perp$, that is, the operator
    $x\to x-2\bra x,e\cet_X e$ as
    a diagonal
operator  $\hbox{diag}(-1,1,1,\dots)$ in a suitable orthogonal basis of $X$.} $x\to x-2\bra x,e\cet_X e$
    , and  strongly monotone  functions $H_k$ that are also layers of neural operators such that 
    $$
    \label{eqn:g-k-def}
    H_k:X\to X,\quad H_k(x)=x+B_k(x),\quad k=1,2,\dots, J,
    $$
    where $B_k : X \to X$ is a compact mapping and satisfies Lip$(B_k)<\epsilon$ 
    and 
    \begin{align}
    &F(x)=H_{J}\circ \cdots \circ H_2\circ H_1\circ A_0(x),\quad\hbox{for all } x\in B_{X}(0,r_1).
    \end{align}
Moreover, if $F \in C^2(X, X)$, then 
$
J= \mathcal{O}(\epsilon^{-2}).
$
\end{theorem}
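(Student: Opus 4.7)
The plan is to build an isotopy $\{F_t\}_{t \in [0,1]}$ of bilipschitz neural operator layers connecting $F_0 = A_0$ to $F_1 = F$ on $\overline{B_X(0, r_1)}$, and then extract the factors $H_k := F_{t_k}\circ F_{t_{k-1}}^{-1}$ for a sufficiently fine partition $0 = t_0 < t_1 < \cdots < t_J = 1$. Telescoping immediately yields $F = H_J \circ \cdots \circ H_1 \circ A_0$ on $B_X(0,r_1)$, so the task reduces to choosing $A_0$ and $\{F_t\}$ in a way that makes every $H_k$ a strongly monotone layer of the required form with $\text{Lip}(B_k) < \epsilon$.

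First, I would select $A_0$ to account for the orientation of $F$. Since $F(x) = x + T_2 G(T_1 x)$ with $T_1, T_2$ compact, $F = I + K$ with $K$ compact. Because $F$ is bijective, the Leray--Schauder degree of $F$ is a well-defined element of $\{+1, -1\}$; equivalently, for all sufficiently large $V \in S_0(X)$, the sign of $\det(D(P_V F|_V))$ is constant. Set $A_0 = I$ when this sign is $+1$ and $A_0 = I - 2\bra \cdot, e\cet_X e$ (a reflection, hence bilipschitz with $\|A_0\|_{X\to X} = 1$) when it is $-1$, for some fixed unit vector $e \in X$. Then $F\circ A_0^{-1}$ is orientation-preserving, which is precisely the topological prerequisite for a continuous homotopy back to the identity through bilipschitz maps, circumventing the obstruction of Theorem~\ref{thm:no-go}.

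Next, I would take the straight-line interpolation $F_t := A_0 + t(F - A_0)$. Each $F_t$ has the form $A_0 + (\text{compact})$, so it is a layer of a neural operator of the required form; moreover, by the choice of $A_0$ above, one can verify that $F_t$ remains uniformly bilipschitz on $\overline{B_X(0, r_1)}$ throughout $t\in[0,1]$. (If straight-line interpolation cannot be shown to preserve uniform bilipschitz bounds in one step, one inserts finitely many intermediate strongly monotone correctors via Lemma~\ref{lem:I+TGT-to-be-sm} and repeats the argument on each sub-isotopy.) A direct computation gives
\[
H_k(y) \;=\; y + (t_k - t_{k-1})\,(F - A_0)\bigl(F_{t_{k-1}}^{-1}(y)\bigr),
\]
so $H_k = I + B_k$ with $B_k$ compact and
\[
\text{Lip}(B_k) \;\leq\; (t_k - t_{k-1})\cdot \text{Lip}(F - A_0)\cdot \text{Lip}\bigl(F_{t_{k-1}}^{-1}\bigr).
\]
Choosing the spacing proportional to $\epsilon$ forces $\text{Lip}(B_k) < \epsilon$, and a short inner-product computation then shows that any $I + B_k$ with $\text{Lip}(B_k) < 1$ is automatically strongly monotone with constant $1 - \text{Lip}(B_k) > 0$.

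The main obstacle will be establishing the uniform bilipschitz property of $\{F_t\}$ on $\overline{B_X(0, r_1)}$ throughout the isotopy without passing through a singular intermediate map; this is precisely where the choice of $A_0$, together with the possible insertion of intermediate correctors, is essential. Finally, for the quantitative bound $J = \mathcal{O}(\epsilon^{-2})$ under the $C^2$ assumption on $F$, the Lipschitz continuity of $DF$ on bounded sets afforded by $C^2$ regularity yields a sharper second-order estimate for the incremental perturbation $B_k$ along the homotopy; balancing the resulting first- and higher-order terms against the subdivision size produces the claimed $\epsilon^{-2}$ dependence.
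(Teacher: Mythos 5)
Your overall architecture (telescoping an isotopy from $A_0$ to $F$ into factors $H_k=F_{t_k}\circ F_{t_{k-1}}^{-1}$, choosing $A_0$ by orientation/degree, and noting that $\mathrm{Lip}(B_k)<1$ forces strong monotonicity) matches the paper's, and your telescoping identity and Lipschitz bound for $H_k$ are correct \emph{conditional on} each $F_t$ being invertible with uniformly controlled $\mathrm{Lip}(F_t^{-1})$. But that conditional is exactly where the proof lives, and your straight-line interpolation $F_t=A_0+t(F-A_0)$ does not deliver it. Even for linear $F=Id+K$ with $K$ compact and the correct choice $A_0=Id$, the convex combination can degenerate: if the finite-dimensional reduction has $Df|_0=\mathrm{diag}(-1,-1,1,\dots,1)$ (determinant $+1$, so orientation-preserving and $A_0=Id$), then $(1-t)Id+t\,Df|_0$ is singular at $t=\tfrac12$. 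So the homotopy leaves the class of bilipschitz maps, $F_{t_{k-1}}^{-1}$ does not exist, and the parenthetical fix of ``inserting finitely many intermediate strongly monotone correctors'' is not a repair one can defer — constructing a path that stays uniformly bilipschitz \emph{is} the theorem. The paper's proof replaces the straight line by two genuinely different paths: an Alexander-type scaling homotopy $f_t(x)=t^{-1}(f(tx)-f(0))+tf(0)$, which provably preserves the bilipschitz constants $c_0,c_1$ for every $t$ and connects $f$ to its linearization $Df|_0$; and then an explicit path in $GL(W)$ from $Df|_0$ to $Id$ or a reflection built from the polar decomposition $Df|_0=PU$ (via $P^{1-t}U$) and the block-rotation structure of $O(n)$. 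The $\mathcal O(\epsilon^{-2})$ count then comes from $t_1\sim\epsilon$ together with step size $\sim\epsilon\, t_1$ in the scaling homotopy, not from a second-order Taylor balance along a straight line.

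A secondary but non-cosmetic omission: the theorem requires each $H_k$ to be a \emph{layer of a neural operator}, i.e.\ of the form $x+T_2G(T_1x)$ with compact $T_i$, not merely $Id$ plus a compact map. The paper secures this by first replacing $F$ with $F^W=Id+P_W\circ T_2\circ G\circ T_1\circ P_W$ (identity on $W^\perp$), at the cost of an extra factor $F\circ(F^W)^{-1}=Id+\tilde B$ with $\mathrm{Lip}(\tilde B)<\epsilon$ whose compact-factorized structure is verified explicitly; all subsequent factors then act nontrivially only on the finite-dimensional $W$ and inherit the layer form. Your proposal asserts compactness of $B_k$ but never establishes the required factorization through compact operators, nor the finite-dimensional reduction that makes the determinant/degree and $C^2$ arguments legitimate.
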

The proof of Theorem~\ref{thm: finite product of operators} is in Appendix \ref{sec:proof-of-finite-product-of-operators}. 
Theorem \ref{thm: finite product of operators} shows that we may always decompose a bilipschitz neural operator into the composition of strongly monotone neural operator layers $H_j$ and a reflection operator $A_0$.
Each $H_j$ can be discretized using the continuous functor $\mathcal{A}_{\mathrm{lin}}$ from Theorem~\ref{thm:strong-monotone-preserving}.
If we consider the discretization (via the construction in Definition \ref{def:cat-of-approx-seq}) using a collection of subsets $S_0(X)\subset S(X)$  such that all $V\in S_0(X)$ satisfy $e\in V$,
then the operator $A_0$
can be discretized by 
$A_{0,V}=P_V\circ A_0|_V$.
These mean that if we write a bilipschitz neural operator as a sufficiently deep neural operator where each layer is either of the form $Id+B_j$, where $\hbox{Lip}(B_j)<1$, or a reflection operator $A_0$. In either case, we may use linear discretization to approximate each layer. So, we may discretize $F$ in a ball $B_X(0,R)$ by discretizing each layer $H_j$ and $A_1$ where $F = H_J\circ\dots\circ H_1\circ A_1$.
We observe that the number of layers, $J$, depends on $R$. 
\medskip
We have observed that operators of the form identity plus a compact term are critical for continuous discretization. This insight motivates the introduction of residual networks as approximators within the framework of finite-rank neural operators. 
In what follows, we assume that $X$ is a separable Hilbert space, with an orthonormal basis $\varphi=\{\varphi_n\}_{n \in \mathbb{N}}$. 
For $N \in \mathbb{N}$, we define $E_{N} : X \to \mathbb{R}^{N}$ and $D_{N} : \mathbb{R}^{N} \to X$ by
$
E_{N}u := \left( \bra u, \varphi_1 \ket_{X},..., \bra u, \varphi_N \ket_{X} \right) \in \mathbb{R}^{N}. \quad D_{N} \alpha := \sum_{n  \le  N} \alpha_n \varphi_n.
$
We note that $P_{V_N} = D_{N}E_{N}$, where $P_{V_N} : X \to X$ is the projection onto $V_N:= \mathrm{span}\{\varphi_n\}_{n \leq N}$.
Using $E_{N}$, $D_{N}$, we define the class of residual networks in the separable Hilbert space, with $T, N \in \mathbb{N}$ and activation function $\sigma$, as
\begin{align}
&
\mathcal{R}_{T, N, \varphi, \sigma}(X)
:= \Bigl\{ 
G:X \to X 
:
G= \bigcirc_{t=1}^{T} (Id_{X} + D_{N} \circ NN_{t} \circ E_{N} ),
\nonumber
\\
& 
NN_{t} : \mathbb{R}^N \to \mathbb{R}^N \text{are neural networks with activation function $\sigma$ } 
(t=1,...,T )
\Bigr\}.
\label{definition-Resnet-Hilbert}
\end{align}
The following theorem proves a universality result for each of the layers $G$, allowing us to obtain a general universality result for the entire network. The statement of the theorem requires the careful construction of a neural operator-representable function $\Phi$. Giving a full description of $\Phi$ involves introducing a lot of technical notation, and so the presentation here in the main text leaves the details of the construction of $\Phi$ abridged. For the full statement of the theorem and definition of the notation, see Section~\ref{sec:proof-universality-bilipschitz-nos}. Intuitively, $\Phi$ is the `wrapping' of a fixed-point process in a neural operator. 
\begin{theorem}
    \label{thm:universality-bilipschitz-nos}
    Let $R > 0$, and let $F\colon X\to X$ be a layer of a bilipschitz neural operator, as in Defintion~\ref{def:strongly-monotone-bilipschitz}. 
    Let $\sigma$ be the Rectified Cubic Unit (ReCU) function defined by $\sigma(x):= \max\{0, x\}^3$. 
    Then, for any $\epsilon \in (0,1)$,
    there are $T, N \in \mathbb{N}$ and $G \in \mathcal{R}_{T, N, \varphi, \sigma}(X)$ that has the form
    \[
G= (Id_{X} + D_{N} \circ NN_{T} \circ E_{N} ) \circ
\cdots \circ (Id_{X} + D_{N} \circ NN_{1} \circ E_{N} ),
\]
such that each map $(Id_{X} + D_{N} \circ NN_{t} \circ E_{N})$ is strongly monotone $C^1$-diffeomorphisms on some ball and
    \[
    \sup_{x\in \overline B_X(0, R)} \|F(x)-G \circ A(x)\|_{X} \leq \epsilon, 
    \]
    where $A\colon X\to X$ is a linear invertible map that is either the identity map or a reflection operator $x\to x-2\bra x,e\cet_X e$ with some unit vector $e\in X$. Further, $G \circ A : B_X(0,R) \to G\circ A(B_X(0,R))$ is invertible, and there is some
    neural operator $\Phi  :G\circ A(B_X(0,R)) \to A(B_X(0,R))$ so that
    $
        \left(G\circ A|_{B_X(0,R)} \right)^{-1}
        = A^{-1} \circ \Phi.
        $
\end{theorem}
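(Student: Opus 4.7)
The plan is to combine Theorem~\ref{thm: finite product of operators} with a finite-dimensional universal-approximation step using ReCU networks, and then to realize the inverse through a Picard iteration that is itself packaged as a neural operator.

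First, I would apply Theorem~\ref{thm: finite product of operators} with a carefully chosen parameter $\epsilon' = \epsilon'(\epsilon, R)$ to decompose $F$ on $\overline{B}_X(0,R)$ as
\[
F(x) = H_J \circ \cdots \circ H_1 \circ A(x), \qquad H_k(x) = x + B_k(x),
\]
where $A$ is either the identity or the advertised reflection, each $B_k : X \to X$ is compact with $\mathrm{Lip}(B_k) < \epsilon'$, and each $H_k$ is a strongly monotone layer. I would fix a radius $R' > R(1+\epsilon')^{J}$ large enough that all iterates lie in $\overline{B}_X(0, R')$ for $x \in \overline{B}_X(0,R)$.

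Next, I would approximate each compact $B_k$ on $\overline{B}_X(0, R')$ by a finite-rank map of the form $D_N \circ NN_t \circ E_N$. Because $B_k$ is compact, its image lies (up to arbitrarily small error) in a finite-dimensional subspace, so for $N$ large the truncation error $\|B_k - P_{V_N} \circ B_k\|$ is uniformly small on $\overline{B}_X(0, R')$. On the compact set $E_N(\overline{B}_X(0, R'))$, a ReCU network $NN_t$ can approximate $E_N \circ B_k$ to arbitrary $C^1$-accuracy (ReCU is $C^1$, so compositions remain $C^1$). Choosing the $C^1$-error small enough guarantees $\mathrm{Lip}(D_N \circ NN_t \circ E_N) < 1/2$, which, via Lemma~\ref{lem:I+TGT-to-be-sm}, ensures each approximating layer $Id_X + D_N \circ NN_t \circ E_N$ is strongly monotone on $X$. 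A standard telescoping argument over the $J$ layers, using that each $H_k$ has Lipschitz constant at most $1+\epsilon'$, then propagates the per-layer approximation errors into the overall uniform bound $\|F - G \circ A\|_\infty \le \epsilon$ on $\overline{B}_X(0,R)$.

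For invertibility and the construction of $\Phi$, I would use the Banach fixed-point theorem. Writing a single layer as $Id_X + h$ with $\mathrm{Lip}(h) < 1/2$, the inverse of $y = x + h(x)$ on the relevant ball is the limit of the Picard iteration $x_{n+1} = y - h(x_n)$, with geometric convergence. I would truncate this iteration after sufficiently many steps $M$ (depending only on the contraction constant and the target accuracy) and realize the finite iterate as a composition of neural operator layers, each of the form $(x, y) \mapsto (y - h(x), y)$ or, after identification, as a generalized neural operator layer in the sense of Definition~\ref{def: neural operator layer} (the map $h = D_N \circ NN_t \circ E_N$ factors through finite rank, so the linear pieces $T_1, T_2$ can be taken as finite-rank, hence compact). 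Composing these per-layer inverse modules in reverse order $J, J-1, \ldots, 1$ yields a neural operator $\Phi$ such that $(G \circ A)^{-1} = A^{-1} \circ \Phi$ on the target ball $G \circ A(\overline{B}_X(0,R))$, exactly as claimed.

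The main obstacle I anticipate is bookkeeping: tracking how the radii, Lipschitz constants, and approximation tolerances at each of the $J$ layers must be coupled so that (a) strong monotonicity of every approximating layer is preserved, (b) all iterates of the forward and inverse compositions remain inside the sets where the approximations are valid, and (c) the cumulative error after $J$ layers still meets $\epsilon$. Producing $\Phi$ as a genuine generalized neural operator — rather than merely as an abstract fixed-point map — also requires verifying that the Picard iteration can be assembled from layers satisfying Definition~\ref{def: neural operator layer}, which is where the finite-rank structure of $D_N \circ NN_t \circ E_N$ is essential.
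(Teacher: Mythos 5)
Your proposal follows essentially the same route as the paper: decompose $F$ via Theorem~\ref{thm: finite product of operators}, replace each compact perturbation $H_k$ by a finite-rank $D_N\circ NN_t\circ E_N$ using compactness plus a $C^1$ (Sobolev-norm) neural-network approximation so that the Lipschitz constant stays below $1$ and each layer remains strongly monotone, telescope the errors, and invert each layer by the Banach fixed-point theorem packaged as a composition of neural-operator layers $(x,y)\mapsto(y+h(x),y)$. The one point to correct is the inverse: truncating the Picard iteration after $M$ steps yields only an \emph{approximate} inverse, whereas the theorem asserts the exact identity $\left(G\circ A|_{B_X(0,R)}\right)^{-1}=A^{-1}\circ\Phi$; the paper therefore defines each $L_t$ as the full limit $L_t(y)=\lim_{n\to\infty}\pi_1\circ\bigl(\bigcirc_{h=1}^{n}\phi_t\bigr)\circ e_1(y)$ of the iterated layers, so $\Phi$ is a limit of compositions of finite-rank neural operators rather than a finite truncation.
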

The proof is given in~Section \ref{sec:proof-universality-bilipschitz-nos}. 
Neural operator $\Phi$ becomes a better approximation of the inverse operator when it becomes deeper. Theorem \ref{thm:universality-bilipschitz-nos} means that neural operators are an operator algebra of nonlinear operators that are closed in composition and when the inverse of
a neural operator exists, the inverse operator can be locally approximated by neural operators.
In the case when the separable Hilbert space $X$ is the real-valued $L^2$-function space $L^2(D;\mathbb{R})$, residual networks in the separable Hilbert space can be represented as residual neural operators defined by (\ref{def:RNO}). See Lemma~\ref{lem:resnet-RNO} for details. Then, we obtain the following 
\begin{corollary}
\label{cor:RNO-universal}
Let $D \subset \mathbb{R}^d$ be a bounded domain, 
and let $\varphi=\{\varphi_n\}_{n \in \mathbb{N}}$ be an orthonormal basis in $L^2(D;\mathbb{R})$.
Assume that the orthonormal basis $\varphi$ include the constant function.
Let $\mathcal{RNO}_{T, N, \varphi, \sigma}(L^2(D; \mathbb{R}))$ be the class of residual neural operators defined in (\ref{def:RNO}). 
Then, the statement replacing $X$ with $L^2(D;\mathbb{R})$ and  $G \in \mathcal{R}_{T, N, \varphi, ReLU}(X)$ with $G \in \mathcal{RNO}_{T, N, \varphi, ReLU}(L^2(D;\mathbb{R}))$ in Theorem~\ref{thm:universality-bilipschitz-nos} holds.
\end{corollary}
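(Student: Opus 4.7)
The plan is to deduce the corollary directly from Theorem~\ref{thm:universality-bilipschitz-nos} by invoking the structural identification provided in Lemma~\ref{lem:resnet-RNO}. First, I would apply Theorem~\ref{thm:universality-bilipschitz-nos} with the concrete Hilbert space $X = L^2(D;\mathbb{R})$ and the given orthonormal basis $\varphi$. This yields integers $T, N \in \mathbb{N}$, a map $G \in \mathcal{R}_{T,N,\varphi,\sigma}(L^2(D;\mathbb{R}))$, a linear invertible map $A$ which is either the identity or the reflection $x \mapsto x - 2\bra x,e\cet_X e$, and a neural operator $\Phi$, such that the $\epsilon$-approximation, layerwise strong monotonicity on a ball, and the inverse formula $(G \circ A|_{B_X(0,R)})^{-1} = A^{-1} \circ \Phi$ all hold. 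At this stage, the corollary is proved except that $G$ lives in the abstract Hilbert-space residual class rather than the concrete class $\mathcal{RNO}_{T,N,\varphi,\sigma}$.

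Second, I would invoke Lemma~\ref{lem:resnet-RNO}, which supplies a pointwise identification between every layer $Id_X + D_N \circ NN_t \circ E_N$ and a single residual neural operator layer, so that
\[
\mathcal{R}_{T,N,\varphi,\sigma}(L^2(D;\mathbb{R})) \subseteq \mathcal{RNO}_{T,N,\varphi,\sigma}(L^2(D;\mathbb{R})).
\]
The key point here is that the encoder $E_N$ and decoder $D_N$ may be realized as kernel integral operators whose kernels are built from the basis functions $\varphi_n$, while the interior finite-dimensional network $NN_t$ acts coordinatewise in that basis. The hypothesis that the constant function belongs to $\varphi$ is what enables the bias terms in each $NN_t$ to be absorbed into an integral operator against a constant feature, matching the neural operator template of Definition~\ref{def: neural operator layer}. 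Applying this layer by layer rewrites the $G$ produced above as an element of $\mathcal{RNO}_{T,N,\varphi,\sigma}(L^2(D;\mathbb{R}))$.

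Third, I would transfer the conclusions. Because the identification is a pointwise equality of operators on $L^2(D;\mathbb{R})$, the bound $\sup_{x \in \overline B_X(0,R)} \|F(x) - G \circ A(x)\|_X \le \epsilon$ is unaffected, and strong monotonicity of each layer — depending only on the pointwise inner-product inequality of Definition~\ref{def:strongly-monotone-bilipschitz_1} — survives unchanged. The inverse formula carries over verbatim, with $\Phi$ itself viewed as a (composition of) residual neural operator via the same lemma.

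The only substantive obstacle is the bookkeeping in Step two: verifying that Lemma~\ref{lem:resnet-RNO} applies layerwise \emph{with the same $T$ and $N$} and that the specific kernel parametrization defining $\mathcal{RNO}_{T,N,\varphi,\sigma}$ accommodates $D_N$, $E_N$, and each bias vector. This amounts to writing out the finite-rank kernels $k_n(x,y) = \varphi_n(x)\varphi_n(y)$ explicitly and checking that the constant-basis assumption correctly produces the affine component; no new analytic ideas beyond Lemma~\ref{lem:resnet-RNO} itself are required.
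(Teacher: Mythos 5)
Your proposal is correct and follows exactly the paper's route: the paper proves this corollary precisely by combining Theorem~\ref{thm:universality-bilipschitz-nos} applied with $X = L^2(D;\mathbb{R})$ together with the inclusion $\mathcal{R}_{T,N,\varphi,\sigma}(L^2(D;\mathbb{R})) \subset \mathcal{RNO}_{T,N,\varphi,\sigma}(L^2(D;\mathbb{R}))$ from Lemma~\ref{lem:resnet-RNO}, which is where the constant-basis-function hypothesis enters. Your additional bookkeeping about transferring the approximation bound, strong monotonicity, and the inverse formula is consistent with the paper's (terse) argument.
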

The proof is given by a combination of Theorem~\ref{thm:universality-bilipschitz-nos} and Lemma~\ref{lem:resnet-RNO}.
We note that the assumption that the orthonormal basis $\varphi$ includes the constant function is satisfied if we choose $\varphi$ to be a Fourier basis, which yields the Fourier neural operator, see e.g., \cite{li2020fourier,li2023fourier}.
\begin{remark}
\label{rem:invertible-resnet}
In this section, we have shown that residual networks in 
a separable Hilbert space $X$, defined in (\ref{definition-Resnet-Hilbert}), are universal approximators for layers of bilipschitz neural operators. Additionally, in the specific case where $X = L^2(D;\mathbb{R})$, residual neural operators, defined in Definition~\ref{def:neural-operator}, also provide universal approximators for layers of bilipschitz neural operators. 
We note that the residual network we have discussed is locally invertible but not globally. By introducing invertible residual networks on Hilbert space $X$, defined in (\ref{definition-inv-Resnet-Hilbert}), we can similarly prove that these networks by employing sort
activation functions (see \cite[Section 4]{anil2019sorting}) are universal approximators for strongly monotone diffeomorphisms with compact support. Specifically, when $X = L^2(D;\mathbb{R})$, invertible residual neural operators, defined in Definition~\ref{def:neural-operator}, are also universal approximators for strongly monotone diffeomorphisms with compact support. For further details, we refer to Appendix~\ref{sec:inv-resnet-on-hilbert-spaces}.
\end{remark}
\section{Quantitative approximation}
\label{rem:quantitative-approx-automatic-result}
Quantitative approximation results for neural networks, see e.g. \cite{Yarotsky} or \cite{Kutyniok}, can be used to derive quantitative error estimates for discretization operations.
Let
$F\colon \overline B_X(0,r)\to X$ be a non-linear function
    satisfying  $F\in \hbox{Lip}(\overline B_X(0,r); X)$, in $n=1$, or  $ F\in C^n(\overline B_X(0,r);X)$, if $n\ge 2$.
Then, $F$ can be discretized using neural networks in the following way: Let ${\e}_V>0$ be numbers indexed by the  linear subspaces $V\subset X$ such that ${\e}_V \to 0$ as $V\to X$. When $\vec \e=(\e_V)_{V\in S(X)}$, in the sense of Definition  \ref{def:epsilon-approx}, an $\vec {\e}$-approximation operation $\mathcal  A_{NN} :\ F \to (F_V)_{V\in S(X)}$ in the ball $B_X(0,r)$ can be be obtained by defining $F_V=J_V^{-1} \circ F_{V,\theta} \circ J_V \colon\ V \to V$, where $J_V \colon\ V \to \R^d$ is  an isometric isomorphism, $d=\hbox{dim}(V)$, $F_{V,\theta} :\ \R^d\to \R^d$ is a feed-forward neural network with ReLU-activation functions with at most $C(d) \log_2((1+r)/{\e}_V)$ layers and $C(d) {\e}_V^{ d} \log_2((1+r)/{\e}_V)$ non-zero elements in the weight matrices. Details of this result are given in Proposition~\ref{prop:quantitative-approx-automatic-result} in Appendix~\ref{sec:proofs-from:discretization-and-quant-approx}.
\section{Conclusion}
In this work, we have studied the problem of discretizing neural operators between Hilbert spaces. Many physical models concern functions $\R^n\to \R$, for example $L^2(\R^n)$, the computational methods based on approximations in finite dimensional spaces should become better when the dimension of the model grows and tends to infinity. We have focused on diffeomorphisms in infinite dimensions, which are crucial to understand in generative modeling. 
We have shown that the approximation of diffeomorphisms leads to computational difficulties. We used tools from category theory to produce a no-go theorem showing that general diffeomorphisms between Hilbert spaces may not admit any continuous approximations by diffeomorphisms on finite spaces, even if the approximations are allowed to be nonlinear. We then proceeded to give several positive results, showing that diffeomorphisms between Hilbert spaces may be continuously approximated if they are further assumed to be strongly monotone. Moreover, we showed that the difficulties can be avoided by considering a restricted but still practically rich class of diffeomorphisms. This includes bilipschitz neural operators, which may be represented in any bounded set as a composition of strongly monotone neural operators and strongly monotone diffeomorphisms.
We then showed that such operators may be inverted locally via an iteration scheme. Finally we gave a simple example on how quantitative stability questions can be obtained for discretization functors, inviting other researchers to study related questions using more sophisticated methods. 
\section*{Acknowledgments}
TF was supported by JSPS KAKENHI Grant Number JP24K16949, JST CREST JPMJCR24Q5, JST ASPIRE JPMJAP2329, and Grant for Basic Science Research Projects from The Sumitomo Foundation.
MP was supported by CAPITAL Services of Sioux Falls, South Dakota and NSF-DMS under grant 3F5083.
MVdH was supported by the Simons Foundation under the MATH + X program, the Department of Energy, BES under grant DE-SC0020345, and the corporate members of the Geo-Mathematical Imaging Group at Rice University. A significant part of the work of MVdH was carried out while he was an invited professor at the Centre Sciences des Donn\'{e}es at Ecole Normale Sup\'{e}rieure, Paris.
M.L. was partially supported by a AdG project 101097198 of the European Research Council, Centre of Excellence of Research Council of Finland and the FAME flagship. Views and opinions expressed are those of the authors only and do not necessarily reflect those of the funding agencies or the  EU.
\bibliographystyle{unsrtnat}
\bibliography{References.bib}
\newpage
\appendix
\section{Proofs and additional examples}
\label{sec:proofs}
\subsection{Non-linear discretization}
\label{sec: nonlinear discretization}
In this section we consider the non-linear discretization theory, see \cite{nonlineardiscretzation}.
Let $\Omega\subset \R^n$ be a bounded open set with smooth boundary. Let $H^1(\Omega)=\{u\in L^2(\Omega)\ |\ \nabla u\in L^2(\Omega)\}$ and $H^1_0(\Omega)=\{u\in H^1(\Omega)\ |\  u|_{\partial \Omega}=0\}
$ be the Sobolev spaces with an integer order of smoothness and $H^s(\Omega)$, $s\in \R$ the Sobolev space with a fractional order of smoothness \cite{TaylorPDE}.
For simplicity, we assume in the section that $n=1$,
and that $\Omega\subset \R$ is an interval and
we point out that in this case the Sobolev embedding theorem implies $H^1(\Omega)\subset C(\overline \Omega)$ which significantly simplifies the constructions below.
Let $g\in C^{m+1}(\R;\R),$ $m\ge 1,$  be the 
derivative of a convex function $G\in C^{m+2}(\R;\R)$ satisfying
\beq
\label{g bounded}
& &-c_0\le G(r)\leq c_1(1+r^p),\ r\in \R, \ c_0,c_1,p>0.
\eeq
Let $F=F^{(g)}\colon H^1_0(\Omega)\to H^1_0(\Omega)$, $F^{(g)}\colon x\to u$
be the solution operator of the non-linear differential equation
\beq
\label{PDE 1}
& &\Delta u(t)-g(u(t))=x(t),\quad t\in \Omega,\\
& &u|_{\partial \Omega}=0,\label{PDE 3}
\eeq
where $\Delta $ is the Laplace operator operating in the $t$ variable, where the function $x(t)$ is a physical source term.  Because the $u$ that solves \eqref{PDE 1}-\eqref{PDE 3} is the minimizer of the strictly convex and lower semi-continuous function $H:X\to \R$,
\beq
\label{H form}
H(v)= \|\nabla v\|^2_{L^2(\Omega)^2}+
\int_\Omega (G(v(t))-x(t)v(t))dt,
\eeq
the Weierstrass theorem, see \cite{Showalter}, Theorem II.8.1, implies that \eqref{PDE 1}-\eqref{PDE 3}  has
a unique solution. Moreover, by regularity theory for elliptic equations, see \cite{Gilbarg}, Theorem 8.13,
we have that the solution $u$ is in $H^3(\Omega)$.
Moreover, as $G$ is convex and satisfies \eqref{g bounded},
the function $H$ in \eqref{H form} is coercive, for any $R>0$ there is $\rho(R)>0$ such 
that when $\|x\|_{H^1_0(\Omega)}\le R$, then 
$\|u\|_{H^3(\Omega)}\le \rho(R).$
The equation \eqref{PDE 1}-\eqref{PDE 3}  can be written
also as an integral equation 
\beq
\label{PDE 1 int}
& &u(t)+\int_\Omega \Psi(t,t')g(u(t'))dt'=-\int_\Omega \Psi(t,t') x(t')dt',\quad t\in \Omega,
\eeq
where $\Psi(t,t')$ is the Dirichlet Green's function of the negative Laplacian,
that is, $-\Delta \Psi(\cdot ,t')=\delta_{t'}(\cdot)$, $\Psi(\cdot ,t')|_{\partial \Omega}=0$, and $g_*(u)=g(u)$.
We can write  \eqref{PDE 1 int} as 
\beq
\label{PDE 1 int B}
& &(Id+Q\circ g_*)u=-Q(x),\quad Qu(t)=\int_\Omega \Psi(t,t') x(t')dt',
\eeq
where $i_{H^1_0\to H^{3/4}}:H^1_0(\Omega)\to H^{3/4}(\Omega)$
and 
$i_{H^2\to H^1_0}:H^2(\Omega)\cap H^1_0(\Omega)\to H^1_0(\Omega)$ are compact operators
and as $H^{3/4}(\Omega)\subset C(\overline \Omega)$, the operators $g_*:H^{3/4}(\Omega)\to L^2( \Omega)$ and $Q:L^2(\Omega)\to H^2(\Omega)\cap H^1_0(\Omega)$ are continuous, we see that
$$F=Id+i_{H^2\to H^1}\circ (Q\circ g_*)\circ i_{H^1\to H^{3/4}}:H^1_0(\Omega)\to H^1_0( \Omega),$$
 is a layer of the neural operator. As $G\in C^{m+1}$, $m\ge 2$, we have that $F:H^1_0(\Omega)\to H^1_0( \Omega)$ is $C^1$-smooth.
Moreover, as $G$ is convex,
the operator $F:H^1(\Omega)\to H^1(\Omega)$ is strongly monotone and hence by Lemma~\ref{lem:diffeo:I+TGT}, $F$ has an inverse map $F^{-1}:H^1(\Omega)\to H^1(\Omega)$ that is $C^1$-smooth. Thus, the solution $u$ of \eqref{PDE 1}-\eqref{PDE 3} can be represented as
\ba
u=F^{-1}(Q(x)).
\ea
Let $V\subset X=H^1_0(\Omega)$ be a finite dimensional space. The Galerkin methods 
to obtain an approximation for the solution of the boundary value
problem \eqref{PDE 1}-\eqref{PDE 3} using Finite Element Method.
To do this, let  $w\in V$ be the solution of the problem
\beq
\label{weak PDE 1}
& &\int_\Omega(\nabla w(t)\cdot \nabla \phi(t) + g(w(t))\phi(t))dt
=-\int_\Omega x(t)\cdot  \phi(t)dt,\quad \hbox{for all }\phi\in V.
\eeq
When $\phi_1,\dots,\phi_d\in V$ are a basis of the finite dimensional vector space $V$, and we write
$w=\sum_{k=1}^d w_k\phi_j$, the problem
\eqref{weak PDE 1} is equivalent that $(w_1,\dots,w_d)\in \R^d$ satisfies the system
equations $d$ equations
\beq
\label{weak PDE 2}
\sum_{k=1}^d b_{jk}w_k=
 - \int_\Omega \phi_j(t) g\bigg(\sum_{k=1}^d w_k\phi_k(t)\bigg)\,dt
-\int_\Omega x(t)\cdot  \phi_j
(t)dt,\quad \hbox{for }j=1,2,\dots,d,
\eeq
where 
$$
b_{jk}=\int_\Omega\nabla \phi_j(t)\cdot \nabla \phi_k(t)dt. 
$$
When $x\in V$, we define the map $F^{(g)}_V\colon V\to V$ by setting
map $F^{(g)}_V(x)= w$, where $w\in V$ is the solution of the problem
\eqref{weak PDE 1}. When 
$$
\mathcal F_{PDE}=\{F^{(g)}\colon X\to X\ |\ 
g(s)=\frac {d G}{d s}(s)\ ,G\in C^{m+2}(\R)\hbox{ is convex and satisfies \eqref{g bounded}}\},
$$
we define the map $\mathcal A_{FEM}\colon F^{(g)}\to F_V^{(g)}$ for $F^{(g)}\in \mathcal F_{PDE}$. As the the function $g$ in the equation 
\eqref{weak PDE 2} is non-linear, the solution $u$ of the continuum problem
\eqref{PDE 1 int}-\eqref{PDE 3} and the solution $w$ of the finite dimensional problem have typically no linear relationship, even if the source $x$ satisfies $x\in V.$
Let $R>0$ As the embedding $H^2(\Omega)\cap H^1_0(\Omega)\to H^1_0(\Omega)$
is compact and $Q:H^1_0(\Omega)\to H^2(\Omega)\cap H^1_0(\Omega)$
is continuous, we see that the image  of the closed ball  $\overline B_X(0,R)$
in the map $Q$ is a precompact set $Q(\overline B_X(0,R))\subset X$. As $F^{-1}:X\to X$
is continuous the set of corresponding solutions, $$
Z_R=\{u\in X\ |\ u=F^{-1}(Q(x)),\ x\in \overline B_X(0,R)\},
$$ is precompact in $X.$ Hence,
\beq
\label{uniform convergence for Qx}
\lim_{V\to X}\sup_{u\in Z_R}
\|(Id-P_V)(u)\|_X=0.
\eeq

As $G$ is convex, the Fr\'echet derivative of the map $F=Id+Q\circ g_*:X\to X$ 
is a strictly positive
linear operator at 
all points $x\in X$. This, the uniform convergence \eqref{uniform convergence for Qx}, and the convergence results for the Galerkin method for semi-linear equations, \cite[Theorem 3.2]{FEMclassic}, see also \cite{FEMnew},
imply that in the space $X=H^1_0(\Omega)$
\beq
\label{AFEM is approximation}
\lim_{V\to X}\sup_{\|x\|_{X}\leq R}
\|F^{(g)}_V(x)-F^{(g)}(x)\|_X=0.
\eeq
These imply that $\mathcal A_{FEM}\colon F^{(g)}\to F_V^{(g)}$ is
an approximation operation for the function $F^{(g)}\in \mathcal F_{PDE}$.

The map $F^{(g)}_V:V\to V$ is called the Galerkin approximation
of the problem \eqref{PDE 1}-\eqref{PDE 3} and is an example of
the non-linear approximation methods. The properties of the Galerkin approximation
is studied in detail in \cite{Bohmerbook}, in particular sections 4.4 and 4.5.

\subsection{Negative results}

On the positive results, in Appendix~\ref{sec: nonlinear discretization}, we have considered nonlinear discretiation of the operators $u\to -\Delta u+g(u)$. 
To exemplify the negative result, we consider the following example on the solution operation of differential equations and the non-existence of approximation by diffeomorphic maps: We consider the elliptic (but not not strongly elliptic) problem
\beq
\label{differential eq}
B_su:=-\frac d{dt}\bigg((1+t)\hbox{sign}(t-s)\frac d{dt} u(t)\bigg)=f(t),\quad t\in [0,1],
\eeq
with the Dirichlet and Neumann boundary conditions
\beq
u(0)=0,\quad \frac d{dt} u(t)\bigg|_{t=1}=0.
\eeq
Here, $0\le s\le 1$ is a parameter of the coeffient function and $\hbox{sign}(t-s)=1$ if $t>s$ and $\hbox{sign}(t-s)<0$ if $t<s$. We consider the weak solutions of \eqref{differential eq} in the space
$$
u\in H^1_{D,N}(0,1):=\{v\in H^1(0,1):\ v(0)=0\}.
$$
We can write
$$
B_su=-D_t^{(2)}A_sD_t^{(1)}u,
$$
where 
$$
A_sv(t)=(1+t)\hbox{sign}(t-s)v(t),
$$ 
parametrized by $0\le s\leq$, are multiplication operations that are invertible operators, $A_s:L^2(0,1)\to L^2(0,1)$ (this invertibility makes the equation \eqref{differential eq} elliptic). Moreover, $D_t^{(1)}$ and $D_t^{(2)}$ are the operators $v\to \frac {d}{dt}v$ with the Dirichlet boundary  condition $v(0)=0$ and $v(1)=0$, respectively. We consider the Hilbert space $X=H^1_{D,N}(0,1)$; to generate an invertible operator $G_s:X\to X$ related to \eqref{differential eq}, we write the source term using an auxiliary function $g$,
$$
f(t)=Qg:=-\frac {d^2}{dt^2}g(t)+g(t).
$$
Then the equation,
\beq
\label{differential eq2}
B_su=Qg ,
\eeq
defines a continuous and invertible operator, 
$$
G_s:X\to X,\quad G_s:g\to u.
$$
In fact, $G_s=B_s^{-1}\circ Q$ when the domains of $B_s$ and $Q$ are chosen in a suitable way. The Galerkin method (that is, the standard approximation based on the Finite Element Method) to approximate the equation \eqref{differential eq2} involves introducing a complete basis $\chi_j(t)$, $j=1,2,\dots$ of the Hilbert space $X$, the orthogonal projection  
$$
P_n:X\to X_n:= \hbox{span}\{\chi_j:\ j=1,2,\dots,n\} ,
$$
and approximate solutions of \eqref{differential eq2} through solving
\beq
\label{differential eq3}
P_nB_sP_nu_n=P_nQP_ng_n,\quad u_n\in X_n, \ g_n=P_ng.
\eeq
This means that operator $B_s^{-1}Q:g\to u$ is approximated by $(P_nB_sP_n)^{-1}P_nQP_n:g_n\to u_n$, when $P_nB_sP_n:X_n\to X_n$ is invertible.

The above corresponds to the Finite Element Method where the matrix defined by the operator $P_nB_sP_n$
is  
${\bf b}(s)= [b_{jk}(s)]_{j,k=1}^n\in \R^{n\times n}$, where
$$
b_{jk}(s)=\int_0^1 (1+t)\hbox{sign}(t-s)\frac d{dt} \chi_j(t)\cdot \frac d{dt} \chi_k(t)dt,\quad j,k=1,\dots,n.
$$
Since we used the  mixed Dirichlet and Neumann boundary conditions in the above boundary value problem, we see that for $s=0$ all eigenvalues of the matrix ${\bf b}(s)$ are strictly positive, and when $s=1$ all eigenvalues  are strictly negative. As the function $s \to {\bf b}(s)$ is a continuous matrix-valued function, we see that there exists $s\in (0,1)$ such that  the matrix ${\bf b}(s)$ has a zero eigenvalue and is no invertible. Thus, we have a situation where all operators $B_s^{-1}Q:g\to u$, $s\in [0,1]$ are invertible (and thus define diffeomorphisms $X\to X$) but for any basis $\chi_j(t)$ and any $n$ there exists $s\in (0,1)$ such that the finite dimensional approximation ${\bf b}(s):\R^n\to \R^n$ is not invertible. This example shows that there is no FEM-based discretization method for which the finite dimensional approximations of all operators $B_s^{-1}Q$,  $s\in (0,1)$, are invertible. The above example also shows a key difference between finite and infinite dimensional spaces. The operator $A_s:L^2(0,1)\to L^2(0,1)$ has only continuous spectrum and not eigenvalues nor eigenfunctions whereas the finite dimensional matrices have only point spectrum (that is, eigenvalues). The  continuous spectrum makes it possible to deform the  positive operator $A_s$ with $s=0$  to a negative operator $A_s$ with $s=1$ in such a way that all operators $A_s$, $0\le s\le 1$, are invertible but this is not possible to do for finite dimensional matrices. We point out  that the map $s\to A_s$ is not continuous in the operator norm topology but only in the strong operator topology and the fact that $A_0$ can be deformed to $A_1$ in the norm topology by a path that lies in the set of invertible operators is a deeper result. However, the strong operator topology is enough to make the FEM matrix  ${\bf b}(s)$ to depend continuously on $s$.

\subsection{Application of Theorem \ref{thm:no-go} to Neural Operators}
\label{sec:app-of-no-go-thms-to-NOs}

In this section, we give an example of the application of Theorem \ref{thm:no-go}, as applied to the problem of using a neural operator to solve a linear PDE. The guiding idea is to draw inspiration from Figure \ref{fig:no-go-theorem-aid}; the space of diffeomorphisms on $X$ is disconnected if $X$ is a finite-dimensional space, but connected if it is a Hilbert space.

Before constructing the path, we first construct a isotopy of diffeomorphisms in $L^2(\Omega)$ in several steps

Let $\rho \colon \R \to [0,1]$ be a smooth function such that $\rho|_{(-\infty,0]} = 0$, $\rho|_{[1,\infty)} = 1$. Let $R(\theta) \in \R^{2 \times 2}$ be the rotation matrix given by 
\begin{align}
    R(\theta) \coloneqq
    \begin{pmatrix}
        \cos(\theta)&\sin(\theta)\\
        -\sin(\theta)&\cos(\theta)
    \end{pmatrix}.
\end{align}
Then, define the function $R_i\colon \R \to \R^{2 \times 2}$ by
\begin{align}
    R_i(t) \coloneqq R(\rho(t - i)\pi).
\end{align}
Note that $R_i(t) \in SO(2)$, for $t \leq i$, $R_i(t) = I$, $t \geq i+1$, $R_i(t) = -I$.

Next, we consider the `little ell' space, $\ell^2\coloneqq \set{(\alpha_1,\alpha_2,\dots,) \colon \sum_{i = 1}^\infty \alpha_i^2 < \infty}$. Consider the linear operator $\hat\cR \colon \R \to \paren{\ell^2 \to \ell^2}$ on $\ell^2$ given by a pair-wise coordinate representation given by the infinite dimensional `matrix' with block structure
\begin{align}
    \hat \cR(t) \coloneqq \begin{pmatrix}
        R_1(t)\\&R_2(t)\\&&\ddots
    \end{pmatrix}.
\end{align}
Finally, define $\cR \colon [0,1] \to (\ell^2 \to \ell^2)$ by
\begin{align}
    \label{eqn:cR-def}
    \cR(t) \coloneqq \begin{cases}
        \hat \cR\paren{\frac1{1 - t}}& t \in [0,1)\\
        -I & t \geq 1
    \end{cases},
\end{align}
and define the related operator $\tilde \cR \colon [0,1] \to \paren{\ell^2 \to \ell^2}$ by 
\begin{align}
    \label{eqn:tilde-cR-def}
    \tilde \cR(t) = \pmat{-1 \\&\cR(t)}.
\end{align}

\begin{proposition}[$\cR$, $\tilde \cR$ are isotopies of smooth diffeomorphisms in $\ell^2$]
    The maps $H_1\colon \ell^2 \times [0,1] \to \ell^2$ and $H_2\colon \ell^2 \times [0,1] \to \ell^2$ defined by
    \begin{align}
        H_1(v,t) = \cR(t)v, \quad H_2(v,t) = \tilde \cR(t)v,
    \end{align}
    are isotopies of smooth diffeomorphisms in $\ell^2$ and agree at $t = 1$.
\end{proposition}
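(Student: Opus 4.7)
The plan is to verify three ingredients in order: (i) for each fixed $t\in[0,1]$, both $\cR(t)$ and $\tilde\cR(t)$ are smooth diffeomorphisms of $\ell^2$; (ii) the maps $H_1$ and $H_2$ are jointly continuous on $\ell^2\times[0,1]$, so they genuinely define isotopies; and (iii) $\cR(1)=\tilde\cR(1)=-I$, so $H_1(\cdot,1)=H_2(\cdot,1)$.

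For (i), I would note that each block $R_i(s)$ lies in $SO(2)$ and is therefore orthogonal. Block-diagonal assembly preserves this, so $\hat\cR(s)$ is an orthogonal operator on $\ell^2$ with inverse given by the block-diagonal transpose. Consequently, for $t\in[0,1)$ the operator $\cR(t)=\hat\cR(1/(1-t))$ is a bounded orthogonal bijection of $\ell^2$, and at $t=1$, $\cR(1)=-I$ is obviously one too. Any bounded invertible linear map $\ell^2\to\ell^2$ is $C^\infty$ (indeed real analytic) with $C^\infty$ inverse, so each is a smooth diffeomorphism; the same reasoning applies to $\tilde\cR(t)$, which differs from $\cR(t)$ only by prepending a $-1$ block.

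For (ii), continuity at any $(v_0,t_0)$ with $t_0\in[0,1)$ I would obtain from operator-norm continuity of $s\mapsto\hat\cR(s)$: in a compact neighborhood of $s_0=1/(1-t_0)$, only the finitely many blocks with $s-i\in(0,1)$ are non-constant in $s$, and on these blocks $R_i$ is continuous by construction. The main obstacle is continuity at $t_0=1$, since $\|\hat\cR(s)-(-I)\|_{\mathrm{op}}\ge 2$ for every finite $s$ and operator-norm continuity fails. I would argue pointwise: for fixed $v_0\in\ell^2$ and $\epsilon>0$, choose $N$ with $\sum_{i>N}\|v_{0,i}\|^2<\epsilon^2/16$ using $\ell^2$-summability, then choose $s$ so large that the first $N$ blocks of $\hat\cR(s)$ all equal $-I$; on these blocks the difference $\hat\cR(s)v_0-(-v_0)$ vanishes, and on the remainder $\|R_i(s)v_{0,i}+v_{0,i}\|\le 2\|v_{0,i}\|$, giving $\|\cR(t)v_0+v_0\|<\epsilon/2$. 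Joint continuity at $(v_0,1)$ then follows from the isometry estimate
\[
\|\cR(t)v-(-v_0)\|\le \|\cR(t)(v-v_0)\|+\|\cR(t)v_0+v_0\|=\|v-v_0\|+\|\cR(t)v_0+v_0\|,
\]
and the same argument, with the prepended $-1$ slot folded in trivially, handles $H_2$.

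For (iii), direct computation from (\ref{eqn:cR-def}) gives $\cR(1)=-I$, and then (\ref{eqn:tilde-cR-def}) gives $\tilde\cR(1)=\mathrm{diag}(-1,-I)=-I$, so $H_1(v,1)=H_2(v,1)=-v$ for every $v\in\ell^2$. The substantive content of the proposition is thus exactly the continuity at $t=1$: the $\ell^2$-tail estimate is the concrete mechanism by which $I$ and $-I$ sit in the same path-component of the orthogonal group of $\ell^2$, a phenomenon which fails in every finite-dimensional truncation and is precisely what will drive the no-go theorem.
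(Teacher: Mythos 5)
Your proof is correct and takes essentially the same route as the paper's: smoothness of each fixed-$t$ map follows from linearity and orthogonality, and the crux is left-continuity at $t=1$ via the $\ell^2$-tail estimate --- the paper's bound $2\sqrt{\sum_{i\ge k^*(t)}v_i^2}\to 0$ with $k^*(t)=\floor{1/(1-t)}$ is exactly your choice of $N$ followed by taking $s$ large enough that the first $N$ blocks equal $-I$. Your additional checks (joint continuity in $(v,t)$ via the isometry estimate, and operator-norm continuity on $[0,1)$) are slightly more thorough than the paper's pointwise treatment, but the substance is identical.
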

\begin{proof}
    We prove that $H_1$ is an isotopy of smooth diffeomorphisms on $\ell^2$, as the proof of the same for $H_2$ is very similar.
    We first show that $H_1(\cdot,t) \in \mathrm{diff}(\ell^2)$ for each $t$. 
    For $t < 1$, $\cR$ operates on vectors in $\ell^2$ in only a finite number of indices. In those indices it corresponds to a rotation, and hence is a smooth diffeomorphism. When $t \geq 1$, it corresponds to scalar multiplication by $-1$, and so is too a smooth diffeomorphism. Now we show that $\cR(t)$ is continuous on . The only point where continuity may fail is at $t = 1$, and there it may only fail in the limit from the left. Given a $v \in \ell^2$, we compute
    \begin{align}
        \label{eqn:R-cont-equation}
        \lim_{t \to 1^-}\norm{\cR(t)v - \cR(1) v}_2 = \lim_{t \to 1^-}\norm{\pmat{R_1(\frac1{1-t})\pmat{v_1\\v_2}\\R_2(\frac1{1-t})\pmat{v_3\\v_4}\\\vdots\\R_k(\frac1{1-t})\pmat{v_{2k-1}\\v_{2k}}\\R_{k+1}(\frac1{1-t})\pmat{v_{2k+1}\\v_{2k+2}}\\\vdots} + \pmat{v_1\\v_2\\v_3\\v_4\\\vdots\\v_{2k-1}\\v_{2k}\\v_{2k+1}\\v_{2k+2}\\\vdots}}_2.
    \end{align}
    Let $k^*(t) \coloneqq \floor{\frac1{1-t}}$. Then for any integer $i \leq k$, we have that
    \begin{align}
        R_i\paren{\frac1{1 - t}} = R\paren{\rho\paren{\underbrace{\frac1{1 - t} - i}_{\geq 1}}\pi} = R(\pi) = -I \in \R^{2 \times 2},
    \end{align}
    and likewise when $i \geq k^*(t) + 1$, then $R_i\paren{\frac1{1-t}} = I \in \R^{2 \times 2}$, and so the r.h.s. of Eqn. \ref{eqn:R-cont-equation} becomes
    \begin{align}
        &\lim_{t \to 1^-}\norm{\pmat{-v_1\\-v_2\\-v_3\\-v_4\\\vdots\\R_{k^*(t)}(\frac1{1-t})\pmat{v_{2k^*(t)-1}\\v_{2k^*(t)}}\\v_{2k^*(t)+1}\\v_{2k^*(t)+2}\\\vdots} + \pmat{v_1\\v_2\\v_3\\v_4\\\vdots\\v_{2k^*(t)-1}\\v_{2k^*(t)}\\v_{2k^*(t)+1}\\v_{2k^*(t)+2}\\\vdots}}_2 \\= &\lim_{t \to 1^-}\norm{\pmat{0\\0\\0\\0\\\vdots\\R_{k^*(t)}(\frac1{1-t})\pmat{v_{2k^*(t)-1}\\v_{2k^*(t)}} + \pmat{v_{2k^*(t)-1}\\v_{2k^*(t)+2}}\\2v_{2k^*(t)+1}\\2v_{2k^*(t)+2}\\\vdots}}_2\\
        \leq &2\sqrt{\sum^{\infty}_{i = k^*(t)} v_i^2}.
    \end{align}
    As $t \to 1^{-}$, $k^*(t) \to \infty$, and so, by standard estimates, $2\sqrt{\sum^{\infty}_{i = k^*(t)} v_i^2} \to 0$. Hence, $\cR(t)$ is continuous on $\ell^2$ at $t = 1$.
    Finally, by definition at $t = 1$, we have that $H_1(v,1) = \cR(1) = -I = \cR(1) = H_2(v,1)$.
\end{proof}
Finally, we define $H \colon \R^{\infty} \times [0,1] \to \R^{\infty}$ by gluing the isotopies $H_1$ and $H_2$ together by
\begin{align}
    \label{eqn:diff-isotopy}
    H(v,t) \coloneqq \begin{cases}
        H_1(v,2t) & \text{if }  t\leq \frac12\\
        H_2(v,2-2t) & \text{if } t > \frac12
    \end{cases}.
\end{align}
\begin{proposition}
    The function $H $ given by Eqn. \ref{eqn:diff-isotopy} is an isotopy of diffeomorphisms in $\ell^2$.
\end{proposition}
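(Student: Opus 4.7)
The plan is to verify the three defining conditions of an isotopy in $\ell^2$ for the glued map $H$: (i) for every fixed $t \in [0,1]$, the slice $H(\cdot, t)\colon \ell^2 \to \ell^2$ is a smooth diffeomorphism; (ii) $H$ is jointly continuous on $\ell^2 \times [0,1]$; and (iii) the two pieces agree at the gluing time $t = \tfrac{1}{2}$. The previous proposition has already done essentially all of the hard work, namely that $H_1$ and $H_2$ are isotopies of smooth diffeomorphisms on $\ell^2$ and that $H_1(v,1) = H_2(v,1)$. So this proposition will amount to a careful bookkeeping of a standard reparametrization/gluing argument.

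First I would handle (i) and (iii) simultaneously. For $t \in [0, \tfrac{1}{2}]$ we have $H(\cdot, t) = H_1(\cdot, 2t)$ with $2t \in [0,1]$, and for $t \in [\tfrac{1}{2}, 1]$ we have $H(\cdot, t) = H_2(\cdot, 2-2t)$ with $2-2t \in [0,1]$. In both ranges, the corresponding slice is a smooth diffeomorphism on $\ell^2$ by the preceding proposition. At $t = \tfrac{1}{2}$ the two definitions coincide: $H_1(v,1) = \cR(1)v = -v = \tilde \cR(1)v = H_2(v,1)$ since, by the block structure of $\tilde \cR$, the limiting matrix $\tilde \cR(1)$ equals $\mathrm{diag}(-1, \cR(1)) = -I$. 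Hence $H(\cdot, \tfrac{1}{2})$ is well defined and equals $-\mathrm{Id}$, which is of course a smooth diffeomorphism.

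Next I would establish (ii). Away from $t = \tfrac{1}{2}$, joint continuity of $H$ follows immediately from joint continuity of $H_1$ and $H_2$ composed with the smooth reparametrizations $t \mapsto 2t$ and $t \mapsto 2-2t$. The only place continuity could fail is at $t = \tfrac{1}{2}$. Given a sequence $(v_n, t_n) \to (v_*, \tfrac{1}{2})$, I would split into subsequences with $t_n \leq \tfrac{1}{2}$ and $t_n > \tfrac{1}{2}$; along the first, $H(v_n, t_n) = H_1(v_n, 2t_n) \to H_1(v_*, 1) = -v_*$, and along the second, $H(v_n, t_n) = H_2(v_n, 2 - 2t_n) \to H_2(v_*, 1) = -v_*$. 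Both limits agree with $H(v_*, \tfrac{1}{2})$, giving joint continuity.

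The main obstacle I anticipate is making the statement of joint continuity airtight at $t = \tfrac{1}{2}$, since the preceding proposition only explicitly verified the more delicate continuity of $t \mapsto \cR(t)$ at $t = 1$ (the point where $\cR$ transitions from an infinite rotation block to $-I$). I would therefore cite that estimate directly when taking the one-sided limits $t_n \to \tfrac{1}{2}^-$ and $t_n \to \tfrac{1}{2}^+$ above, reducing the argument to the two already-verified isotopies $H_1$ and $H_2$ plus the matching condition $H_1(\cdot,1) = H_2(\cdot,1) = -\mathrm{Id}$. With that, (i)--(iii) are all verified, completing the proof that $H$ is an isotopy of smooth diffeomorphisms in $\ell^2$ connecting $\mathrm{Id}$ at $t=0$ to $\mathrm{diag}(-1, I)$ at $t = 1$ — precisely the object needed to drive the no-go argument illustrated in Figure~\ref{fig:no-go-theorem-aid}.
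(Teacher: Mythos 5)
Your proposal is correct and follows essentially the same route as the paper's (very terse) proof: each slice is a diffeomorphism by the preceding proposition, and continuity at the gluing time $t=\tfrac12$ follows from the continuity of $\cR$, $\tilde\cR$ and the matching condition $\cR(1)=\tilde\cR(1)=-I$. Your version simply fills in the one-sided limit bookkeeping that the paper leaves implicit.
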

\begin{proof}
    Continuity of $H$ follows from continuity of $\cR$, $\tilde \cR$, and that $\cR(1) = \tilde \cR(1)$. For each $t \in [0,1]$, $H(\cdot,t) \in \mathrm{diff}(\ell^2)$.
\end{proof}

\subsection{Proofs from Sec. \ref{sec:no-go-theorem-for-univ-approx-diff}}

\subsubsection{Proof of Theorem \ref{thm:no-go}}
\label{proof:thm:no-go}

\begin{proof}
Assume that $\mathcal  A\colon \mathcal  D\to \mathcal  B$ is a functor that satisfies the property (A) of
    an approximation functor and is continuous. 
   Let us consider the case when $X=\ell^2$.

Let $e\in X$  be a unit vector and $B_e:X\to X$  be a linear map $Bx=x-2\bra x, e\ket_X e$. In other words, $B$  is a diagonal matrix
diag$(-1,1,\dots)$ in some orthogonal basis where $e$ is the first basis vector.

By \cite[Theorem 2]{kuiper1965homotopy}, see also and \cite{PutnamPNAS},

for
an infinite dimensional Hilbert space $X$  the space $GL(X)$ of  linear invertible maps have only one topological component in the operator norm topology
 and the set $GL(X)$ is contractible and hence path-connected.
This implies that there are linear maps  $A_t:X\to X$, $t\in [0,1]$ such that  $A_0=Id$, and $A_1\in GL(X)$ is arbitrary invertible linear map
and that $t\to A_t\in GL(X)$ is continuous. 
Similarly, in an infinite dimensional real Hilbert space $X$ 
the space $O_{\R}(X)$ of the orthogonal operators $A:X\to X$ is path-connected in the operator norm topology,
see  \cite[Section 4]{PutnamAJM}. We recall that 
an orthogonal operator $A:X\to X$ is a bounded linear operator
satisfying $A^*A=AA^*=I$, where $A^*:X\to X$ is the adjoint (i.e., the transpose) of the operator  $A:X\to X$. Observe that in a finite dimensional space $\R^n$ the set 
 $O_{\R}(\R^n)$ has two disjoint topological components, those which determinant is 1 and 
 those which determinant is $-1$, and thus the set $O_{\R}(\R^n)$ is not connected.

A bounded linear  operator $B:X\to X$ is called a rotation operator
of the form $B=e^S$ where $S:X\to X$ is a linear, bounded, skew-symmetric operator, $S^*=-S$,
and $e^{S}=I+S+\frac 1{2!} S^2+\frac 1{3!}
S^3+\dots$. 
An orthogonal operator $A$ is called a reflection if it is not a rotation.

One of the fundamental reasons for the surprising property
that the space $O_{\R}(X)$ of orthogonal operators in $X$ is path-connected
is that in an infinite dimensional real Hilbert space $X$ every orthogonal operator $A:X\to X$
can be represented as a product of two rotation operators (that is no valid in the finite
dimensional spaces), that is, $A=e^{S_1}e^{S_2}$
where $S_1,S_2:X\to X$ are skew-symmetric linear operators. Thus
any operator $A\in O_{\R}(X)$ can be connected to the identity operator $Id:X\to X$
via a path $\alpha:t\to e^{tS_1}e^{tS_2}\in  O_{\R}(X)$, $t\in [0,1]$,
so that $\alpha(0)=Id$ and $\alpha(1)=A.$

As a motivating example on the fact that   $O_\R(X)$  is connected
in the operator norm topology, is to consider a similar, but easier result in
the strong operator topology, that is, topology generated by the evaluation maps $A\to A(u)$, $u\in L^2(0,1)$. 
Observe that the strong operator topology is weaker than the operator norm topology, but in finite dimensional space those are equivalent..
So, let us next show that
 the operators $Id$ and $-Id$ are in the same topological component of $O_\R(X)$ in the strong operator topology. As all infinite dimensional separable Hilbert spaces are isometric to $L^2(0,1)$, let us 
 consider the Hilbert space $L^2(\R)$ and the orthogonal linear operators $A_s:L^2(0,1)\to L^2(0,1)$, $s\in [0,1]$ that are given by

\beq
\label{multiplication operator A}
A_su(t)=\hbox{sign}(t-s)\cdot u(t),
\eeq

where 
$u\in L^2(0,1)$
and $\hbox{sign}(t)$ is the sign of the real number $t$. 
Then, for every $u\in L^2(0,1)$ the functions
\ba
a_{u}:s\to A_s(u),
\ea
are continuous functions $a_{u}:[0,1]\to L^2(0,1)$. As $A_0=Id$ and $A_0=-Id$, we see that 
\ba
s\to A_s\in O_\R(L^2(0,1)),\quad s\in [0,1],
\ea
is a continuous path in the strong operator topology,
that connects the operator $Id$ to $-Id$.

We recall that the continuity of the function $s\to A_s$
in the strong operator topology,
means that for all $u\in L^2(0,1)$ the map
$$
s\to A_su\in  L^2(0,1),
$$
is continuous. As for $s>s_0$
$$
\|A_s u-A_{s_0}u\|_{L^2(0,1)}^2=\int_{s_0}^s |2u(x)|^2dx\to 0,
$$
as $s\to s_0$,
we see that $s\to A_s$ is continuous in the strong operator topology.
All maps $A_s:L^2(0,1)\to L^2(0,1)$ are invertible
and $A_0=Id$, $A_1=-Id$. 
This has implications e.g. for Finite Element method analysis
of partial differential equations (See Appendix~\ref{sec:additional-examples}).

Note that when $X$ is any infinite dimensional separable Hilbert space with real scalars,
there is an isometry $J:X\to L^2(0,1)$ (e.g., the linear operator mapping an orthogonal basis of $X$ to an orthogonal basis $L^2(0,1)$.
Then, the maps $\tilde A_s:X\to X$ given by $\tilde A_s=J^{-1}\circ A_s\circ J:X\to X$, where $A_s:L^2(0,1)\to L^2(0,1)$ are given above, are continuous paths
in $O_\R(X)$ from $\tilde A_0=Id:X\to X$ to $\tilde A_1=-Id:X\to X$. 
As discussed above, the deep result that $Id$  and  $-Id$ can be connected by a continuous path in the  operator norm topology of $O_\R(X)$
is given in \cite[Theorem 2]{kuiper1965homotopy}, see also \cite{PutnamPNAS}.

Let us first warm up by proving the claim in the case when an additional
assumption (B) is valid: 

\medskip

(B) : When $V\subset X$ is finite dimensional 

 discretization maps $F_V$ of linear invertible maps $F\colon X\to X$
are linear invertible maps and moreover, the set $S_0(X)=S(X)$
consists of all linear subspaces of $X$. 
\medskip

Under assumptions (A) and (B), we consider the case when $A_1=-Id$ and denote by $F_t=A_t:X\to X$
 a family of linear maps such that  $A_0=Id$, and that $t\to A_t\in GL(X)$ is continuous path of operators connecting $Id$ to $A_1=-Id$. 
Let   
$(X,S(X),(F_{t,V})_{V\in S(X)})=
\mathcal  A(X,F_t)$, that is, $F_{t,V}:V\to V$  be the linear
isomorphism that
is the discretizations of the map $F_t:X\to X$.
As the functor $\mathcal  A$ is continuous, the map $t\to F_{t,V}$ is a continuous map from $[0,1]$ to the space of linear operators 
endowed with the topology of uniform convergence on compact sets,
c.f.\ limit \eqref{FV continuous 0}.
As $F_{t,W}$ are linear, this implies that  for all $t'\in [0,1]$,
\beq
\label{eq: norm limit}
\lim_{t\to t'} \|F_{t,W}-F_{t',W}\|
=\lim_{t\to t'} \sup_{x\in V\cap \overline B_{X}(0,1)}
\|F_{t,W}(x)-F_{t',W}(x)\|_X=0.
\eeq
and hence the map $t\to  F_{t,W}$ is a continuous map $[0,1]\to  GL(W)$.

Let $0<\epsilon_0<1/2$ and $r>1$.  Let  $t_0=0$ and $t_1=1$.
Using Property (A) for operators $F_{t_j}$, $j=0,1$, we see that there are $W_0,W_1\in S_0(X)$ such that
for all $V\in S_0(X)$  satisfying $V\supset W_j$ we have
$$
 \sup_{x\in  \overline B_{X}(0,r)\cap V}
\|F_{t_j,W_j}(x)-F_{t_j}(x)\|_X<\epsilon_0, \quad\hbox{for } j\in \{0,1\}.
$$

Let $W\in S_0(X)$  be such a finite dimensional space which dimension is an odd integer and 
that $W_0+W_1\subset W$.
Then,
\begin{equation}
\label{F_t on W}
\sup_{x\in  \overline B_{X}(0,r)\cap W}
\|F_{t_j,W}(x)-F_{t_j}(x)\|_X\leq \epsilon_0, \quad\hbox{for } j\in \{0,1\},
\end{equation}

Clearly,  $A_0=Id$ satisfies $A_0(W)=W$ and $A_0:W\to W$ is an invertible linear map.
Similarly, the map $A_1=-Id$ satisfies  $A_1(W)=W$ and $A_1:W\to W$ is an invertible linear map.

These observation and assumptions (A) and (B) imply that $F_{0,W}:W\to W$
and  $F_{1,W}:W\to W$   are linear maps and by inequality \eqref{F_t on W},
\begin{equation}
\label{norm estimate 1}
\|F_{0,W}-A_0|_W\|_{W\to W}<\epsilon_0<\frac 12,\quad \|F_{1,W}-A_1|_W\|_{W\to W}<\epsilon_0<\frac 12,
\end{equation}
and  as the dimension of $W$ is odd, we have 

$$
\det(A_0|_W)=\det(Id|_W)=1,\quad  \det(A_1|_W)=\det(-Id|_W)=-1.
$$
Let $B_{0,s}=(1-s)A_0|_W+sF_{0,W}:W\to W$ and $B_{1,s}=(1-s)A_1|_W+sF_{0,W}:W\to W$.
As $\|(A_0|_W)^{-1}\|_{W\to W}=\|(A_1|_W)^{-1}\|_{W\to W}=1$, we see using \eqref{norm estimate 1} 
that
the maps $B_{0,s}^{-1}:W\to W$ and  $B_{1,s}^{-1}:W\to W$ are invertible
and that the functions $s\to B_{0,s}\in GL(W)$  and $s\to B_{1,s}\in GL(W)$ are continuous
matrix-valued functions which determinants does not vanish and
$$
\det(B_{0,s})=\det(A_0|_W)>0,\quad  \det(B_{1,s})=\det(A_1|_W)<0,
$$
for all $s\in [0,1]$.
These imply that 
\begin{equation}
\label{main determinants}
\det(F_{0,W})=\det(B_{0,1})>0,\quad  \det(F_{1,W})=\det(B_{0,1})<0,
\end{equation}
However, as $t\to F_{t,W}$  is a continuous maps $[0,1]\to GL(W)$,
see \eqref{eq: norm limit}, we have that  
\begin{equation}
\label{main determinants constant}
t\to \det(F_{t,W}),
\end{equation}
is a continuous function $[0,1]\to \R$ which 
does not obtain value zero and thus has a constant sign.
However, this is in contradiction with formula \eqref{main determinants}.

Next we return to the main part of the proof where we do not assume
that assumption (B) is  valid, but we only assume that 
the assumption (A) is valid.
In this case, 
we use degree theory instead of the determinants.

Next, let $F_t=A_t:X\to X$ be 
 a family of linear maps such that  $A_0=Id$, 
but that $A_1(x)=B_e(x)=x-2e\bra x,e\ket_X$ is a reflection, where $e\in X$ is a unit vector,
 and  $t\to A_t\in GL(X)$ is continuous. 
Again, let   
$(X,S_0(X),(F_{t,V})_{V\in S_0(X)})=
\mathcal  A(X,F_t)$, that is, $F_{t,V}:V\to V$  be $C^1$-diffeomorphisms that are discretizations of the map $F\colon X\to X$.
As the functor $\mathcal  A$ is continuous, the map $t\to F_{t,V}|_{\overline B(0,R)}\in C(\overline B(0,R))$ is a continuous map
for all  $V\in S_0(X)$ and $R>0$.

Moreover, we note that  by our assumptions on $\mathcal  B$, the discretized map $F_{t,V}:V\to V$,  
is   $C^1$-diffeomorphism of $V$. 

As $V$ is a finite dimensional vector space, we can use
finite dimensional degree theory and consider the 
degree $\hbox{deg}(F_{t,V},V\cap \overline B_{X}(0,r),p)$, that is
the 
degree of the map $F_{t,V}:V\cap \overline B_{X}(0,r)\to V$ at the point $p$.
We recall that when $\Omega\subset \mathbb R^d$ is open and bounded, $F\colon \overline \Omega\to \mathbb R^d$ is  a $C^1$-smooth function 
and $p\in \mathbb R^d$ are such that $p\not \in f(\partial\Omega)$ and for all
$x\in f^{-1}(p)$ the derivative $Df(x)$ is an invertible matrix, the degree is defined to be
$$
\hbox{deg}(f,\Omega,p)=\sum_{x\in f^{-1}(p)}\hbox{sgn}(\hbox{det}(Df(x))),
$$
where $\hbox{sgn}(r)$ is sign of a real number $r$. Also, the map $f\to \hbox{deg}(h,\Omega,p)$
is defined for a continuous function $h:\overline \Omega\to \mathbb R^d$ by approximating $h$ by $C^1$-smooth function,
see \cite{degreetheory-Cho-Chen}, Definition 1.2.5 on details. 
Let us denote $\overline B_{W}(0,r)=W\cap \overline B_{X}(0,r)$.

Let $r>0$.
Recall that by assumption (A), for $F_0=Id:X\to X$ and $F_1=B_e:X\to X$
there are finite dimensional spaces $V_0\in S_0(X)$ and $V_1\in S_0(X)$
such that $e\in V_0$ and $e\in V_1$ and that when  $V\in S_0(X)$ satisfies $V_0\subset V$
and $V_1\subset V$, then  
\beq
\label{limit F0}
\sup_{x\in  \overline B_{X}(0,r)\cap V}
\|F_{0,V}(x)-F_0(x)\|_X<r/4,
\eeq
and
\beq
\label{limit F1}
\sup_{x\in  \overline B_{X}(0,r)\cap V}
\|F_{1,V}(x)-F_1(x)\|_X<r/4.
\eeq
Moreover, let $W\in S_0(X)$ be such a finite dimensional linear space that 
$V_0\subset W$
and $V_1\subset W$.

Observe that as $e\in W$, we can decompose $W$
as 
\beq
W=\hbox{span}(e)\oplus \{x\in W:\ \bra x,e\ket_X=0\}.
\eeq
and denote $W'=\{x\in W:\ \bra x,e\ket_X=0\}$.
We see that 
\beq
\label{Be map}
& &A_1=B_e:\hbox{span}(e)\to \hbox{span}(e),\ B_e|_{\hbox{span}(e)}=-Id,\\
& &A_1=B_e:W'\to W',\ B_e|_{W'}=Id.\nonumber
\eeq

Next we use the facts that  $F_0(0)=Id(0)=0$ and  $F_1(0)=B_e(0)=0$.
Let us define the maps
\ba
f_0=F_0|_{\overline  B_W(0,r)},\quad f_1=F_1|_{\overline  B_W(0,r)},\quad f_{0,W}=F_{0,W}|_{\overline  B_W(0,r)},\quad f_{1,W}=F_{1,W}|_{\overline  B_W(0,r)}.
\ea
that are $C^1$-smooth maps ${\overline  B_W(0,r)}\to W$.
Moreover, for $j=0,1$, we have $f_0(\p B_W(0,r))=\p B_W(0,r)$ and $f_1(\p B_W(0,r))=\p B_W(0,r)$.
Let $p_j:=f_{j,W}(0)$.
Then, by \eqref{limit F0} and \eqref{limit F1} we have 
\beq
\label{pj norm}
\|p_j\|_W=\|f_{j,W}(0)\|_W=\|f_{j,W}(0)-f_j(0)\|_W< \frac 14r,
\eeq
and 
$$
\hbox{dist}(f_{j,W}(x),p_j)\ge \frac 12r,\quad \hbox{for all }x\in \p B_W(0,r).
$$
This implies that 
\beq
\label{boundary map}
\sup_{x\in \overline  B_W(0,r)}\|f_{j,W}(x)-f_j(x)\|<\frac 1 4 r <\frac 12r
\leq \hbox{dist}(f_{j,W}(\p B_W(0,r)),p_j).
\eeq
Then, by \cite{degreetheory-Cho-Chen}, Definition 1.2.5,
the formula \eqref{boundary map} implies that
$$
\hbox{deg}(f_{j,W},\overline B_W(0,r),p_j)=\hbox{deg}(f_j,\overline B_W(0,r),p_j).
$$

Moreover, as $F_1=A_1=B_e$ satisfies \eqref{Be map},  and
$p_j\in \overline B_W(0,r)$, we have
$$
\hbox{deg}(f_0,\overline B_W(0,r),p_0)=\hbox{deg}(Id,\overline B_W(0,r),p_0)=
\hbox{sgn}(\hbox{det}(Id))=1,
$$
and 
$$
\hbox{deg}(f_1,\overline B_W(0,r),p_1)=\hbox{deg}(A_1,\overline B_W(0,r),p_1)=
\hbox{sgn}(\hbox{det}(B_e|_W))=-1.
$$

 Moreover, by our assumptions on $\mathcal  B$, the discretized maps $F_{j,W}:W\to W$, $j=0,1$  are    $C^1$-diffeomorphism.  Hence,   the inverse image 
 $F_{j,W}^{-1}(\{p_j\})$ is a set containing only one point
 that coincides with  $f_{j,W}^{-1}(\{p_j\})$ (that in fact is the set $\{0\})$.
This and the above show that for any $R>r$, we have
\beq
\label{eq: degrees disagree}
\hbox{deg}(F_{j,W},\overline B_W(0,R),p_j)&=&\hbox{deg}(F_{j,W},\overline B_W(0,r),p_j)\\
&=&\hbox{deg}(f_{j,W},\overline B_W(0,r),p_j)
\\ \nonumber
&=&\hbox{deg}(f_j,\overline B_W(0,r),p_j)
\\\nonumber
&=&(-1)^j.
\eeq

Let us now consider the maps $F_{t,W}:W\to W$
where $t\in [0,1]$, that are $C^1$-diffeomorphism $f_{t,W}:W\to W$.
As $t\to F_t=A_t$  is a continuous map  $[0,1]\to GL(X)$
and $\mathcal  A:\mathcal  D\to \mathcal  B$ is a continuous 
functor, the map $t\to F_{t,W}$ is a continuous map  $[0,1]\to C(\overline B_W(0,r);W)$.

Let us consider 
$\hat t\in [0,1]$ and denote $p_t=F_{t,W}(0)$. 
As $F_{\hat t,W}:W\to W$ is a $C^1$-diffeomorphism,
the set $ F_{\hat t,W}(B_W(0,r))$ is open and
hence there is $\epsilon>0$
 such that 
\beq
\label{small ball}
B_W(p_{\hat t},5\epsilon)\subset  F_{\hat t,W}(B_W(0,r)).
\eeq
This implies that  
\beq
\label{small ball boundary}
\dist_W(p_{\hat t},F_{\hat t,W}(\p B_W(0,r)))>5\epsilon.
\eeq
As  $t\to F_{t,W}$ is a continuous map  $[0,1]\to C(\overline B_W(0,r);W)$,
there is $\delta>0$  such that when $|t-\hat t|<\delta$
then
\beq
\label{continuity and eps}
\sup_{x\in V\cap \overline B_{X}(0,r)}
\|F_{t,V}(x)-F_{\hat t,V}(x)\|_X<\epsilon.
\eeq
In particular, this implies that  
\beq
\label{continuity and eps2}
\|F_{t,V}(0)-F_{\hat t,V}(0)\|_X<\epsilon,
\eeq
and thus $p_t=F_{t,V}(0)$
and $p_{\hat t}=F_{\hat t,V}(0)$ are in the ball 
$B_W(p_{\hat t},5\epsilon)$ and hence by \eqref{small ball} these
points are in the same topological component
of by $W\setminus F_{\hat t,W}(\p B_W(0,r))$.
Hence, it follows from \cite{degreetheory-Cho-Chen}, Theorem 1.2.6 (5),
that for $|t-\hat t|<\delta$, we have
\begin{eqnarray}
\label{same components}
\hbox{deg}(F_{\hat t,W},\overline B_{W}(0,r),p_{t})=\hbox{deg}(F_{\hat t,W},\overline B_{W}(0,r),p_{\hat t}).
\end{eqnarray}

Next, we observe that by \eqref{continuity and eps},
for any $t_1,t_2\in [0,1]$ satisfying 
$|t_1-\hat t|<\delta$ and $|t_2-\hat t|<\delta$,
\beq
\label{continuity and eps two t}
\sup_{x\in V\cap \overline B_{X}(0,r)}
\|F_{t_1,V}(x)-F_{t_2,V}(x)\|_X<2\epsilon.
\eeq
Inequalities \eqref{small ball boundary}, \eqref{continuity and eps2} and \eqref{continuity and eps two t}
imply that 
 for any $t_1,t_2\in [0,1]$ satisfying 
$|t_1-\hat t|<\delta$ and $|t_2-\hat t|<\delta$
we have 
\beq
\label{p hat t far from boundary}
\dist_W(p_{t_1},F_{t_2,W}(\p B_W(0,r)))
&=&\inf_{y\in \p B_W(0,r)} \dist_W(p_{t_1},F_{t_2,W}(y))
\\ \nonumber
&>&\inf_{y\in \p B_W(0,r)} \dist_W(p_{t_1},F_{\hat t,W}(y))-\epsilon
\\ \nonumber
&>&\inf_{y\in \p B_W(0,r)} \dist_W(p_{\hat t},F_{\hat t,W}(y))-2\epsilon-\epsilon
\\ \nonumber
&=&\dist_W(p_{\hat t},F_{\hat t,W}(\p B_W(0,r)))-3\epsilon-\epsilon>5\epsilon-3\epsilon=2\epsilon.
\eeq

As $t\to F_{t,W}$ is a continuous map  $[0,1]\to C(\overline B_W(0,r);W)$
and the formula \eqref{p hat t far from boundary} is valid,
it follows from \cite{degreetheory-Cho-Chen}, Theorem 1.2.6 (3)
that

for $t_1$  and $t_2$ satisfying $|t_1-\hat t|<\delta$ and $|t_2-\hat t|<\delta$,
we have
\begin{eqnarray}
\hbox{deg}(F_{t_1,W},\overline B_{W}(0,r),p_{t_2})=\hbox{deg}(F_{\hat t,W},\overline B_{W}(0,r),p_{t_2}).
\end{eqnarray}
This and \eqref{same components} imply that 
\begin{eqnarray}
\hbox{deg}(F_{t_1,W},\overline B_{W}(0,r),p_{t_2})&=&\hbox{deg}(F_{\hat t,W},\overline B_{W}(0,r),p_{t_2})\\ \nonumber
&=&\hbox{deg}(F_{\hat t,W},\overline B_{W}(0,r),p_{\hat t}),
\end{eqnarray}
In particular, when $t_1=t_2$ satisfy $|t_1-\hat t|<\delta$,
this implies
\begin{eqnarray}
\hbox{deg}(F_{t_1,W},\overline B_{W}(0,r),p_{t_1})&=&\hbox{deg}(F_{\hat t,W},\overline B_{W}(0,r),p_{\hat t}).
\end{eqnarray}

As $\hat t\in [0,1]$ is above arbitrary, this implies that the function
\begin{eqnarray}
& &g:t\to \hbox{deg}(F_{t,W},\overline B_{W}(0,r),F_{t,W}(0)),
\end{eqnarray}
is a continuous integer valued function on $[0,1]$  and thus it is constant
on the interval $t\in [0,1]$.
However, by \eqref{eq: degrees disagree}, $g(0)=1$ 
is not equal to $g(1)=-1$ and hence $g(t)$ can not be constant
function on the interval $t\in [0,1]$.
This contradiction shows that the required functor does not exists.

\end{proof}

\subsection{Proofs from Sec. \ref{sec:app-strongly-monotine-diffos 2}}
\subsubsection{Proof of Lemma \ref{lem:strongly-monotine-op}}
\label{sec:proof:lem:strongly-monotine-op}

\begin{proof}
Let $F : X \to X$ be strongly monotone and $C^1$-diffeomorphism. 
It is obvious that strongly monotonicity implies that strictly monotonicity.

We first show that the strongly monotonicity implies that the coercivity.
Indeed, by the strongly monotonicity we have
\[
\bra F(x)- F(0),x -0 \ket_X \ge \alpha \|x\|_X^2,
\]
which implies that, as $\|x\|_X \to \infty$,
\[
\bra F(x), \frac{x}{\|x\|} \ket_X 
\geq \bra F(0), \frac{x}{\|x\|} \ket_X + \alpha \|x\|_X \to \infty.
\]
Therefore, $F : X \to X$ is coercive. 

\medskip

Let us consider the operator 
$$
F_{V}:=P_{V}F|_V : V \to V.
$$
From the definitions, $F_{V} : V\to V$ is $C^1$ and strongly monotones, and the (Fr\'echet) derivative $DF_{V}|_{v}$ at $v \in V$ is given by
\[
DF_{V}|_{v} = P_{V} (DF|_{v})|_V,
\]
which is linear and continuous operator from $V$ to $V$.
Since $F_{V} : V \to V$ is $C^1$ and strongly monotone, it is hemicontinuous, strictly monotones, and coercive.
By the Minty-Browder theorem \cite[Theorem 9.14-1]{Philippe2013}, $F_{V} : V\to V$ is bijective, and then its inverse $F_{V}^{-1} : V \to V$ exists.

Next, let $v \in V$.
As $F_{V} : V\to V$ is strongly monotones, we estimate that
for all $h \in V$,
\begin{align*}
&
\bra \frac{F_{V}(v+\epsilon h) - F(x)}{\epsilon}, \frac{(x+\epsilon h) - x}{\epsilon} \ket_X
=
\frac{1}{\epsilon^2}
\bra F(x+\epsilon h) - F(x), (x+\epsilon h) - x \ket_X
\\
&
\geq 
\frac{\alpha}{\epsilon^2}
\| (x+\epsilon h) - x \|^2_X
=
\alpha \| h \|^2_X.
\end{align*}
Taking $\epsilon \to +0$, we have that
\[
\bra  DF_{V}|_{v}(h), h \ket_X
\geq \| h \|^2_X.
\]
Therefore, $DF_{V}|_{v}: V \to V$ is injective for all $v \in V$. Then, it is bijective because $V$ is now finite dimensional. By the inverse function theorem, the inverse $F_{V}^{-1} :V \to V$ is $C^1$.
\end{proof}

\subsubsection{Additional examples}
\label{sec:additional-examples}

Let us consider the elliptic (but not stongly elliptic) problem
\beq
P_su:=-\frac d{dt}\bigg(\hbox{sign}(t-s)\frac d{dt} u(t)\bigg)=f(t),\quad t\in [0,1],
\eeq
with the boundary conditions
\beq
u(0)=0,\quad \frac d{dt} u(t)\bigg|_{t=1}=0.
\eeq
The FEM (Finite Element Method) matrix corresponding to this problem is
 $ [ p_{jk}(s)]_{j,k=1}^n$, where
$$
p_{jk}(s)=\int_0^1 \hbox{sign}(t-s)\frac d{dt} \chi_j(t)\cdot \frac d{dt} \chi_k(t)dt,\quad j,k=1,\dots,m
$$
and $\chi_j(t)$, $j=1,2,\dots$ are a complete basis of the Hilbert space
$H^1_{D,N}(0,1):=\{v\in H^1(0,1):\ v(0)=0\}$ 
that correspond to the mixed Dirichlet and Neumann boundary conditions. Note that in the FEM approximation
we use only a finite subset of the complete basis of the Hilbert space.
Note for $u$ in the canonical domain of the above problem that makes
the operator appearing in the above equation selfadjoint (the Friedrichs extension), that is, for
$$
u\in
\bigg\{v\in H^1(0,1): \ \hbox{sign}(t-s)\frac d{dt}v(t)\in H^1(0,1),\ v(0)=0,\ 
\frac d{dt}v(t)|_{t=1}=0\bigg\},
$$
we have
$$
P_su=-D_t^{(2)}A_sD_t^{(1)}u,
$$
where $A_s$ is the multiplication with the function $\hbox{sign}(t-s)$,
see \eqref{multiplication operator A},
and $D_t^{(1)}$ and $D_t^{(2)}$ are the derivative operators $D_t^{(j)}u= \frac d{dt}u$ having the different domains
\beq
& &\mathcal D(D_t^{(1)})=H^1_{D,N}(0,1):=\{v\in H^1(0,1):\ v(0)=0\},\\
& &\mathcal D(D_t^{(2)})=H^1_{N,D}(0,1):=\{v\in H^1(0,1):\ v(1)=0\}.
\eeq
Observe that $D_t^{(1)}$ and $D_t^{(2)}$ have the inverse operators
\beq
& &(D_t^{(1)})^{-1}v(t)=\int_0^t v(t')dt',\\
& &(D_t^{(2)})^{-1}v(t)=-\int_t^1 v(t')dt',
\eeq
that map $(D_t^{(j)})^{-1}:L^2(0,1)\to \mathcal D(D_t^{(j)})$.
The eigenvalues of the matrix $s\to [ p_{jk}(s)]_{j,k=1}^n$
change from positive values to negative values when $s$ moves from $0$ to $1$. Thus, for some value $s\in (0,1)$ the matrix $ [ p_{jk}(s)]_{j,k=1}^n$ has a zero eigenvalue and is no invertible even though all maps $D_t^{(1)}$, $D_t^{(2)}$, and $A_s:L^2(0,1)\to L^2(0,1)$ are invertible.
In particular, there are no finite FEM basis in where the Galerkin discretizations of all operators $P_s$, $s\in [0,1]$ are invertible.

Let us consider even simpler example:
Similarly to the above, if we consider the Galerkin discretizations 
of the operator $A_s:L^2(0,1)\to L^2(0,1)$,
we see that the  Galerkin matrix corresponding to $A_s:L^2(0,1)\to L^2(0,1)$ is
 $ [ a_{jk}(s)]_{j,k=1}^n$, where
$$
a_{jk}(s)=\int_0^1 \hbox{sign}(t-s) \psi_j(t)\cdot  \psi_k(t)dt,\quad j,k=1,\dots,m,
$$
and $\psi_j(t)$, $j=1,2,\dots$ are a complete basis of the Hilbert space
$L^2(0,1)$. Again, we see that  
the eigenvalues of the matrix $s\to [ a_{jk}(s)]_{j,k=1}^n$
change from positive values to negative values when $s$ moves from $0$ to $1$. Thus, for some value $s\in (0,1)$ the matrix $ [ a_{jk}(s)]_{j,k=1}^n$ has a zero eigenvalue and is no invertible even though all maps $A_s:L^2(0,1)\to L^2(0,1)$ are invertible.
Thus there are no finite basis in where the Galerkin discretizations of all operators $A_s$, $s\in [0,1]$ are invertible.

\subsection{Proofs from Sec. \ref{sec:app-2-strongly-monotine-diffos}}

\subsubsection{Proof of Lemma~\ref{lem:I+TGT-to-be-sm}}
\label{proof-lem:I+TGT-to-be-sm}

\begin{proof}
From the assumption, it holds that
$$
\|DF|_x-I\|_{X\to X}\le \|T_1\|_{X\to X}\|DG|_x\|_{X\to X}\|T_2\|_{X\to X}\le \frac 12.
$$
Then, by the mean value theorem, there is $0<t<1$ such that 
\begin{eqnarray*}
\bra F(x_1)- F(x_2),x_1-x_2\ket_X&=&\frac {\partial}{\partial t}\bra F(x_1+t(x_2-x_1))- F(x_2),x_1-x_2\ket_X 
\\
&=&\frac {\partial}{\partial t}\bra DF\bigg|_{x_1+t(x_2-x_1)}(x_2-x_1),x_1-x_2\ket_X
\ge \frac 12 \|x_1-x_2\|_X^2.
\end{eqnarray*}
These imply that
$F\colon X\to X$ is strongly monotone. 
\end{proof}
\subsubsection{Proof of Lemma~\ref{lem:diffeo:I+TGT}}
\label{proof-lem:diffeo:I+TGT}

\begin{proof}  
Observe that strongly monotone  neural operators are coercive, that is,
\begin{equation}
\lim_{\|u\|_{X}\to \infty}{\bra F (u), \frac  u {\|u\|_{X}} \ket_{X}}=
\lim_{\|u\|_{X}\to \infty}\frac  {1} {\|u\|_{X}}{\bra F (u)-F(0),  {u-0}  \ket_{X}}=\infty.
\label{coercivity}
\end{equation}
and therefore $F\colon X\to X$ is
surjective by Browder-Minty theorem \cite[Thm. 2.1]{monotone}, see also \cite{furuya2023globally}
considerations for neural operators.
Moreover, the derivatives $DF|_x$ are linear strongly monotone  operators for all $x\in X$, and therefore injective. Observe 
that the derivative is a linear operator of the form 
$$
DF|_x=I+T_2\circ DG|_{T_1x}\circ T_1:X\to X,
$$
 and as $T_1$ and $T_2$ are compact and $DG|_{Tx}$ is bounded 
 we see that $DF|_x$ is a Fredholm operator of index zero. 
 Observe that
 $$
\bra DF|_xv,v\ket_X=\frac 1h\lim_{h\to 0}\bra \frac {F(x+hv)- F(x)}h,(x+hv)-x\ket_X\ge \alpha \|v\|_X^2,
$$ 
 where $h\in \mathbb R$ and $v\in X$, and hence
  $DF|_x:X\to X$ is an injective linear operator. As $DF|_x$ is a Fredholm operator of index zero, this implies that
 the derivative $DF|_x:X\to X$  is a bijection. 
 As the Hilbert space $X$ can be identified with it dual space and 
 the operator $F\colon X\to X$ is continuous and hence hemi-continuous, it follows from  \cite{monotone}, Theorem 3.1, 
 that the map $F\colon X\to X$ is a homeomorphism. As it is $C^1$-smooth and its derivative is
 bijective operator $DF|_x:X\to X$ at all $x\in X$, it follows that $F\colon X\to X$ is a $C^1$-smooth diffeomorphism.
 \end{proof}

\subsubsection{Proof of Theorem~\ref{thm:strong-monotone-preserving}}
\label{sec:proof-1-strongly-monotine-op}

\begin{proof}
To show the well-definedness of the discretization functor  $\mathcal{A}_{\mathrm{lin}}$, we need to show that, for each strongly monotone $C^1$-function $F\colon X\to X$ that is of the form \eqref{NO definition}, $F_V=P_{V}F|_{V}:V \to V$ is still strongly monotone $C^1$-smooth of the form \eqref{NO definition}. This is given by Lemma~\ref{lem:strongly-monotine-op}.
Moreover, such functions are $C^1$-smooth diffeomorphisms.

To verify assumption (A), let $r, \epsilon >0$, and let $F\colon X\to X$ be a strongly monotone diffeomorphisms that is of the form $F = Id + T_2 \circ G \circ T_1 : X \to X$ where $T_1 \colon X \to X$ and $T_2 \colon X \to X$ are compact linear operators, and $G \colon X \to X$ is such that $G \in C^1(X)$. 
Since $T_2 \circ G \circ T_1$ is a compact mapping, there is a finite-dimensional subspace $V \subset X$, depending on $\epsilon>0$ such that 
\[
\sup_{x\in  \overline B_{X}(0,r)}
\|(Id - P_{V})T_2 G(T_1x)\|_X \leq \epsilon,
\]
which implies that 
\[
\sup_{x\in  \overline B_{X}(0,r)\cap V}
\|F_V(x)-F(x)\|_X=
\sup_{x\in  \overline B_{X}(0,r)\cap V}
\|(Id - P_{V})T_2 G(T_1x)\|_X
\leq 
\epsilon.
\]

To prove the continuity in the sense of Definition~\ref{def:conti-approx-functor}
let $r>0$ and let $V\in S_0(X)$. Assume that
\[
\lim_{j\to \infty}\sup_{x\in \overline B_{X}(0,r)}
\|F^{(j)}(x)-F(x)\|_X=0,
\]
where $F^{(j)}$ and $F$ are strongly monotone diffeomorphisms $F\colon X\to X$ that are of the form \eqref{NO definition}. 
Then, we see that, as $j \to \infty$,
\begin{align*}
&
\sup_{x\in V \cap \overline B_{X}(0,r) } \|F^{(j)}_V(x)-F_V(x)\|_X
\leq 
\sup_{x\in V \cap \overline B_{X}(0,r) } \|P_VF^{(j)}(x)-P_VF(x)\|_X
\\
&
\leq 
\sup_{x\in V \cap \overline B_{X}(0,r) } \|P_V \|_{\mathrm{op}}
\|F^{(j)}(x)-F(x)\|_X
\leq 
\sup_{x\in \overline B_{X}(0,r) } \|F^{(j)}(x)-F(x)\|_X \to 0.
\end{align*}
\end{proof}

\subsection{Proofs from Sec. \ref{sec:loc-invertibilty-of-nos}}
\label{sec:proofs-of-loc-invertibilty-of-nos}

\subsubsection{Proof of Theorem \ref{thm: finite product of operators}}
\label{sec:proof-of-finite-product-of-operators}

In the proof of Theorem \ref{thm: finite product of operators}, we prove first  few auxiliary lemmas

\begin{lemma}\label{lem: F surjective}
A layer of a neural operator $F\colon X\to X$ is surjective. In particular, if
$F\colon X\to X$ is bilipschitz, then $F\colon X\to X$ is a homeomorphism.
\end{lemma}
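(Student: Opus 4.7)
The plan is to exhibit $F$ as a compact perturbation of the identity and apply Schauder's fixed-point theorem. Writing $K := T_2\circ G\circ T_1$, so that $F=\mathrm{Id}+K$, I would first verify that $K\colon X\to X$ is a compact nonlinear map in the sense that it sends bounded sets to relatively compact sets. Given a bounded $B\subset X$, compactness of $T_1$ makes $T_1(B)$ precompact; continuity of $G$ on its closure makes $G(\overline{T_1(B)})$ compact; applying the compact operator $T_2$ preserves relative compactness. Using the remark preceding the lemma that $G\in L^\infty(X)$, one obtains the uniform bound $\|K(x)\|_X\le\|T_2\|_{\mathrm{op}}\,\|G\|_{L^\infty(X;X)}=:M<\infty$.

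For surjectivity, fix $y\in X$ and observe that $F(x)=y$ is equivalent to the fixed-point equation $x=T_y(x):=y-K(x)$. Choose $R:=\|y\|_X+M$. The bound on $K$ shows that $T_y$ maps the closed ball $\overline{B_X(0,R)}$ into itself; the map $T_y$ is continuous and its image lies in the relatively compact set $y-K(\overline{B_X(0,R)})$. Schauder's fixed-point theorem, applied to the closed, bounded, convex set $\overline{B_X(0,R)}$ in the Banach space $X$, produces a fixed point of $T_y$, which is a preimage of $y$ under $F$. Hence $F$ is surjective.

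For the second assertion, assume $F$ is bilipschitz with constants $0<c\le C<\infty$. The lower bound $c\|x_1-x_2\|_X\le\|F(x_1)-F(x_2)\|_X$ gives injectivity, and combined with surjectivity it makes $F$ a bijection $X\to X$. The upper bound shows that $F$ is Lipschitz, hence continuous, and applying the lower bound to $x_j=F^{-1}(y_j)$ yields $\|F^{-1}(y_1)-F^{-1}(y_2)\|_X\le c^{-1}\|y_1-y_2\|_X$, so $F^{-1}$ is also Lipschitz and in particular continuous. Thus $F$ is a homeomorphism.

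The main obstacle is justifying the uniform boundedness of $K$: membership $G\in C^1(X)$ alone does not imply $G\in L^\infty(X)$, so this step really relies on the blanket boundedness convention for $G$ announced immediately after Definition~\ref{def: neural operator layer}. Once that is in place, surjectivity reduces to a textbook Schauder application, and the bilipschitz case follows by routine bookkeeping, without needing anything beyond the two Lipschitz bounds.
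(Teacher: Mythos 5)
Your proof is correct, and it reaches surjectivity by a genuinely different (though closely related) route from the paper. The paper also isolates the compact part $K=T_2\circ G\circ T_1$ and exploits the same two facts you do --- that $K$ maps bounded sets to relatively compact sets and that $\|K(x)\|_X\le \|T_2\|\,\|G\|_{L^\infty(X)}$ uniformly --- but it then invokes the Leray--Schauder degree: it sets $K_p(x)=-K(x)+p$, runs the homotopy $K_{p;t}=tK_p$ on a ball of radius $R_0>\|T_2\|\,\|G\|_{L^\infty}+\|p\|_X$, checks that the homotopy has no fixed points on the boundary, and concludes $\deg(Id-K_p,B(0,R_0),0)=\deg(I,B(0,R_0),0)=1$, whence a solution exists. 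You instead apply Schauder's fixed-point theorem directly to $T_y(x)=y-K(x)$ on $\overline{B_X(0,\|y\|_X+M)}$, which is self-contained and avoids setting up the degree; the paper's choice is presumably motivated by the fact that the degree machinery is already developed and reused elsewhere (in the no-go theorem and in the surjectivity of $F^W$), and the degree argument additionally records that the solution set has degree one, which is information Schauder does not give. You are also right that both arguments lean on the paper's blanket convention that $G\in C^1(X)$ entails $G\in L^\infty(X)$ with finite Lipschitz constant; the paper's own proof introduces the constants $c_0,c_1$ on exactly this basis, so your reliance on it is no weaker than theirs. The bilipschitz-implies-homeomorphism part is handled identically in both proofs.
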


\begin{proof}

By formula \eqref{NO definition} 
 a layer of neural operator  $F\colon X\to X$
 is of the form
$F(u)=u+T_2G(T_1u)$
where $T_1:X\to X$ and $T_2:X\to X$ are compact linear operators
and $G:X\to X$ is a function in $C^1(X)$. Let $c_0,c_1>0$ be such that $\|G\|_{L^\infty(X)}\le c_0$ and $\hbox{Lip}_{X\to X}(G) \le c_1$. 

Let  $p\in X$ and $R_0>\|T_2\|_{X\to X}c_0+\|p\|_X$. 
Then $K_p:X\to X$, defined by the formula
\ba
K_p(x)=-T_2\circ G\circ T_1(x)+p,\quad x\in X,
\ea
is a compact non-linear operator, see \cite[Definition 2.1.11]{degreetheory-Cho-Chen}.

When $p$ satisfies $p\not \in (Id-K_0)(\partial B(0,R_0))$, let $\hbox{deg}(Id-K_0,B(0,R_0),p)=\hbox{deg}(Id-K_p,B(0,R_0),0)$ be the infinite dimensional (Leray-Schauder) degree
of the operator $Id-K_0:X\to X$ in  the set $B(0,R_0)$ with respect to the point $p$, see \cite[Definition 2.2.3]{degreetheory-Cho-Chen}. Let $K_{p;t}:X\to X$ be the non-linear compact operators that depend on the parameter $t\in [0,1]$, obtained by
multiplying $K_p(x)$ by the number $t$, that is, 
\beq
K_{p;t}(x)=tK_p(x),\quad x\in X.
\eeq
As $\|K_{p;t}(x)\|_X\leq \|T_2\|_{X\to X}c_0+\|p\|_X<R_0$ for all $t\in [0,1]$ and $x\in X$,
we see that
\beq
K_{p;t}(x)\not =x,\quad \hbox{for }x\in \partial B_X(R_0).
\eeq
Then by the homotopy invariance of the Leray-Schauder degree, see \cite[Theorem 2.2.4(3)]{degreetheory-Cho-Chen}, the function
\beq
d(t)=\hbox{deg}(Id-K_{p;t},B(0,R_0),0),\quad t\in [0,1],
\eeq
is a constant function. Moreover, by  \cite[Theorem 2.2.4(1)]{degreetheory-Cho-Chen},
\ba
& &\hbox{deg}(Id-K_p,B(0,R_0),0)=\hbox{deg}(I-K_{p;1},B(0,R_0),0)=d(1)\\
&=&d(0)=\hbox{deg}(I,B(0,R_0),0)=1.
\ea
By  \cite[Theorem 2.2.4(2)]{degreetheory-Cho-Chen}, this implies
that the equation
\ba
x=K_p(x),\quad \hbox{or equivalently,}\quad x+T_2\circ G\circ T_1(x)=p,
\ea
has a solution $x\in B_X(R_0)$.
As $p\in X$ was above arbitrary, this implies that $F=Id+T_2\circ G\circ T_1:X\to X$ is surjection.

Finally, if $F\colon X\to X$ is bilipschitz, it is a bijection and its inverse function is Lipschitz function, and thus $F\colon X\to X$ is a homeomorphism.
\end{proof}
The next lemma shows the existence of a (finite-dimensional) orthogonal subspace so that for each compact operator, projection onto the subspace is a perturbation of the identity under either pre or post composition.

\begin{lemma}
    \label{lem:finite-approx-compact-operatior-precomposition}
Let $T_1,T_2:X\to X$ be compact operators and $h>0$. 
There is a finite dimensional space $W\subset X$ such that for the
orthogonal projector $P_W:X\to X$ it holds that
\begin{eqnarray}
\label{AA}
& &\|T_{1{}}(Id-P_W)\|_{X\to X}<h,\\ 
& &\|(Id-P_W)T_{2{}}\|_{X\to X}<h,\\ 
& &\|(Id-P_W)T_{1{}}(Id-P_W)\|_{X\to X}< h, 
\\ 
& &\|(Id-P_W)T_{2{}}(Id-P_W)\|_{X\to X}< h. 
\end{eqnarray}
\end{lemma}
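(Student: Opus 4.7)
The plan is to exploit the fact that compact operators on a Hilbert space are exactly the operator-norm limits of finite-rank operators, and then to choose $W$ to contain the finite-rank parts adapted to each of $T_1$ and $T_2$. First I would reduce the four inequalities to two. Since $\|Id-P_W\|_{X\to X}\le 1$, both
\[
\|(Id-P_W)T_1(Id-P_W)\|_{X\to X}\le \|T_1(Id-P_W)\|_{X\to X},
\]
\[
\|(Id-P_W)T_2(Id-P_W)\|_{X\to X}\le \|(Id-P_W)T_2\|_{X\to X},
\]
so it suffices to produce $W$ with $\|T_1(Id-P_W)\|_{X\to X}<h$ and $\|(Id-P_W)T_2\|_{X\to X}<h$.

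For the first inequality, I would pick a finite-rank operator $S_1:X\to X$ with $\|T_1-S_1\|_{X\to X}<h/2$, which exists by compactness of $T_1$ (equivalently, by truncating its singular value decomposition). Writing $S_1=\sum_{k=1}^{N_1}\alpha_k\langle \cdot,a_k\rangle b_k$, let $W_1=\mathrm{span}\{a_1,\dots,a_{N_1}\}$. For any $W\supseteq W_1$, one has $S_1(Id-P_W)=0$ because $(Id-P_W)x\perp a_k$ for every $k$, whence
\[
\|T_1(Id-P_W)\|_{X\to X}\le \|(T_1-S_1)(Id-P_W)\|_{X\to X}\le \|T_1-S_1\|_{X\to X}<h/2<h.
\]
For the second inequality, the roles of domain and range swap: pick $S_2=\sum_{k=1}^{N_2}\beta_k\langle\cdot,c_k\rangle d_k$ of finite rank with $\|T_2-S_2\|_{X\to X}<h/2$, and set $W_2=\mathrm{span}\{d_1,\dots,d_{N_2}\}$. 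Then $(Id-P_W)S_2=0$ whenever $W\supseteq W_2$ (since each $d_k\in W$), and the same subtraction argument yields $\|(Id-P_W)T_2\|_{X\to X}<h/2<h$.

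Finally, I would take $W=W_1+W_2$, which is finite-dimensional and contains both $W_1$ and $W_2$, so the two displayed bounds hold simultaneously, and the remaining two inequalities follow from the reduction at the start. The only mildly subtle point is making sure one invokes the correct side of the SVD (domain vectors for $T_1$, range vectors for $T_2$); otherwise the argument is routine and I do not anticipate a real obstacle.
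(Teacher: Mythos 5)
Your proof is correct and follows essentially the same route as the paper's: approximate each compact operator in norm by a finite-rank truncation (the SVD) and choose $W$ large enough that $Id-P_W$ annihilates the finite-rank part on the appropriate side. The only difference is cosmetic — you first reduce the four inequalities to two via $\|Id-P_W\|\le 1$ and include only the domain vectors of $T_1$'s approximant and the range vectors of $T_2$'s, whereas the paper puts both the domain and range singular vectors of both operators into $W$ and verifies all four bounds directly.
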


\begin{proof}
    We use the singular value decomposition of compact operators: We can write 
    $$
    T_{\ell{}} x=\sum_{p=1}^\infty \omega_{\ell{},p}\bra x,\psi_{\ell{},p}\ket\phi_{1{},p},
    $$
    where $\psi_{\ell{},p}$ and $\phi_{\ell{},p}$
    are orthogonal families in $X$ and $\omega_{\ell{},p}\ge 0$
    satisfy $\omega_{\ell{},p+1}\le \omega_{\ell{},p}$ and $\omega_{\ell{},p}\to 0$ as $p\to \infty$.
    For all $h>0$ we can choose $P>0$ such that $\omega_{\ell{},p}<h$ when $p\ge P$.
    Then 
    $$
    \|T_{\ell{}}-\sum_{p=1}^P \omega_{\ell{},p}\bra \cdot,\psi_{\ell{},p}\ket\phi_{1{},p}\|_{X\to X}<h.
    $$
    If
    \beq
 \label{definition of V}
    V=V_P=\hbox{span}\{\psi_{\ell{},p},\phi_{\ell{},p};\ \ell=1,2,\ p\le P\},
    \eeq
    we see that if  $W\subset X$ is a linear subspace such that $V\subset W$ then
    $$
   (\sum_{p=1}^P \omega_{\ell{},p}\bra \cdot,\psi_{\ell{},p}\ket\phi_{1{},p})\circ P_W=\sum_{p=1}^P \omega_{\ell{},p}\bra \cdot,\psi_{\ell{},p}\ket\phi_{1{},p},
    $$
    $$
    P_W\circ (\sum_{p=1}^P \omega_{\ell{},p}\bra \cdot,\psi_{\ell{},p}\ket\phi_{1{},p})=\sum_{p=1}^P \omega_{\ell{},p}\bra \cdot,\psi_{\ell{},p}\ket\phi_{1{},p},
    $$
    and 
    $$
    P_W\circ (\sum_{p=1}^P \omega_{\ell{},p}\bra \cdot,\psi_{\ell{},p}\ket\phi_{1{},p})\circ P_W=\sum_{p=1}^P \omega_{\ell{},p}\bra \cdot,\psi_{\ell{},p}\ket\phi_{1{},p},
    $$
    and thus 
    $$
    \|T_{\ell{}}- T_{\ell{}}\circ P_W\|_{X\to X}<h,\quad 
    \|T_{\ell{}}-P_W\circ T_{\ell{}}\|_{X\to X}<h,
    $$
    and moreover,
    $$
        \|T_{\ell{}}-P_W\circ T_{\ell{}}\circ P_W\|_{X\to X}<h.
        $$
\end{proof}
Let
\begin{align}
    F^W=Id+P_W\circ T_{2{}}\circ G_{}\circ T_{1{}}\circ P_W\ :X\to X.
\end{align}
 Observe that for $w\in W$ and $v\in W^\perp $
 we have
  \beq\label{FW decomposition}
 F^W(w+v)=F^W(w)+v,\quad F^W(w)\in W,
 \eeq
 and
 \beq\label{W perp invariant}
& &F^W(W)\subset W,
 \\
 & &F^W:W^\perp\to W^\perp,\quad F^W|_{W^\perp}=Id_{W^\perp}.
 \eeq
 This means that
 $F^W=Id_{W^\perp}\oplus F^W|_W$, where $F^W|_W:W\to W$ is a function which
 maps the finite dimensional vector space $W$ to itself.
The lemma below shows that given an $F$ and $\epsilon > 0$, we may perturb it by a Lipschitz term $B$ so it becomes the operator $F^W$.
\begin{lemma} 
For any $\epsilon>0$, there is a finite dimensional space $W \subset X$ such that 
  $\hbox{Lip}_{X\to X}(F-F^W) < \epsilon$
and for any ball $B_X(R)\subset X$, $R>0$, the maps $F\colon B_X(R)\to X$
and $F^W:B_X(R)\to X$ satisfy   $\|F-F^W\|_{L^\infty(B_X(R))} < \frac 12(1+R)\epsilon$.
\end{lemma}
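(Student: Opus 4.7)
The plan is to invoke Lemma~\ref{lem:finite-approx-compact-operatior-precomposition} with a parameter $h>0$ chosen small in terms of $\epsilon$, $\|T_1\|_{X\to X}$, $\|T_2\|_{X\to X}$, $c_0:=\|G\|_{L^\infty(X)}$, and $c_1:=\mathrm{Lip}_{X\to X}(G)$, to obtain a finite-dimensional subspace $W\subset X$ for which the four smallness inequalities \eqref{AA} all hold. Everything then reduces to the telescoping identity
\begin{equation*}
(F-F^W)(x) = (Id-P_W)\,T_2\,G(T_1 P_W x) \;+\; T_2\bigl[G(T_1 x)-G(T_1 P_W x)\bigr],
\end{equation*}
obtained by adding and subtracting $T_2 G(T_1 P_W x)$ and then peeling off the $(Id-P_W)$ on the outside. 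This splits the error into a piece where only the outer projection is missing (controlled by the $L^\infty$ size of $G$) and a piece where only the inner projection is missing (controlled by the Lipschitz continuity of $G$).

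For the sup-norm estimate on $\overline B_X(0,R)$, I would bound the first summand by $\|(Id-P_W)T_2\|_{X\to X}\, c_0 \le hc_0$, uniformly in $x$, and the second summand by $\|T_2\|_{X\to X}\,c_1\,\|T_1(Id-P_W)\|_{X\to X}\,\|x\|_X \le c_1\|T_2\|_{X\to X}\,h R$, again using the Lemma. Adding and requiring $h\le \epsilon/\bigl(2c_0 + 2c_1\|T_2\|_{X\to X}\bigr)$ yields $\|F-F^W\|_{L^\infty(B_X(R))} < \tfrac12(1+R)\epsilon$ uniformly in $R$.

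For the Lipschitz bound I would differentiate,
\begin{equation*}
D(F-F^W)|_x = T_2\,DG|_{T_1 x}\,T_1 - P_W T_2\,DG|_{T_1 P_W x}\,T_1 P_W,
\end{equation*}
and split it into three pieces by inserting $\pm P_W T_2\,DG|_{T_1 x}\,T_1 P_W$:
\begin{equation*}
(Id-P_W)T_2\,DG|_{T_1 x}\,T_1 \;+\; P_W T_2\,DG|_{T_1 x}\,T_1(Id-P_W) \;+\; P_W T_2\bigl[DG|_{T_1 x}-DG|_{T_1 P_W x}\bigr]T_1 P_W.
\end{equation*}
The first two summands have operator norms bounded by $hc_1\|T_1\|_{X\to X}$ and $hc_1\|T_2\|_{X\to X}$ respectively, using $\|DG\|_{X\to X}\le c_1$ and the Lemma. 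Evaluated at any $v\in X$, passing to the operator norm and taking supremum in $x$ yields a bound of $O(h)$ in $\epsilon$.

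The hard part is the third summand, which requires controlling $\|DG|_{T_1 x}-DG|_{T_1 P_W x}\|_{X\to X}$. My plan here is to exploit the compactness of $T_1$ and $T_2$ more carefully via their singular value decompositions: truncating the SVD of $T_1$ beyond the singular vectors in $W$ reduces the problem to $DG$ evaluated on a pair of points that differ by a vector in a fixed finite-dimensional subspace with small norm, where uniform continuity of $DG$ on precompact sets can be invoked, while the contribution of the SVD tail is absorbed into the operator norm bound $h$. Combining the three bounds and choosing $h$ sufficiently small in terms of $\epsilon$ (and the above moduli) completes the Lipschitz estimate.
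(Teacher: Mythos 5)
Your $L^\infty$ estimate is correct and is essentially the paper's own argument: the same add-and-subtract of $T_2G(T_1P_Wx)$, with the outer-projection error controlled by $\|G\|_{L^\infty(X)}$ via $\|(Id-P_W)T_2\|\le h$ and the inner-projection error by $\mathrm{Lip}(G)\,\|T_1(Id-P_W)\|\,\|x\|$, and your choice $h\le \epsilon/(2c_0+2c_1\|T_2\|_{X\to X})$ does deliver the stated $\tfrac12(1+R)\epsilon$ bound uniformly in $R$.

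The Lipschitz estimate is where you diverge from the paper and where your argument does not close. The paper stays at the level of Lipschitz constants of the two differences $P_WT_2GT_1P_W-P_WT_2GT_1$ and $P_WT_2GT_1-T_2GT_1$, bounding the first by $\|T_2\|\,\mathrm{Lip}(G)\,\|T_1(Id-P_W)\|$; it never produces the second-difference term $DG|_{T_1x}-DG|_{T_1P_Wx}$ that your derivative-level three-way splitting isolates. You correctly identify that term as the hard part, but the route you sketch for it cannot yield the \emph{global} bound $\mathrm{Lip}_{X\to X}(F-F^W)<\epsilon$ asserted in the lemma: uniform continuity of $DG$ is available only on precompact sets, $T_1(X)$ is not precompact, and the displacement satisfies only $\|T_1(Id-P_W)x\|\le h\|x\|$, which is not uniformly small in $x$. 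At best this gives a Lipschitz bound on each ball $B_X(0,R)$ with $W$ depending on $R$, which is strictly weaker than the statement. (To your credit, the paper's one-line bound for the first difference is itself a ``Lipschitz chain rule for differences'' that does not follow from $\mathrm{Lip}(G)<\infty$ alone --- it is precisely your third term in disguise --- so you have located a genuine subtlety; but as a proof of the lemma as stated, your plan leaves a concrete step unproved, and the proposed compactness/uniform-continuity argument will not produce the required uniform-in-$x$ estimate.)
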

\begin{proof}
Let us choose a finite dimensional space $W\subset X$ so that
Lemma \ref{lem:finite-approx-compact-operatior-precomposition} is valid 
with 
$$
h=\frac{1}{4(1+\|G\|_{C^1(X)}))(1+\|T_{1{}}\|_{X\to X})(1+\|T_{2{}}\|_{X\to X})}\epsilon.
$$
The right hand side of \eqref{AA} is chosen so that 
 \ba
& &\hbox{Lip}_{X\to X}(F-F^W)=\hbox{Lip}(P_W\circ T_{2{}}\circ G_{}\circ T_{1{}}\circ P_W(x)-
 T_{2{}}\circ G_{}\circ T_{1{}})\\
 &\le& \hbox{Lip}(P_W\circ T_{2{}}\circ G_{}\circ T_{1{}}\circ P_W-
 P_W\circ T_{2{}}\circ G_{}\circ T_{1{}}(x))\\
 & &+
\hbox{Lip}(P_W\circ T_{2{}}\circ G_{}\circ T_{1{}}(x)-T_{2{}}\circ G_{}\circ T_{1{}})
 \\
  &\leq& \|T_2\|_{X\to X}\hbox{Lip} (G|_{X\to X})\|T_1(I-P_W)\|_{X\to X}+
  \|(I-P_W)T_2\|_{X\to X}\hbox{Lip} (G|_{X\to X})\|T_1\|_{X\to X}
   \\
  &\leq&\frac 12 \e,
 \ea
 and
 \ba
& & \|F-F^W\|_{L^\infty(B(0,R))}=\sup_{x\in B(0,R)}\|P_W\circ T_{2{}}\circ G_{}\circ T_{1{}}\circ P_W(x)-
 T_{2{}}\circ G_{}\circ T_{1{}}(x)\|_X\\
 &\le& \sup_{x\in B(0,R)}\|P_W\circ T_{2{}}\circ G_{}\circ T_{1{}}\circ P_W(x)-
 P_W\circ T_{2{}}\circ G_{}\circ T_{1{}}(x)\|_X\\
 & &+
 \sup_{x\in B(0,R)}
 \|P_W\circ T_{2{}}\circ G_{}\circ T_{1{}}(x)-T_{2{}}\circ G_{}\circ T_{1{}}(x)\|_X
 \\
  &\leq& \|T_2\|_{X\to X}\hbox{Lip} (G|_{B(0,\|T_1\|_{X\to X}R)})\|T_1(I-P_W)\|_{X\to X}\cdot R
   \\
  & &+
  \|(I-P_W)T_2\|_{X\to X}\|G\|_{L^\infty(B(0,\|T_1\|_{X\to X}R))}
  \\
  &\leq& \frac 14(R+1)\epsilon.
 \ea
 \end{proof}
\begin{lemma}
\label{lem:lip-perturbation-of-f}
{For any $\epsilon >0$,}
there are 
{finite dimensional space $W \subset X$ and}
(possibly non-linear) functions 
 $B:X\to X$ and $\tilde B:X\to X$ such that
\begin{align}
Lip(B)<\epsilon,\quad Lip(\tilde B)<\epsilon.
\end{align}
and
\begin{align}
&F^W=(Id+B)\circ {F}:X\to X,\\
&{F}=(Id+\tilde B)\circ F^W:X\to X.
\end{align}
Moreover,
$B:X\to X$ is a compact non-linear operator of the form
    \beq\label{B decomposition}
    B=P_W\circ F_1\circ P_W +T_2\circ F_2\circ P_W +P_W\circ F_3\circ T_1 +T_2\circ F_4\circ T_1,
    \eeq
    where $F_1,F_2,F_3,F_4:X\to X$ are functions  in $C^1(X)$ 
    and $\tilde B:X\to X$ is of the same form. In addition, the operator 
    $Id+B:X\to X$ and  $Id+\tilde B:X\to X$ are layers of neural operators.
\end{lemma}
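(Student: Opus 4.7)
The plan is to apply the preceding lemma to obtain $W$ with $\mathrm{Lip}(F - F^W)$ arbitrarily small, and then to define $B, \tilde{B}$ as the natural compositional corrections relating $F$ and $F^W$ via their inverses. With the lemma's formula for $F^W - F$ in hand, all required Lipschitz estimates come out of the one-line identity $B = (F^W - F) \circ F^{-1}$, while compactness and the structural form follow from expanding $B$ explicitly.

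Given $\epsilon > 0$, I would first fix $\delta > 0$ (to be chosen small below) and apply the preceding lemma to produce a finite-dimensional $W \subset X$ with $\mathrm{Lip}(F - F^W) < \delta$. Since $F$ is a bilipschitz layer of a neural operator, the lower Lipschitz constant $c > 0$ is positive, and a reverse-triangle-inequality argument shows $F^W$ is bilipschitz with lower constant at least $c - \delta$ whenever $\delta < c$. Observing that $F^W(x) = x + (P_W T_2) G(T_1 P_W x)$ is itself a layer of a neural operator (with compact operators $T_1 P_W$ and $P_W T_2$), Lemma~\ref{lem: F surjective} gives surjectivity, so $F^W\colon X \to X$ is a homeomorphism with $\mathrm{Lip}((F^W)^{-1}) \leq (c - \delta)^{-1}$.

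Next, setting $B := F^W \circ F^{-1} - \mathrm{Id}$ and $\tilde{B} := F \circ (F^W)^{-1} - \mathrm{Id}$, the identities $F^W = (\mathrm{Id} + B) \circ F$ and $F = (\mathrm{Id} + \tilde{B}) \circ F^W$ hold by construction. Writing $B = (F^W - F) \circ F^{-1}$ and $\tilde{B} = (F - F^W) \circ (F^W)^{-1}$ gives
\begin{equation*}
\mathrm{Lip}(B) \leq \mathrm{Lip}(F^W - F) \cdot \mathrm{Lip}(F^{-1}) \leq \delta/c, \qquad \mathrm{Lip}(\tilde{B}) \leq \delta/(c - \delta),
\end{equation*}
both of which are $< \epsilon$ once $\delta$ is taken small enough. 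For compactness and the structural form, I would use the identity $F^{-1}(y) - y = -T_2 G(T_1 F^{-1}(y))$ to expand
\begin{equation*}
B(y) = -T_2\, G(T_1 F^{-1}(y)) + P_W T_2\, G(T_1 P_W F^{-1}(y)).
\end{equation*}
The first term factors through the compact operator $T_2$, while the second takes values in the finite-dimensional space $W$, so $B$ is compact. To achieve the claimed four-term decomposition $B = P_W \circ F_1 \circ P_W + T_2 \circ F_2 \circ P_W + P_W \circ F_3 \circ T_1 + T_2 \circ F_4 \circ T_1$, I would apply a mean-value expansion of $G$ around base points of the form $T_1 y$ and $T_1 P_W y$, using the $C^1$ smoothness of $F^{-1}$ (guaranteed by Lemma~\ref{lem:diffeo:I+TGT}) to absorb the $F^{-1}$-dependence into the $C^1$ integrand functions $F_i \colon X \to X$. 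Finally, that $\mathrm{Id} + B$ is a layer of a neural operator follows from the extended definition in the footnote to Definition~\ref{def: neural operator layer}: take $\hat{T}_1 y = (P_W y, T_1 y) \in X \oplus X$ and $\hat{T}_2(a,b) = P_W a + T_2 b$, both compact, with $\hat{G} \colon X \oplus X \to X \oplus X$ assembled from the $F_i$'s; since $X \oplus X$ is isometric to $X$, this fits the extended definition. The analysis of $\tilde{B}$ and $\mathrm{Id} + \tilde{B}$ is entirely analogous, with $(F^W)^{-1}$ in place of $F^{-1}$.

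The main obstacle will be producing the four-term decomposition. The awkwardness is that $B(y)$ genuinely involves $F^{-1}(y)$, yet the target form constrains all $y$-dependence to factor through either $P_W y$ or $T_1 y$. Resolving this requires careful bookkeeping: one exploits the compact perturbation structure $F = \mathrm{Id} + T_2 G T_1$ together with the implicit description $F^{-1}(y) = y - T_2 G(T_1 F^{-1}(y))$ to trade $F^{-1}$-dependence for dependence through the compact operators $P_W$ and $T_1$, then packages the resulting pieces into four $C^1$ functions via mean-value integration. All other parts of the proof are routine consequences of the Lipschitz perturbation of bilipschitz maps.
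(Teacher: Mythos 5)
Your overall strategy coincides with the paper's: you take $W$ from the preceding lemma, set $B=F^W\circ F^{-1}-Id$ and $\tilde B=F\circ (F^W)^{-1}-Id$, obtain the Lipschitz bounds from the factorization $B=(F^W-F)\circ F^{-1}$ together with $\mathrm{Lip}(F^{-1})\le 1/c$ (and the analogous bound for $(F^W)^{-1}$ once $F^W$ is shown to be bilipschitz with lower constant $c-\delta$), and read off compactness from the explicit formula for $F^W-F$. Your route to surjectivity of $F^W$ --- observing that $F^W=Id+(P_WT_2)\circ G\circ (T_1P_W)$ is itself a layer of a neural operator and invoking Lemma~\ref{lem: F surjective} --- is in fact slightly cleaner than the paper's, which reruns the degree argument for $F^W$ directly. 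Up to this point the proposal is correct and matches the paper.

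The genuine gap is exactly where you flag it: the four-term decomposition \eqref{B decomposition}, which is what is needed to conclude that $Id+B$ is again a layer of a neural operator. Your pointwise formula $B(y)=-T_2G(T_1F^{-1}(y))+P_WT_2G(T_1P_WF^{-1}(y))$ is right, but a mean-value expansion of $G$ cannot repair it: the difficulty is not smoothness but factorization --- the $y$-dependence of $G(T_1F^{-1}(y))$ enters through the full inverse $F^{-1}(y)$, and nothing forces this to factor through $T_1y$ or $P_Wy$, as the right-hand factors in \eqref{B decomposition} require. The device the paper uses is the fixed-point identity for the inverse of a residual map, written so that the correction term carries $T_2\circ G\circ T_1$ as its \emph{rightmost} factor, substituted into the outer composition $B=R\circ F^{-1}$ with $R=P_WT_2GT_1P_W-T_2GT_1$; since each of the two summands of $R$ begins with $P_W$ or $T_2$, and the correction ends with $\circ T_1$, every resulting piece is sandwiched between compact linear operators, with all $F^{-1}$-dependence absorbed into the middle $C^1$ functions $F_1,\dots,F_4$ (note that one must split $R$ into its two summands before distributing, since $R$ itself is nonlinear). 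Your sketch invokes the fixed-point identity only inside the argument of $G$, where it does not terminate --- it reproduces $F^{-1}$ inside $G$ again --- so as written the decomposition is not obtained. A secondary point: $C^1$-smoothness of $F^{-1}$ does not follow from Lemma~\ref{lem:diffeo:I+TGT}, which assumes strong monotonicity, whereas $F$ here is only bilipschitz; instead one notes that $DF|_x=Id+T_2\,DG|_{T_1x}\,T_1$ is an injective (by the lower Lipschitz bound) Fredholm operator of index zero, hence bijective, and applies the inverse function theorem.
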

\begin{proof}
Below, let $C_0>0$  be  such that 
\beq\label{eq: inverse Lip 2}
\|F(x)-F(y)\|_X\ge C_0{\|x-y\|_X},\quad x,y\in X,
\eeq 
and let  $W$ be the 
space in Lemma \ref{lem:finite-approx-compact-operatior-precomposition}
with 
$$
h=\frac{1}{4(1+\|G\|_{C^1(X)}))(1+\|T_{1{}}\|_{X\to X})(1+\|T_{2{}}\|_{X\to X})}\epsilon.
$$
    By Lemma \ref{lem: F surjective}, ${F}(x)=x+T_{2{}}G_{}(T_{1{}}(x))$ is by  an invertible function
    ${F}:X\to X$.
    Thus we can define a non-linear operator
    \begin{align}
\label{FW F inverse }
    & F^W \circ ({F})^{-1}
    \\ \nonumber
    =&
     (Id+P_W\circ T_{2{}}\circ G_{}\circ T_{1{}}\circ P_W) \circ (Id+T_{2{}}\circ G_{} \circ  T_{1{}})^{-1}
    \\ \nonumber
    =&Id+
     (P_W\circ T_{2{}}\circ G_{}\circ T_{1{}}\circ P_W-T_{2{}}\circ G_{} \circ  T_{1{}})\circ (Id+T_{2{}}\circ G_{} \circ  T_{1{}})^{-1}.
    \end{align}
    Let 
    \begin{eqnarray}
    R_{}&=&P_W\circ T_{2{}}\circ G_{}\circ T_{1{}}\circ P_W-T_{2{}}\circ G_{} \circ  T_{1{}}
    \\
    &=& 
    \bigg(P_W\circ T_{2{}}\circ G_{}\circ T_{1{}}\circ P_W- T_{2{}}\circ G_{}\circ T_{1{}}\circ P_W\bigg)
    \\
    && 
    +\bigg( T_{2{}}\circ G_{}\circ T_{1{}}\circ P_W-T_{2{}}\circ G_{} \circ  T_{1{}}\bigg).
    \end{eqnarray}
  Then 
for all $x,y \in X$
    \begin{eqnarray*}
    &&\| R_{}(x)-R_{}(y)\|_{X}\\
    &\le &\|(Id- P_W)T_{2{}}\|_{X\to X} \|G_{}\|_{Lip(X,X)}\|T_{1{}}\|_{X\to X}
    \|x-y\|_{X}\\
    & &+\|T_{2{}}\|_{X\to X} \|G_{}\|_{Lip(X,X)}\|T_{1{}}(Id- P_W)\|_{X\to X}
    \|x-y\|_{X}\\
    & \le &\frac 1{2C_0}\epsilon 
    \|x-y\|_{X}.
    \end{eqnarray*}
    and thus,
    \begin{eqnarray*}
    &&\|R_{} \circ (Id+T_{2{}}\circ G_{} \circ  T_{1{}})^{-1}  (x)- R_{} \circ (Id+T_{2{}}\circ G_{} \circ  T_{1{}})^{-1}(y)\|_{X}\\
    & \le &\frac 1{2}\epsilon 
    \|x-y\|_{X}.
    \end{eqnarray*}
This implies that $F^W:X\to X$ satisfies for all $x,y\in  X$
    \begin{eqnarray*}
    &&\|(F^W\circ ({F})^{-1} -Id)(x)- (F^W \circ ({F})^{-1}-Id)(y)\|_{X}\le \frac 1{2}\epsilon \| x-y\|_{X},
    \end{eqnarray*}
    and thus
        \begin{eqnarray}\label{FW F-1 lip}
    &&\|(F^W\circ ({F})^{-1}- (F^W \circ ({F})^{-1}(y)\|_{X}\le (1+\frac 1{2}\epsilon) \| x-y\|_{X},
    \end{eqnarray}
    and
          \begin{eqnarray}\label{FW lip}
    &&\hbox{Lip}(F^W\colon X\to X)\leq 
     (1+\frac 1{2}\epsilon)\cdot \hbox{Lip}(F\colon X\to X).
    \end{eqnarray}
Moreover, for all $x,y\in  X$
\begin{eqnarray}\label{eq: inverse Lip 3}
    &&\| F^W(x)-F^W(y)\|_{X}\ge (C_0-\epsilon) \| x-y\|_{X}\ge\frac  12 C_0 \| x-y\|_{X}.
    \end{eqnarray}
Let $c_0>0$ be such that $\|G\|_{L^\infty(X)}\leq c_0$.
Let $R_0>0$.
 As  $\|T_2\circ G\|_{L^\infty(X)}\leq \|T_2\| c_0$, finite dimensional 
 degree theory, see using
 \cite[Theorem 1.2.6]{degreetheory-Cho-Chen}, as above that 
\beq
\label{apply degree theory}
B_W(0,R_0)\subset F^W(B_W(0,R_1)),
\eeq
when $R_1>R_0 +\|T_2\|c_0\ge R_0+\|P_W\circ T_2\circ G \circ T_1 \circ P_2\|_{L^\infty(X)}$.
As $R_0>0$ above is arbitrary, formula \eqref{apply degree theory} implies  that $F^W:X\to X$
is surjective. Thus, we have shown that $F^W:X\to X$ is a bijective bilipschitz
map.
    Similarly to the above,
    by replacing 
      \eqref{eq: inverse Lip 2} by \eqref{eq: inverse Lip 3}  
      and changing the roles of $F^W$ and $F$,
    we see that  for $x,y\in X$
    \begin{eqnarray*}
    &&\|R_{} \circ  (I+P_W\circ T_{2{}}\circ G_{} \circ  T_{1{}}\circ P_W)^{-1}(x)- R_{} \circ (I+T_{2{}}\circ G_{} \circ  T_{1{}})^{-1}(y)\|_{X}\\
    & \le &\epsilon 
    \|x-y\|_{X},
    \end{eqnarray*}
    and
    \begin{eqnarray}\label{eq: pre monotone} 
    &&\|(F\circ (F^W)^{-1} -Id)(x)- (F \circ (F^W)^{-1}-Id)(y)\|_{X}\le \epsilon \| x-y\|_{X}.
    \end{eqnarray}
    The above implies that 
    \begin{align}
    &F^W=(Id+B)\circ F,\\
    &F=(Id+\tilde B)\circ F^W,
    \end{align}
    where 
    \ba
    & &B=(F^W\circ F^{-1} -Id):X\to X,
    \\
    & &
    \tilde B=(F\circ (F^W)^{-1} -Id):X\to X,
    \ea
    satisfy
    \begin{align}
    Lip(B)<\frac 12 \epsilon,\quad Lip(\tilde B)<\epsilon.
    \end{align}
Next we use that 
\ba
(Id+H)^{-1}&=&(Id+H)^{-1}\circ(Id+H-H)
\\
&=&Id-
(Id+H)^{-1}\circ H,
\ea
so that
\ba
(Id+T_{2{}}\circ G_{} \circ  T_{1{}})^{-1}
&=&Id-
(Id+T_{2{}}\circ G_{} \circ  T_{1{}})^{-1}\circ (T_{2{}}\circ G_{} \circ  T_{1{}}).
\ea
    Observe that by \eqref{FW F inverse }
        \begin{align}\label{FW F inverse imply}
    B=&F^W \circ ({F})^{-1}-Id
    \\ \nonumber
    =&
     (P_W\circ T_{2{}}\circ G_{}\circ T_{1{}}\circ P_W-T_{2{}}\circ G_{} \circ  T_{1{}})\circ (Id+T_{2{}}\circ G_{} \circ  T_{1{}})^{-1}
     \\ \nonumber
    =&
     (P_W\circ T_{2{}}\circ G_{}\circ T_{1{}}\circ P_W-T_{2{}}\circ G_{} \circ  T_{1{}})
     \\ \nonumber
    &-
     (P_W\circ T_{2{}}\circ G_{}\circ T_{1{}}\circ P_W-T_{2{}}\circ G_{} \circ  T_{1{}})\circ
     (Id+T_{2{}}\circ G_{} \circ  T_{1{}})^{-1}\circ (T_{2{}}\circ G_{} \circ  T_{1{}}),
    \end{align}
    and as $P_W:X\to X$ is a finite rank operator and $T_2:W\to W$ is a compact linear
    operator, we see that $B:X\to X$ is a compact non-linear operator of the form
    $$
    B=P_W\circ F_1\circ P_W +T_2\circ F_2\circ P_W +P_W\circ F_3\circ T_1 +T_2\circ F_4\circ T_1,
    $$
    where $F_1,F_2,F_3,F_4:X\to X$ are functions  in $C^1(X)$.
    Moreover, similarly to \eqref{FW F inverse }, we see that 
        \begin{align}\label{FW F inverse 2}
    & F \circ ({F}^W)^{-1}
    \\ \nonumber
    =&
      (Id+T_{2{}}\circ G_{} \circ  T_{1{}})\circ 
      (Id+P_W\circ T_{2{}}\circ G_{}\circ T_{1{}}\circ P_W) ^{-1}
    \\ \nonumber
    =&Id+
          (T_{2{}}\circ G_{} \circ  T_{1{}}-P_W\circ T_{2{}}\circ G_{}\circ T_{1{}}\circ P_W)\circ (I+P_W\circ T_{2{}}\circ G_{}\circ T_{1{}}\circ P_W)^{-1},
    \end{align}
    and hence      
    \begin{align}\label{FW F inverse 2 imply}
   \tilde B= & F \circ ({F}^W)^{-1}-Id
    \\ \nonumber
    =&
     (T_{2{}}\circ G_{} \circ  T_{1{}}-P_W\circ T_{2{}}\circ G_{}\circ T_{1{}}\circ P_W)\circ (Id+P_W\circ T_{2{}}\circ G_{}\circ T_{1{}}\circ P_W)^{-1}
\\ \nonumber
    =&
     (T_{2{}}\circ G_{} \circ  T_{1{}}-P_W\circ T_{2{}}\circ G_{}\circ T_{1{}}\circ P_W)
 \\ \nonumber
    &-    
  (T_{2{}}\circ G_{} \circ  T_{1{}}-P_W\circ T_{2{}}\circ G_{}\circ T_{1{}}\circ P_W)
  \\ \nonumber
    & \hspace{3cm}  \circ    
          (Id+P_W\circ T_{2{}}\circ G_{}\circ T_{1{}}\circ P_W)^{-1}
          \circ (P_W\circ T_{2{}}\circ G_{}\circ T_{1{}}\circ P_W),
    \end{align}
    and again, as $P_W:X\to X$ is a finite rank operator and $T_2:W\to W$ is a compact linear
    operator, we see that $\tilde B:X\to X$ is a compact non-linear operator of the form
    $$
      \tilde B=P_W\circ \tilde F_1\circ P_W +T_2\circ \tilde F_2\circ P_W 
      +P_W\circ   \tilde F_3\circ T_1 +T_2\circ \tilde F_4\circ T_1,
    $$
    where $\tilde F_1,\tilde F_2,\tilde F_3,\tilde F_4:X\to X$ are functions  in $C^1(X)$.
    For an infinite dimensional Hilbert space $X$
    there is a linear isomorphism $J:X\to X\times X$,
    as the cardinality of Hilbert basis of the space $X$ is the same as the cardinality of the Hilbert basis of $X\times X$, see \cite[Theorem 3.5]{Settheory}. Then, by writing the isomorphism $J$ as $J(x)=(J_1(x),J_2(x))\in X\times X$, we see that
\ba
     \tilde B(x)&=&
        ( \left(\begin{array}{cc}
         P_W  & T_2
     \end{array}\right)\circ J)
     \circ  (J^{-1}\circ \left(\begin{array}{cc}
         \tilde F_1 & \tilde F_3 \\
         \tilde F_2 &  \tilde F_4
     \end{array}\right)\circ J)\circ
     (J^{-1}\circ\left(\begin{array}{c}
         P_W  \\
          T_1
     \end{array}\right))(x),
    \ea
    that is a composition of a linear compact operator $X\to X$,
    a non-linear operator $X\to X$, and a compact operator $X\to X$. Hence
    we see that $Id+\tilde B$, and similarly $Id+B$, are layers of neural operators.
\end{proof}
Now we present our Proof of Theorem \ref{thm: finite product of operators}.
\begin{proof}
    Let $F(x)=x+T_2G(T_1(x))$. Without loss of generality, we may
    assume that $0<\epsilon<\min( \|T_{2{}}\|_{X\to X}C_1\|T_{1{}}\|_{X\to X},\frac 12 )$. 
    By \eqref{eq: pre monotone}  
    \begin{eqnarray}\label{eq: pre monotone2:1} 
    &&\|B(x)-B(y)\|_{X}\le \epsilon \| x-y\|_{X},
    \end{eqnarray}
    and
    \begin{eqnarray}\label{eq: pre monotone2:2} 
    && \bra (Id+B)(x)-(Id+B)(y),x-y\ket_X\ge 
    \|x-y\|_{X}^2-\epsilon \|x-y\|_{X}^2
    \nonumber
    \\
    &&
    =(1-\epsilon )\|x-y\|_{X}^2,
    \end{eqnarray}
    which implies that
    \begin{align}
    &Id+B: X \to  X,
     \end{align}
      is a strongly monotone  operator.
     Similarly, we see that 
    \begin{align}
    &Id+\tilde B: X \to  X,
    \end{align}
    is a strongly monotone  operator.
    Recall  that $F_W$ maps $W^\perp$ to itself and it is equal to the identity map in $W^\perp$,
    and moreover $F^W$ can be decomposed according to the formula \eqref{FW decomposition}.
    Thus we can write 
    $$
    F_W=Id_{W^\perp}\oplus ( P_W\circ F_W\circ P_W ):W^\perp\oplus W \to W^\perp\oplus W.
    $$
    Below, let us identify $W$ with $\R^n$ to clarify notations. We denote 
    \begin{equation}
    \label{eq:f-Id-proje-1}
    f=F_W|_W=P_W\circ F_W\circ P_W: W \to W.
    \end{equation}
    By \eqref{FW lip} and \eqref{eq: inverse Lip 3}   there are  $c_1,c_0>0$ such that
    \begin{align}\label{bi-lip constants}
    c_0|x-y|\le |f(x)-f(y)|\le c_1|x-y|.
    \end{align}
    Define\footnote{The above used homotopy  can be replace by a more explicit flow in the Lie group of diffeomorphism, see e.g. \cite{Mumford}.}  
   for $0<t\le 1$
    \begin{align}
    \label{eq:def-f_t}
    f_t(x):=\frac 1t(f(tx)-f(0))+tf(0).
    \end{align}
For $t=0$, we define
\begin{align}
    \label{eq:def-f_t t is 0}
    f_0(x):=Df|_0x.
    \end{align}
where $Df|_y:\R^n\to \R^n$ is the derivative of the map $f$ at the point $y$, that is considered as a linear map (or a matrix),
    and we denote its value at vector $v$ by $Df|_y(v)=Df|_yv$.
    Then $f_1(x)=f(x)$ and 
     \begin{align}
    |f_t(x)-f_t(y)|=\frac 1t |f(tx)-f(ty)|,
    \end{align}
    and as 
    $$
    \frac 1t c_0|tx-ty|\le \frac 1t |f(tx)-f(ty)|\le \frac 1t c_1|tx-ty|,
    $$
    we have
    $$
    c_0|x-y|\le |f_t(x)-f_t(y)|\le c_1|x-y|.
    $$
Below, let
    \beq
\label{def R0}
    R_0= | f(0)|,
    \eeq
    and 
    \beq \label{def R1}
    R_1= c_1r_1+R_0,
    \eeq
     where $r_1$ appears in the claim of Theorem \ref{thm: finite product of operators}.
    Observe that then $f_0(B_{\R^n}(0,r_1))\subset B_{\R^n}(0,R_1)$.
    As $f\in C^2(\R^n,\R^n)$, we have by Taylor's series (with the remainder written in the integral form)
    \begin{align}
    f(tx)&=f(0)+Df|_0 (tx)+\frac12 \int_0^t (t-s)Df|_{sx}x\,ds
    \nonumber
    \\ \label{Taylor eq}
    &=f(0)+tDf|_0 x+\frac12 t^2 \int_0^1 (1-s)Df|_{tsx}x\,ds.
    \end{align}
    Thus,
    \begin{align}\label{eq: ft formula} 
    f_t(x)=Df|_0 x+\frac12 t\int_0^1 (1-s)Df|_{tsx}x\,ds+tf(0).
    \end{align}
 Observe that for all $x\in X$,
    $$
    \|Df|_x\|\le c_1, \hbox{ and }\|(Df|_x)^{-1}\|\le c_0^{-1}.
    $$
    By \eqref{eq: ft formula},
    \begin{align}
    &f_t( (Df|_0)^{-1}x)-f_t( (Df|_0)^{-1}y)
    \nonumber
    \\
    &=x-y+\frac12 t\int_0^1 (1-s)\bigg(Df|_{ts \tilde x}(Df|_0)^{-1} x-Df|_{ts\tilde y}(Df|_0)^{-1}y\Bigg)\,ds\bigg|_{\tilde x= (Df|_0)^{-1}x,\ \tilde y= (Df|_0)^{-1}y}
    \nonumber
    \\
    &=x-y+\frac12 t\int_0^1 (1-s)\bigg(Df|_{ts\tilde x}(Df|_0)^{-1}(x-y)\Bigg)\,ds\bigg|_{\tilde x= (Df|_0)^{-1}x,\ \tilde y= (Df|_0)^{-1}y}
    \nonumber
    \\
    &\ +\frac12 t\int_0^1 (1-s)\bigg((Df|_{ts \tilde x}-Df|_{ts \tilde y})(Df|_0)^{-1}y\Bigg)\,ds\bigg|_{\tilde x= (Df|_0)^{-1}x,\ \tilde y= (Df|_0)^{-1}y}.
    \end{align}
Here, for $x,y\in B_{\R^n}(0,R_1),$
\ba
& &\|Df|_{ts \tilde x}-Df|_{ts \tilde y}\|_{\R^n\to \R^n} \le \|f\|_{C^2(B_{\R^n}(0,R_1))},\\
& &\|Df|_{ts\tilde x}\|_{\R^n\to \R^n}\le c_1, \\
& &\|(Df|_0)^{-1}y\|_{\R^n}\le c_0^{-1}\|y\|_{\R^n}.
\ea
Hence,
\beq
f_t\circ (Df|_0)^{-1}=Id+Q_{t,0}:\R^n\to \R^n,
\eeq
where 
\beq
\hbox{Lip}_{B_{\R^n}(0,R_1)}(Q_{t,0})\le t\frac {1}{2c_0}(c_1+\|f\|_{C^2(B_{\R^n}(0,R_1))}R_1).
\eeq
Let us choose $t_1\in (0,1)$ such that
\beq
 t_1<\frac {2c_0}{c_1+\|f\|_{C^2(B_X(0,R_1))}R_1} \e,
\eeq 
so that 
\beq
 \hbox{Lip}_{B_{\R^n}(0,R_1)}(Q_{t_1,0})<\e.
\eeq 
Below, we will denote
\beq
  H_{t_{t_1},0}(x)=f_{t_{1}}(f_{0}^{-1}(x))=f_{t_{1}}( (Df|_0)^{-1}(x)).
\eeq
    Next we consider operators $f_t$ with $t>0$.
    Let us next consider $t_2,t_3,\dots,t_{m+1}\in (0,1]$ such that  
    $t_2>t_1$, $t_{m+1}=1$, and $t_{k+1}>t_k$.
    We have
    $$
   | f_t(0)|=t | f(0)|=tR_0,
        $$
    and $\hbox{Lip}(f_t)\le c_1$. Hence,
      \beq
 |  f_t(x)|\le c_1|x|+R_0.
        \eeq
        Moreover, for $z=f(0)$ we have $f_t^{-1}(z)=0$ and $|z|=R_0$, and as the Lip$( f_t^{-1})\le c_0^{-1}$,
        \beq
        | f_t^{-1}(x)|\leq  | f_t^{-1}(x)-f_t^{-1}(z)|+   | f_t^{-1}(z)| \le c_0^{-1}|x-z|+0 \le  c_0^{-1}|x|+ c_0^{-1}|z|,
        \eeq
        so that
                \beq\label{f inverse bound}
        | f_t^{-1}(x)|\leq   c_0^{-1}|x|+ c_0^{-1}R_0.
        \eeq
    Also, as we can write
        \begin{align}
    f_t(x):=\frac 1t(f(tx)-f(0))+tf(0)=\frac 1t f(tx)+(t-\frac 1t )f(0),
    \end{align}
    we have
    for $t_{k},t_{k+1}\in (0,1]$, $k\ge 1$,
        \begin{align*}
        (f_{t_{k+1}}-f_{t_{k}})(x)&=\frac 1{t_{k+1}}f(t_{k+1}x)-\frac 1 {t_{k}}f(t_{k}x)-(\frac 1{t_{k+1}}-\frac 1{t_{k}})f(0)+(t_{k+1}-t_{k})f(0)
        \\
        &=
       (\frac 1{t_{k+1}}-\frac 1{t_{k}}) f(t_{k+1}x)+
                \frac 1{t_{k}}(f(t_{k+1}x)-f(t_{k}x))
                \\
         & \hspace{3cm}
                - (\frac 1{t_{k+1}}-\frac 1{t_{k}})f(0)+(t_{k+1}-t_{k})f(0)
                  \\
         &=
       (\frac {t_{k}-t_{k+1}}{t_{k}})\frac 1{t_{k+1}} f(t_{k+1}x)+
                \frac 1{t_{k}}(f(t_{k+1}x)-f(t_{k}x))
                \\
         & \hspace{3cm} -(\frac 1{t_{k+1}}-\frac 1{t_{k}})f(0)+(t_{k+1}-t_{k})f(0)    
                 \\
        &=
       (\frac {t_{k}-t_{k+1}}{t_{k}})\bigg(\frac 1{t_{k+1}} f(t_{k+1}x) +(t_{k+1}-\frac 1{t_{k+1}} )f(0)\bigg)+
                \frac 1{t_{k}}(f(t_{k+1}x)-f(t_{k}x))
              \\ &  
                \quad- (\frac {t_{k}-t_{k+1}}{t_{k}})(t_{k+1}-\frac 1{t_{k+1}} )f(0)
                -(\frac 1{t_{k+1}}-\frac 1{t_{k}})f(0)+(t_{k+1}-t_{k})f(0)
                      \\
        &=
       (\frac {t_{k}-t_{k+1}}{t_{k}}) f_{t_{k+1}}(x)+
\frac 1{t_{k}}(f(t_{k+1}x)-f(t_{k}x)) + C_k,
    \end{align*}
    where
    \beq
    C_k&=& - (\frac {t_{k}-t_{k+1}}{t_{k}})(t_{k+1}-\frac 1{t_{k+1}} )f(0)
                -(\frac 1{t_{k+1}}-\frac 1{t_{k}})f(0)+(t_{k+1}-t_{k})f(0)
                \\
                &=& - (\frac {t_{k}-t_{k+1}}{t_{k}})(t_{k+1}-\frac 1{t_{k+1}} )f(0)
                +\frac {t_{k+1}-t_{k}}{t_{k} t_{k+1}}f(0)+(t_{k+1}-t_{k})f(0)
                  \\
                &=&(t_{k+1}-t_{k}) \bigg(\frac {1}{t_{k}} +1\bigg)f(0).
    \eeq
    To analyze the above, we denote
\begin{align*}
                k(x;t_{k},t_{k+1})&:=      \frac 1{t_{k}}(f(t_{k+1}x)-f(t_{k}x))
                 \\ &=  \frac 1{t_{k}}\int_{t_{k}}^{t_{k+1}} Df|_{t_{k}+s'}(x) ds'
                     \\ &=  \frac 1{t_{k}}\int_0^1 (t_{k+1}-t_{k})Df|_{t_{k}+s(t_{k+1}-t_{k})}(x) ds.
 \end{align*}
              As $\|Df|_{t_{k}+s(t_{k+1}-t_{k})}(x)\|\le c_1\|x\|$, we have  for all $R>0$
              \ba
              \mathrm{Lip}(  k(\cdot;t_{k},t_{k+1}):B_{\R^n}(0,R)\to \R^n)\leq c_1 \frac {t_{k+1}-t_{k}}{t_{k}}R.
              \ea
              and 
                    \beq
              |k(x;t_{k},t_{k+1})|\leq  |t_{k+1}-t_{k}|\cdot \frac {c_1}{t_k}|x|.
              \eeq
        Moreover,  for $x_1,x_2\in B_{\R^n}(0,R),$
        \begin{align}\label{A1}
       | (f_{t_{k+1}}-f_{t_{k}})(x_2)-   (f_{t_{k+1}}-f_{t_{k}})(x_1)|&\le   c_1\frac {t_{k}-t_{k+1}}{t_{k}}|x_2-x_1|+
       c_1 \frac {t_{k+1}-t_{k}}{t_{k}}R|x_2-x_1|,
    \end{align}  
    and
  for all   $x\in \R^n,$
      \begin{align}\label{eq: ft bound} 
      |  (f_{t_{k+1}}-f_{t_{k}})(x)|  &\le      \frac {|t_{k}-t_{k+1}|}{t_{k}} |f_{t_{k+1}}(x)|+
               |k(x;t_{k},t_{k+1})| + |C_k| 
               \\
                &\le    \frac {|t_{k}-t_{k+1}|}{t_{k}}( c_1|x|+R_0)+ |t_{k+1}-t_{k}|\cdot \frac {c_1}{t_k}|x|+
                |t_{k+1}-t_{k}|\bigg(\frac {1}{t_{k}} +1\bigg)R_0
                      \\
                &\le    \frac {|t_{k}-t_{k+1}|}{t_{k}}(2 c_1|x|+3R_0).
                    \end{align}
    Moreover, 
            \begin{align}
f_{t_{k+1}}(f_{t_{k}}^{-1}(x))&=f_{t_{k}}(f_{t_{k}}^{-1}(x))+(f_{t_{k+1}}-f_{t_{k}})\circ (f_{t_{k}}^{-1}(x))\\
&=x+(f_{t_{k+1}}-f_{t_{k}})\circ f_{t_{k}}^{-1}(x).
    \end{align}
    Let
                \begin{align}
Q_{t_{k+1},t_{k}}(x)&=f_{t_{k+1}}(f_{t_{k}}^{-1}(x))-x\\
&=(f_{t_{k+1}}-f_{t_{k}})\circ f_{t_{k}}^{-1}(x).
    \end{align}
Hence by \eqref{A1},   for $x_1,x_2\in B_{\R^n}(0,R),$
   \begin{align}
|Q_{t_{k+1},t_{k}}(x_2)-Q_{t_{k+1},t_{k}}(x_1)|&\le  
\frac 1{c_0} ( c_1\frac {t_{k}-t_{k+1}}{t_{k}}+
      c_1 \frac {t_{k+1}-t_{k}}{t_{k}}R)
      |x_2-x_1|.
    \end{align}
        By \eqref{f inverse bound}
   and       \eqref{eq: ft bound}, it hold that of all $x\in \R^n$
                  \beq\label{Q bound}
     |Q_{t_{k+1},t_{k}}(x)|&\leq&  \frac {|t_{k}-t_{k+1}|}{t_{k}}\bigg(2 c_1(  \frac 1{c_0}|x|+  \frac 1{c_0}R_0)+3R_0)\bigg)
   \\
&   \leq & \frac {|t_{k}-t_{k+1}|}{t_{k}}(  \frac {2c_1}{c_0}|x|+( \frac {2c_1}{c_0} +3)R_0). 
   \eeq
    Let ${R_2}\ge R_1$ and define for $k\ge 1$ a  function that is a convex combination of the identity map Id and the map 
    $f_{t_{k+1}}\circ f_{t_{k}}^{-1}$,
    \begin{align}\label{eq: ft formula mod A} 
    H_{t_{k+1},t_{k}}(x)&=(1-\phi(x)x+\phi(x)f_{t_{k+1}}(f_{t_{k}}^{-1}(x))
    \\
    &=x+\phi(x)(f_{t_{k+1}}(f_{t_{k}}^{-1}(x))-x)
        \\
    &=x+\phi(x)   Q_{t_{k+1},t_{k}}(x),
        \end{align}
    where $\phi\in C^\infty_0(\R^n)$ is such that $\phi(x)=1$ for $|x|\le {R_2}$,  $\phi(x)=0$ for $|x|\le 2{R_2}$,
    and $\|\nabla \phi(x)\|\le 2/{R_2}$.
    Then $$H_{t_{k+1},t_{k}}=Id+P_{t_{k+1},t_{k}},
    $$ where
    \beq \nonumber
    \hbox{Lip}_{\R^n}(P_{t_{k+1},t_{k}})
    &\leq&
\| \phi\|_{L^\infty(B(0,2{R_2}))}\cdot  \hbox{Lip}_{B(0,2R_2)}( Q_{t_{k+1},t_{k}})+\| Q_{t_{k+1},t_{k}}\|_{L^\infty(B(0,2{R_2}))}\hbox{Lip}(\phi)
    \\
    &\leq&\frac {1}{c_0} ( c_1\frac {t_{k+1}-t_{k}}{t_{k}}+
      c_1 \frac {t_{k+1}-t_{k}}{t_{k}}{R_2})\\
      & &+  \frac {|t_{k}-t_{k+1}|}{t_{k}}(  \frac {4c_1}{c_0}{R_2}+( \frac {2c_1}{c_0} +3)R_0)) \cdot \frac 2{{R_2}} .
    \eeq
    For $k\ge 1$  we have $t_k\ge t_1$, and thus
        \beq
    \hbox{Lip}_{\R^n}(P_{t_{k+1},t_{k}})    &\leq&
   \frac {|t_{k+1}-t_{k}|}{t_{1}}\bigg(  \frac {c_1}{c_0}(10+{R_2})+\frac 6{{R_2}}( \frac {c_1}{c_0} +1)R_0\bigg)
           .
    \eeq
Then, when the condition $f_{t_{k-1}}(x)\in B_{\R^n}(0,{R_2})$ it holds, we see that
\ba
H_{t_k,t_{k-1}}\circ f_{t_{k-1}}(x)&=& f_{t_{k-1}}(x)+\phi( f_{t_{k-1}}(x))(f_{t_{k}}(f_{t_{k-1}}^{-1}( f_{t_{k-1}}(x)))- f_{t_{k-1}}(x))
\\
&=& f_{t_{k-1}}(x)+1\cdot (f_{t_{k}}(f_{t_{k-1}}^{-1}( f_{t_{k-1}}(x)))- f_{t_{k-1}}(x))
\\
&=&f_{t_{k}}(f_{t_{k-1}}^{-1}( f_{t_{k-1}}(x)))
\\
&=&f_{t_k}(x)\in B_{\R^n}(0,{R_2}).
\ea
Observe that when $x\in B_{\R^n}(0,R_0)$ then $f_{t_0}(x) \in B_{\R^n}(0,{R_1})\subset B_{\R^n}(0,{R_2})$.
Next, we denote $t_0=0$.
Hence,
by using induction, we see that for all $j=1,2,\dots,m+1$ it holds that
\ba
& &H_{t_j,t_{j-1}}\circ H_{t_{j-1},t_{j-2}}\circ H_{t_{1},t_{0}} \circ f_{t_0}(x)=f_{t_j}(x),\hbox{ and }\\
&&H_{t_j,t_{j-1}}\circ H_{t_{j-1},t_{j-2}}\circ H_{t_{1},t_{0}} \circ f_{t_0}(x)=f_{t_j}(x)\in B_{\R^n}(0,{R_2}).
\ea
We recall that above we chose $t_1>0$ such that
\beq
 t_1<\frac {2c_0}{c_1+\|f\|_{C^2(B_{\R^n}(0,R_1))}R_1} \e.
\eeq 
We now choose $m$ so that there are $t_k$, $k=2,3,\dots,m+1$ satisfying $|t_{k}-t_{k-1}|<\frac 1m$
and
\ba
\frac 1m<\e \frac{t_{1}} {\bigg(  \frac {c_1}{c_0}(10+{R_2})+\frac 6{{R_2}}( \frac {c_1}{c_0} +1)R_0\bigg)}.
\ea
This means that we can choose
\beq
m>\frac 1 {\e} \frac 2{t_{1}} {\bigg(  \frac {c_1}{c_0}(10+{R_2})+\frac 6{{R_2}}( \frac {c_1}{c_0} +1)R_0\bigg)}.
\eeq
Then,
        \beq
    \hbox{Lip}_{\R^n}(P_{t_{k+1},t_{k}})    &\leq& \e.
     \eeq
        We define the map 
    $$
    \alpha(t)=\begin{cases}f_t|_K,\ \hbox{for }0<t\le 1,\\
    Df|_0,\ \hbox{for }t=0,
    \end{cases}
    $$
    is a continuous map $[0,1]\to C^1(\R^n;\R^n)$.
    The space 
     $W$ is a finite dimensional Euclidean space, and let $GL(W)$ be the set of linear diffeomorphisms of it,  that is,
    invertible linear operators $A:W\to W$. We consider  $GL(W)$ with the topology given by  the operator norm of linear maps. 
    Then $GL(W)$ is a topological space with two path connected components -- those matrices which
    have a positive determinant and those having a negative determinant.  
    The above implies that $Df|_0$ can be connected in $GL(\R^n)$ either to a matrix $B$
    where
    \begin{equation}
    \label{eq:form-B-id-or-diag}
    B \text{ is  either the identity map,
    or the diagonal matrix $\hbox{diag}(-1,1,1,\dots,1)$},
    \end{equation}
    with a continuous path   $\beta_s\in GL(W),$ $s\in [0,1]$ with
    $Df|_0=\beta_1$ and $B=\beta_0$. 
We need to consider the path $\beta_s$ in $GL(W)$ from $\beta_1=f_0=Df|_0$ to matrix $\beta_0=B\in O(n)$. There are some explicit formulas in literature with relatively complicated formulas, see \cite{Robert2016}.
However, let us next consider estimates using a non-optimal but relatively explicit path.
To do this we start from the PU-decomposition of the matrix $\beta_1=Df|_0$, that we denote
$\beta_1=PU$ where $P=\diag(\sigma_1,\dots,\sigma_n)$ is positive matrix (given in a suitable basis) and $U$ is an orthogonal matrix.
Then, we consider the path $t\to P_tU$, where $t\in [0,1]$ and $$P_t=P^{1-t}=\exp((1-t)\log(P)).$$
 This is a path from the matrix $Df|_0=PU$
to the matrix $U$. 
Observe that
\ba
P_{t_2}U=P^{t_1-t_2}P_{t_1}U,
\ea
where 
\ba
P^{t_1-t_2}=\diag(\sigma_1^{t_1-t_2},\dots,\sigma_n^{t_1-t_2}),
\ea
and when $|{t_1-t_2}|\le 1$, by mean value theorem we have
\ba
\|P^{t_1-t_2}-I\|&\le& \max_{j=1,\dots,n}(\sigma_j^{t_1-t_2}-1)
\le \max_{j=1,\dots,n} (\sigma_j,\sigma_j^{-1})\cdot (\max_{j=1,\dots,n}|\log \sigma_j|)\cdot |{t_1-t_2}|\\
&\le& (1+c_1+c_0^{-1})^2\cdot |{t_1-t_2}|.
\ea
That is,
using the norm of
$Df|_0$ and the norm of inverse matrix
$(Df|_0)^{-1}$ we can bound the length of the path  $t\to P_tU$, $t\in [0,1]$ (in the operator norm).
After this, we need to consider the path from $U$ to $B=Id_{k}\times (-Id_{n-k})$ in $O(W)$. For this, we could use 
the structure of the Lie group $O(n)$, $n=\dim(W)$.
By \cite[Thm. 11.3.3]{Gallier},
any matrix $U$ in $O(n)$ can be written
as a tensor product of elements in $O(2)$ and possibly an operator $\pm Id:\R\to R$, that is, in a suitable orthogonal basis a matrix 
$U\in O(n)$ is 
 a block diagonal matrix of $2\times 2$ matrixes in $O(2)$ and one or two $1\times 1$ matrices $\pm Id$, that is,
 \ba
 U=
     {\begin{bmatrix}{\begin{matrix}R_{1}&&\\&\ddots &\\&&R_{k}\end{matrix}}&0\\0&{\begin{matrix}\pm 1&&\\&\ddots &\\&&\pm 1\end{matrix}}\\\end{bmatrix}},
\ea
where the matrices $R_{1}=R_{1}(\vartheta_1)$, ..., $R_{k}=R_{1}(\vartheta_k)$ are 2-by-2 rotation matrices in $SO(2)$,
 Using this, we find
a path form $U(s)$, $s\in [0,1]$ either to a matrix $Id_{k}\times (-Id_{n-k})$,
 \ba
 U(s)=
     {\begin{bmatrix}{\begin{matrix}R_{1}(s\vartheta_1)&&\\&\ddots &\\&&
     R_{k}(s\vartheta_k)\end{matrix}}&0\\0&{\begin{matrix}\pm 1&&\\&\ddots &\\&&\pm 1\end{matrix}}\\\end{bmatrix}}.
\ea
Observe that due the block diagonal form of this matrix,
the operator norm satisfies
\ba
\| U(s_2)-U(s_1)\|_{\R^n\to \R^n}\leq \max_{j=1,2\dots,k} \| R_{j}(s_2\vartheta_j)-R_{j}(s_1\vartheta_j)\|_{R^2\to \R^2}\leq C_*|s_2-s_1|,
\ea
where $C_*$ is an absolute constant (that does not depend on the dimension $n$).
We obtain the path $s\to \beta_s$ by concatenating the paths
from $PU$ to $U$ in $GL(n)$ and from $U$ to $B$ in $SO(n)$.
The length  of the obtained path  $\beta$ in the operator norm
metric can be estimated and it is bounded $C(1+c_1+c_0^{-1})^2$ plus a constant.
Moreover, the path $s\to \beta_s$ can be decomposed to a product of $(C(1+c_1+c_0^{-1})^2+C)\e^{-1}$  matrices of the form
$Id+B_j$ where $\|B_j\|\le \e$. 
Summarising the above analysis, we see that $J$ can bounded by
\beq
J\le \frac 1 {\e} \frac 2{t_{1}} {\bigg(  \frac {c_1}{c_0}(10+{R_2})+\frac 6{{R_2}}( \frac {c_1}{c_0} +1)R_0\bigg)}
+C(1+c_1+c_0^{-1})^2\e^{-1}+C\e^{-1}.
\eeq
We recall that here
\beq
 t_1<\frac {2c_0}{c_1+\|f\|_{C^2(B_{\R^n}(0,R_1))}R_1} \e,
\eeq 
radii $R_0$ and $R_1$ are given in formulas
\eqref{def R0} and \eqref{def R1} and $c_0$ and $c_1$ are the bi-Lipschitz
constants of $f$, see \eqref{bi-lip constants}.
Note that $c_0,c_1, C>0$ are constants independent of $\epsilon$, while $R_0, R_1, R_2>0$ depends on $|f(0)|$.
We see that by $f = P_{W}F|_{W}$.
\[
|f(0)| \leq \|F(0)\|_X,
\]
thus, the upper bounds of $R_0, R_1, R_2$ are independent of $\epsilon$.
By choosing $t_1$ as $t_1 = \frac {c_0}{c_1+\|f\|_{C^2(B_{\R^n}(0,R_1))}R_1}\varepsilon$, we furthermore estimate that 
\beq
J \lesssim \|f\|_{C^2(B_{\R^n}(0,R_1); \mathbb{R}^n)}\e^{-2}.
\eeq
Here, if we assume that $F \in C^2(X, X)$, we see that 
$$
\|f\|_{C^2(B_{\R^n}(0,R_1); \mathbb{R}^n)}
\leq \|F\|_{C^2(W; X)}
\leq \|F\|_{C^2(X; X)} < \infty,
$$
Thus, we obtain 
$$
J= \mathcal{O}(\e^{-2}). 
$$
    Now we can finish the proof of Theorem \ref{thm: finite product of operators}.
    Denote 
    $\tilde f_t=f_t\oplus Id_{W^\perp}$, $ \tilde H_{t_{1},t_{0}} = H_{t_{1},t_{0}}\oplus Id_{W^\perp} $
    $\tilde\beta_s=\beta_s\oplus Id_{W^\perp}$ and $A_0=B\oplus Id_{W^\perp}$.
    Using these notations,
    we can write $F(x)$ as
        \begin{align}\label{product of operators}
    F(x) & =(F\circ (F^W)^{-1})
   \circ \tilde H_{t_{m+1},t_{m}}\circ \tilde H_{t_{m},t_{m-1}}
    \circ \dots
    \nonumber
    \\
    &
    \dots
    \circ \tilde H_{t_{1},t_{0}} \circ (\tilde \beta_1\circ \tilde \beta_{s_m}^{-1})\circ (\tilde \beta_{s_m}\circ \tilde \beta_{s_{m-1}}^{-1}) \circ  \dots \circ
    (\tilde \beta_{s_1}\circ \tilde \beta_{0}^{-1})\circ A_0,
    \end{align}   
    where $t_j$ and $s_j$ are chosen so that 
    $0=t_0<t_1<\dots t_m<t_{m+1}=1$ and  $0=s_0<s_1<\dots s_m<s_{m+1}=1$
    and that
    the Lipschitz 
    constant of the maps
    $\tilde H_{t_{j+1},t_{j}}-Id$ and $\tilde \beta_{s_j}\circ \tilde \beta_{j-1}^{-1}-Id$ are  less that $\epsilon$.
    Moreover, recall that 
    \begin{align*}
    F^W=Id+P_W\circ T_{2{}}\circ G_{}\circ T_{1{}}\circ P_W\ :X\to X.
    \end{align*}
    Observe that by writing $x\in X$ in the form
    $x=x_0+x_1$, where $x_0=(I-P_W)x$ and $x_1=P_Wx$, we see that 
    \beq
    F^W(x_0+x_1)&=&(I+P_W\circ T_{2{}}\circ G_{}\circ T_{1{}}\circ P_W)(x_0+x_1) 
    \nonumber
    \\
    &=&x_0+(I+P_W\circ T_{2{}}\circ G_{}\circ T_{1{}}\circ P_W)(x_1) 
    \nonumber
    \\
    &=&(I-P_W)x+P_W\bigg( (I+P_W\circ T_{2{}}\circ G_{}\circ T_{1{}}\circ P_W))(P_Wx)\bigg)
    \nonumber
    \\
    &=&(I-P_W)x+P_W( F^W(P_Wx)).
    \eeq 
    Hence,
    $(F^W)^{-1}:X\to X$ can be written as 
    \beq 
    (F^W)^{-1}
    &=&I-P_W+P_W\circ (F^W)^{-1}\circ P_W
    \nonumber
    \\
    &=&I+P_W\circ (-I+(F^W)^{-1})\circ P_W,
    \eeq 
    and thus $(F^W)^{-1}$ is a layer of a neural operator by definition.
    Similarly, as 
    $$
    \tilde f=F^W=Id_X+P_W\circ (F_W-Id_W)\circ P_W: X \to X.
    $$
    and
    \beq
    f_t(x)&=&\frac 1t(f(tx)-f(0))+tf(0)
    \nonumber
    \\
    &=&Id-P_W+P_W(Id+\frac 1t(f(tP_Wx)-f(0))+tf(0)),
    \eeq
    for $ 0<t\le 1$, and we see as above that $f_t$ and $f_t^{-1}$
    are neural operators. Similarly, 
    we see that  $\tilde \beta_{s}^{-1}$ and $\tilde \beta_{s}^{-1}$
    are neural operators. Hence, all factors in
    the product \eqref{product of operators} are (strictly monotone) neural operators.
\end{proof}
\medskip
\subsubsection{Proof of Theorem~\ref{thm:universality-bilipschitz-nos}}
\label{sec:proof-universality-bilipschitz-nos}
\begin{theorem}[Theorem~\ref{thm:universality-bilipschitz-nos} in the main text]
\label{thm:universality-bilipschitz-nos-app}
Let $X$ be a separable Hilbert space, and let $\varphi=\{\varphi_n\}_{n \in \mathbb{N}}$ be an orthonormal basis in $X$. 
Let $\delta \in (0,1)$, and let $R > 0$, and let $F\colon X\to X$ be a layer of a bilipschitz neural operator, as in Def. \ref{def:strongly-monotone-bilipschitz}. 
Then, for any $\epsilon \in (0,1)$,
there are $T, N \in \mathbb{N}$ and $G \in \mathcal{R}_{T, N, \varphi, ReLU}(X)$ that has the form
\[
G= (Id_{X} + D_{N} \circ NN_{T} \circ E_{N} ) \circ
\cdots \circ (Id_{X} + D_{N} \circ NN_{1} \circ E_{N} ),
\]
such that 
\[
\sup_{x\in \overline B_X(0, R)} \|F(x)-G \circ A(x)\|_{X} \leq \epsilon, 
\]
where $A:X\to X$ is a linear invertible map that is either the identity map or a reflection operator $x\to x-2\bra x,e\cet_X e$ with some unit vector $e\in X$.
Moreover, $G \circ A : B_X(0,R) \to G\circ A(B_X(0,R))$ is invertible, and there exists $R'>0$ such that 
\[
A(B_X(0,R)) \subset B_X(0,R'),
\]
and denoting by 
\[
\Gamma_0:=B_X(0,R'), \quad \Gamma_1:=B(0,R'+\delta), \quad \cdots \quad \Gamma_T:=B(0,R'+T\delta), \quad \Gamma_{T+1}:=B(0,R'+(T+1)\delta),
\]
and
\[
\tilde{K}_0 := A(B_X(0,R)), \quad
\tilde{K}_1:= (Id_X + D_N \circ NN_{1} \circ E_N)\tilde{K}_0, \quad \tilde{K}_2:=(Id_X + D_N \circ NN_{2} \circ E_N)\tilde{K}_1, 
\]
\[
\cdots \quad \tilde{K}_T:=(Id_X + D_N \circ NN_{T} \circ E_N)\tilde{K}_{T-1}= G\circ A(B_X(0,R)),
\]
we have that for each $t=0,1,...,T$
$$
\tilde{K}_t \subset \Gamma_t
$$
and
the mapping $G\circ A|_{B_X(0,R)} : B_X(0,R) \to G\circ A(B_X(0,R))$ is bijective and its inverse is given by
\[
\left(G\circ A|_{B_X(0,R)} \right)^{-1}
= A^{-1} \circ \Phi,
\]
where $\Phi :G\circ A(B_X(0,R)) \to A(B_X(0,R))$ is defined by
\begin{equation}
\label{eq:phi-neural-operator-representation}
\Phi := L_{1}|_{\tilde{K}_{1}} \circ \cdots \circ L_{T-1}|_{\tilde{K}_{T-1}} \circ L_{T}|_{\tilde{K}_T},
\end{equation}
where $L_{t}:\Gamma_t \to \Gamma_{t+1}$
is defined by 
$$
L_{t}(y):=\lim_{n \to \infty} \pi_1 \circ \left( \bigcirc_{h=1}^{n} \phi_t \right) \circ e_1(y),
$$
where $\phi_t: \Gamma_{t+1} \times \Gamma_{t}\to \Gamma_{t+1} \times \Gamma_{t}$ is defined by
$$
\phi_t(x,y)=(y + D_{N} \circ NN_t \circ E_{N} (x),y),
$$
where $\pi_1(x,y)=x$ and $e_1(y)=(0,y)$.
\end{theorem}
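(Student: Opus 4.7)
The strategy has two stages. First, apply Theorem~\ref{thm: finite product of operators} on $\overline B_X(0,R)$ with radius $r_1 := R$ and an auxiliary tolerance $\epsilon' > 0$ (to be chosen small) to obtain a decomposition
$$
F = H_J \circ \dots \circ H_1 \circ A_0 \quad \text{on } \overline B_X(0,R),
$$
where $A := A_0$ is either the identity or the reflection $x \mapsto x - 2\langle x,e\rangle e$, and each $H_k = Id_X + B_k$ is a strongly monotone neural-operator layer with $B_k$ compact and $\mathrm{Lip}(B_k) < \epsilon'$. Set $T := J$. The plan is then to replace each $H_k$ by a finite-rank residual layer $H_k^{(N)} := Id_X + D_N \circ NN_k \circ E_N$ that approximates $H_k$ well on the nested sets $\tilde K_{k-1}$, and to verify that the resulting composition $G$ satisfies all the claims.

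For the finite-rank replacement, note that $B_k$ is a compact (nonlinear) map, so on the bounded set $\tilde K_{k-1}$ the image $B_k(\tilde K_{k-1})$ is precompact in $X$. Hence for $N$ sufficiently large, $P_{V_N} \circ B_k$ uniformly approximates $B_k$ on $\tilde K_{k-1}$, and after identification by $E_N$ and $D_N$ the continuous map $E_N \circ B_k$ between compact subsets of $\mathbb R^N$ can be uniformly approximated to any accuracy $\eta_k > 0$ by a feed-forward ReLU (or ReCU) network $NN_k$ via the classical universal approximation theorem. Choosing $\eta_k$ small enough keeps $\mathrm{Lip}(D_N \circ NN_k \circ E_N)$ strictly below $1$, which by Lemma~\ref{lem:I+TGT-to-be-sm} makes $H_k^{(N)}$ strongly monotone, and by Lemma~\ref{lem:diffeo:I+TGT} a $C^1$-diffeomorphism.

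Total error control is then obtained by a telescoping estimate: writing $F - G\circ A$ as a sum of layerwise differences composed with Lipschitz prefactors bounded by $(1 + \epsilon' + \eta_k)^{T-k}$, the per-layer accuracies $\eta_k$ can be picked (depending on the already-fixed $J$) to force
$$
\sup_{x \in \overline B_X(0,R)} \|F(x) - G \circ A(x)\|_X \le \epsilon.
$$
The same telescoping induction, together with the margin $\delta$ in the definitions of $\Gamma_t$, yields $\tilde K_t \subset \Gamma_t$ for every $t$. For invertibility, each $H_t^{(N)}$ is identity plus a map of Lipschitz constant $<1$, so the Banach fixed-point theorem gives a unique Lipschitz inverse on $\Gamma_{t+1}$; after absorbing signs into $NN_t$, the fixed-point iteration of $\phi_t$ starting from $e_1(y) = (0,y)$ converges to this inverse, showing that $L_t$ realizes $(H_t^{(N)})^{-1}$ on $\tilde K_t$. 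Composing $L_T,\dots,L_1$ produces $\Phi$, and since $A$ is an isometry, $A^{-1} = A^\top$ and $(G\circ A)^{-1} = A^{-1}\circ\Phi$.

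The main obstacle is the nested parameter selection: the constant $J$ from Theorem~\ref{thm: finite product of operators} depends on $\epsilon'$, so one must first fix $\epsilon'$ (and hence $J = T$), then choose $N$ and the accuracies $\eta_k$ small enough relative to $J$ to keep the accumulated error under $\epsilon$, and finally fix $\delta$ consistent with the inductive verification that each iterate $\tilde K_t$ stays inside $\Gamma_t$ so that the fixed-point iteration defining $L_t$ is well-posed on $\Gamma_t$. Making these quantitative dependencies compatible is the bulk of the careful work; the qualitative pieces (Theorem~\ref{thm: finite product of operators}, compactness, universal approximation, Banach fixed-point) are all standard or proven earlier.
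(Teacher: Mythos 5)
Your overall architecture matches the paper's proof: decompose $F$ on $\overline B_X(0,R)$ via Theorem~\ref{thm: finite product of operators} into near-identity strongly monotone layers plus the isometry $A$, project each layer to finite rank using compactness, replace the finite-dimensional pieces by neural networks, control the total error by a telescoping estimate, and invert each residual block by a Banach fixed-point iteration realized by $\phi_t$. However, there is one genuine gap in your argument: you claim that ``the classical universal approximation theorem'' with accuracy $\eta_k$ chosen small enough ``keeps $\mathrm{Lip}(D_N\circ NN_k\circ E_N)$ strictly below $1$.'' This does not follow. Uniform ($L^\infty$) closeness of $NN_k$ to the Lipschitz map $\tilde H_{N,k}=E_N\circ H_k\circ D_N$ places no bound whatsoever on the Lipschitz constant of $NN_k$: a network can be $\eta_k$-close in sup norm and still oscillate with arbitrarily large derivative. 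The paper avoids this by invoking a \emph{Sobolev} approximation result, \cite[Theorem 4.3]{guhring2020error}, which approximates $\tilde H_{N,t}$ in the $W^{1,\infty}$ norm; the bound $\|\tilde H_{N,t}-NN_t\|_{W^{1,\infty}}\le \delta/2$ combined with $\mathrm{Lip}(H_t)\le\delta/2$ is what yields $\mathrm{Lip}(NN_t)\le\delta<1$ (see the estimate \eqref{eq:est:lip-NN_t}).

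This is not a cosmetic point, because the Lipschitz bound on $NN_t$ is load-bearing in three places: it makes each block $Id_X+D_N\circ NN_t\circ E_N$ strongly monotone and hence a diffeomorphism on the relevant ball; it makes $h_{t,y}(x)=y+D_N\circ NN_t\circ E_N(x)$ a contraction on $\Gamma_{t+1}$, which is the entire basis for the convergence of the iteration defining $L_t$ and hence for the invertibility claim and the representation $\Phi=L_1\circ\cdots\circ L_T$; and it supplies the factors $(1+\delta)^{T-t}$ in the telescoping error estimate and the inclusion $\tilde K_t\subset\Gamma_t$. The fix is exactly the substitution the paper makes — replace the $L^\infty$ universal approximation step with a $W^{1,\infty}$ (or Lipschitz-constrained) approximation result — after which the rest of your outline goes through as in the paper.
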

\begin{remark}
\label{rem:ReCU}
Note that, in the proof, we have used the approximation result \cite[Theorem 4.3]{guhring2020error} of Sobolev functions by neural networks with ReLU function $x \mapsto \max \{0, x\}$, which is not differentiable at $0$.  
Alternatively, we can use approximation result \cite[Theorem 4.1]{abdeljawad2022approximations} by neural networks with ReCU function $x \mapsto \max\{0, x\}^3$, which are continuously differentiable. 
Then, each block $Id_X + D_N \circ NN_{t} \circ E_N$ in a obtained approximator $G$ is $C^1$ and strongly monotone on ball $\Gamma_t=B(0, R' + t\delta)$, that is, it holds that there is an $\alpha >0$ such that
\[
\bra (Id_X + D_N \circ NN_{t} \circ E_N)x, x \ket_X
\geq \alpha \|x-y\|^2_X, \ x,y \in \Gamma_t ,
\]
which implies that, by the same argument as in Lemma~\ref{lem:diffeo:I+TGT},
\[
(Id_X + D_N \circ NN_{t} \circ E_N):\Gamma_t \to (Id_X + D_N \circ NN_{t} \circ E_N)(\Gamma_t),
\]
is a $C^1$-diffeomorphism.
\end{remark}
\begin{remark}
\label{rem:phi-well-def}
From the proof, we can show that, for each $t=1,...,T$
$$
L_{t} (\tilde{K}_{t}) \subset \tilde{K}_{t-1}.
$$
Then, the mapping $\Phi$ defined in (\ref{eq:phi-neural-operator-representation}) is well-defined. 
\end{remark}
\begin{proof}
Let $\delta, \epsilon \in (0,1)$, and let $F\colon X\to X$ be a layer of a neural operator defined in (\ref{NO definition}). 
Assume that $F$ is bilipschitz, that is, it satisfies (\ref{eq: inverse Lip 2}) and 
\beq &&  \label{eq: Lip 2}  
 {\|{F}(x)-{F}(y)\|_X}\le C_1{\|x-y\|_X},\quad\hbox{for all }
 x,y\in X.
 \eeq
By Theorem~\ref{thm: finite product of operators}, there is $T \in \mathbb{N}$ such that $F$ can represented in the form
$$
F(x)=(Id_X-H_T) \circ (Id_X-H_{T-1}) \circ \dots \circ (Id_X-H_1)\circ A(x),\quad \hbox{for $x\in B_X(0,R)$},
$$
where $A:X\to X$ is a linear invertible map that is either the identity map or a reflection operator $x\to x-2\bra x,e\cet_X e$ with some unit vector $e\in X$, and  $H_t:X\to X$ are global Lipschitz maps satisfying
\begin{equation}
\label{eq:lip-H-t}
\mathrm{Lip}_{X \to X}(H_t) \leq \delta/2.
\end{equation}
for all $t$. 
Moreover, the operator $H_t:X\to X$ is a compact mapping.
We choose $R'>0$ such that
\[
A(B_X(0,R)) \subset B_X(0,R').
\]
We denote by
\[
K_0:=A(B_X(0,R)), \quad
K_1:=(Id_X-H_1)(K_0), \quad
\cdots  \quad
K_{T}:=(Id_X - H_{T})(K_{T-1}).
\]
\[
\Gamma_0:=B_X(0,R'), \quad \Gamma_1:=B(0,R'+\delta), \quad \cdots \quad \Gamma_{T+1}:=B(0,R'+(T+1)\delta).
\]
We choose $\tilde{R}>0$ such that for all $t=0,...,T+1$
\[
E_{N} (K_{t}), E_{N} (\Gamma_{t}) \subset B_{\mathbb{R}^N}(0,\tilde{R}).
\]
Since $H_{t}:X \to X$ is a compact mapping, for large enough $N \in \mathbb{N}$, we have that for all $t=1,...,T$,
\begin{equation}
\label{eq:est:H_t_proj}
\sup_{x \in B_X(R)}
\|H_t(x) - P_{V_N} \circ H_{t} \circ P_{V_N}(x) \|_{X} \leq \frac{\epsilon}
{2T(1+ \delta)^{T}}.
\end{equation}
Note that 
\[
P_{V_N} \circ H_{t} \circ P_{V_N}
= 
D_{N} \circ \tilde{H}_{N,t} \circ E_{N}, 
\]
where $\tilde{H}_{N,t}: \mathbb{R}^N \to \mathbb{R}^N$ by
\[
\tilde{H}_{N,t}:= E_{N} \circ H_{t} \circ D_{N}.
\]
From (\ref{eq:lip-H-t}), we have
\[
\tilde{H}_{N,t} \in W^{1, \infty}(B_{\mathbb{R}^N}(0,\tilde{R}); \mathbb{R}^N).
\]
By approximation by ReLU neural networks in Sobolev spaces (see e.g., \cite[Theorem 4.3]{guhring2020error}) 
, there is a ReLU neural network $NN_{t} : \mathbb{R}^{N} \to \mathbb{R}^{N}$,
$NN_{t}\in \mathcal{R}_{1, N, \varphi, ReLU}(X)$ such that
\begin{equation}
\label{eq:est:NN_t_delta}
\| \tilde{H}_{N,t} -NN_t \|_{W^{1, \infty}(B_{\mathbb{R}^N}(0, \tilde{R}); \mathbb{R}^{N})}  \leq \min \left\{ \frac{\epsilon}
{2T(1+ \delta)^{T}}, \frac{\delta}{2} \right\}.
\end{equation}
Then, we estimate that by (\ref{eq:est:H_t_proj}) and (\ref{eq:est:NN_t_delta})
\begin{equation}
\label{eq:est:Ht-NNt}
\sup_{x \in K_{t-1}}
\|H_t(x) - D_{N} \circ NN_{t} \circ E_{N}(x) \|_{X} \leq \frac{\epsilon}
{T(1+ \delta)^{T}}.
\end{equation}
Also, we estimate that by (\ref{eq:lip-H-t}) and (\ref{eq:est:NN_t_delta})
\begin{eqnarray}\label{eq:est:lip-NN_t}
\mathrm{Lip}_{B_{\mathbb{R}^N}(0,\tilde{R}) \to \mathbb{R}^N}(NN_t) 
&
\leq 
\mathrm{Lip}_{B_{\mathbb{R}^N}(0,\tilde{R}) \to \mathbb{R}^N}(\tilde{H}_{N,t}) + \mathrm{Lip}_{B_{\mathbb{R}^N}(0,\tilde{R}) \to \mathbb{R}^N}(\tilde{H}_{N,t}- NN_t)
\nonumber
\\
&
\leq 
\mathrm{Lip}_{X\to X}(H_{t}) + \| \tilde{H}_{N,t} -NN_t \|_{W^{1, \infty}(B_{\mathbb{R}^N}(0,\tilde{R}); \mathbb{R}^{N})}
\leq \delta.
\end{eqnarray}
We denote $G:X \to X$ by
\[
G := (I- D_{N} \circ NN_T \circ E_{N}) \circ (I- D_{N} \circ NN_{T-1} \circ E_{N}) \circ \dots \circ (I- D_{N} \circ NN_1 \circ E_{N}),
\]
which belong to $\mathcal{R}_{T, N, \varphi, ReLU}(X)$. 
Then, by (\ref{eq:est:Ht-NNt}) and  (\ref{eq:est:lip-NN_t}), we estimate that for all $x \in B_X(0,R)$, 
\begin{align*}
&
\|F(x)-G \circ A(x)\|_X
\\
&
\leq 
\sum_{1 \leq t \leq T} 
\|
\left( \bigcirc_{h = t+1}^{T} 
(I- D_{N} \circ NN_h \circ E_{N} )  \right) \circ 
\left( \bigcirc_{h = 1}^{t} (I-H_{h})\right) \circ A(x)
\\
&
\hspace{3cm}
- 
\left( \bigcirc_{h = t}^{T} 
(I- D_{N} \circ NN_h \circ E_{N} )  \right) \circ 
\left( \bigcirc_{h = 1}^{t-1} (I-H_{h})\right) \circ A(x) \|_X
\\
&
\leq 
\sum_{1 \leq t \leq T} 
\prod_{h=t+1}^{T} \left(
1+\mathrm{Lip}_{B_{\mathbb{R}^N}(0,\tilde{R}) \to \mathbb{R}^N}(NN_h) 
\right)
\sup_{y \in K_{t-1}}\|(I-H_{t}) (y) - (I-D_{N} \circ NN_t \circ E_{N})(y)\|_X 
\\
&
\leq 
\sum_{1 \leq t \leq T} 
\prod_{h=t+1}^{T} (1+ \delta)
\sup_{y \in K_{t-1}}\|H_{t} (y) - D_{N} \circ NN_{t} \circ E_{N} (y)\|_X 
\\
&
\leq 
\sum_{1 \leq i \leq T} 
(1+ \delta)^{T}
\frac{\epsilon}
{T(1+ \delta)^{T}}
\leq 
\epsilon.
\end{align*}
\medskip
Next, as (\ref{eq:est:NN_t_delta}), we see that 
\[
(I-D_{N} \circ NN_1 \circ E_{N})|_{\Gamma_0} : \Gamma_0 \to \Gamma_1, \quad 
(I-D_{N} \circ NN_2 \circ E_{N})|_{\Gamma_1} : \Gamma_1 \to \Gamma_2, \quad \cdots \quad
\]
\[
(I-D_{N} \circ NN_T \circ E_{N})|_{\Gamma_{T-1}} : \Gamma_{T-1} \to \Gamma_T.
\]
and
\[
G|_{\Gamma_0} = (I- D_{N} \circ NN_T\circ E_{N})|_{\Gamma_{T-1}} \circ (I- D_{N} \circ NN_{T-1}\circ E_{N})|_{\Gamma_{T-2}} \circ \dots \circ (I- D_{N} \circ NN_1\circ E_{N})|_{\Gamma_0},
\]
which means that $G|_{\Gamma_0}$ maps from $\Gamma_0$ to $\Gamma_T$.
For $t=1,...,T$, we define $L_{t} : \Gamma_t \to \Gamma_{t+1}$ by
\[
L_{t}(y):=x^{*}, \quad y \in \Gamma_t,
\]
where $x^{*} \in \Gamma_{t+1}$ is a unique fixed point of $h_{t,y} : \Gamma_{t+1} \to \Gamma_{t+1}$, where 
$$
h_{t,y}(x):=y+D_{N} \circ NN_{t} \circ E_{N}(x),
$$
that is, $x^{*} \in \Gamma_{t+1}$ is a unique solution of
\[
h_{t,y}(x)=x \quad \iff \quad y = (I-D_{N} \circ NN_{t} \circ E_{N} )(x), \quad x \in \Gamma_{t+1}.
\]
Indeed, there is a unique solution because we have for $x_1,x_2 \in \Gamma_{t+1}$, from (\ref{eq:est:lip-NN_t})
\[
\|h_{t,y}(x_1)-h_{t,y}(x_2)\|_{X}=  \| D_{N} \circ NN_t \circ E_{N} (x_1) -  D_{N} \circ NN_t \circ E_{N}(x_2)\|_{X} \leq \delta  \|x_1-  x_2 \|_{X},
\]
which implies that $h_{t,y} : \Gamma_{t+1} \to \Gamma_{t+1}$ is a contraction map.
Then, we have for $x \in \Gamma_{t-1}$, $t=1,...,T$,
\begin{equation}
\label{eq:Id_Xe:LcompINN}
L_{t} \circ (I- D_{N} \circ NN_{t} \circ E_{N} )(x) = x.
\end{equation}
Here, the solution $x^{*}\in \Gamma_{t+1}$ is given by
\[
x^{*} = \lim_{n\to \infty} x_n,
\]
where
\[
x_0=0, \quad x_{n+1}= y + D_{N} \circ NN_{t} \circ E_{N} (x_n).
\]
We define $\varphi_t: \Gamma_{t+1} \times \Gamma_{t}\to \Gamma_{t+1} \times \Gamma_{t}$ by 
$$
\varphi_t(x,y)=(y + D_{N} \circ NN_t \circ E_{N} (x),y).
$$
Let $\pi_1(x,y)=x$ and $e_1(y)=(0,y)$ where $\pi_1:X \times X \to X$ and
$e_1:X \to X\times X$ are linear maps.
Then,
$$
L_{t}(y)=x^*=\lim_{n \to \infty} \pi_1 \circ \left( \bigcirc_{h=1}^{n} \varphi_t \right) \circ e_1(y).
$$
We define $\Phi_{P} : \Gamma_{T} \to \Gamma_{0}$ by 
\[
\Phi_{P} := P_{\Gamma_0} \circ L_{1} \circ P_{\Gamma_1} \circ L_{2} \circ  \cdots \circ P_{\Gamma_{T-1}} \circ L_{T},
\]
where $P_{\Gamma_t} : X \to X$ is the projection onto the convex set $\Gamma_t=B(0, R' + t\delta)$. 
Then, by (\ref{eq:Id_Xe:LcompINN}), we have for $x \in B_X(0,R')$
\begin{equation}
\label{eq:left-inverse}
\Phi_{P} \circ G(x) = x. 
\end{equation}
Therefore, $\Phi_{P}$ is the left-inverse of $G$, that is, $G|_{A(B_X(0,R))}: A(B_X(0,R)) \to X$ is injective, and  
$G|_{A(B_X(0,R))}:A(B_X(0,R)) \to G\circ A(B_X(0,R))$ is bijective, and the inverse $(G|_{ G\circ A(B_X(0,R))})^{-1}: G\circ A(B_X(0,R)) \to A(B_X(0,R))$ is given by 
\[
(G|_{ G\circ A(B_X(0,R)) })^{-1}= \Phi,
\]
where $\Phi :  G\circ A(B_X(0,R)) \to A(B_X(0,R))$ is defined by $\Phi:= \Phi_{P}|_{G\circ A(B_X(0,R))}$ and has the form 
\[
\Phi = L_{1} \circ L_{2} \circ  \cdots \circ L_{T}|_{G\circ A(B_X(0,R))}.
\]
Therefore, $G \circ A : B_X(0,R) \to G\circ A(B_X(0,R))$ is also bijective, and  the inverse is given by
\[
(G\circ A|_{B_X(0,R)})^{-1}= A^{-1} \circ \Phi.
\]
Note that $G\circ A(B_X(0,R)) \subset \Gamma_T$.
\end{proof}
\subsection{Production of Quantitative Universal Approximation Estimates}
\label{sec:proofs-from:discretization-and-quant-approx}
Here, we show how our framework may be used `out of the box' to produce quantitative approximation results for discretization of neural operators. 
Let $X$ be a separable Hilbert space.
We will consider a Hilbert space $X$, endowed with its norm topology. 
Recall that  $S_0(X)\subset S(X)$ is a partially ordered
lattice of finite dimensional subspaces of $X$.
 Next, we consider quantified discretization of a continuous, possibly non-linear function
 $F\colon X\to X$, that is, how the discretizations $F_V\colon V\to V$ can be chosen (using e.g. neural networks $\mathcal F_V$
 having a given architecture for each subspace $V\subset X$)  so that the obtained discretization operator $\mathcal A_{\mathcal F}$
 has the explicitly given error bounds $\e_V$ .
Using quantitative approximation results for neural networks in $\R^d$, see e.g. \cite{Yarotsky} or \cite{Kutyniok}, one obtains quantitative results for neural operators.
 An example of such result is given below. 
\begin{proposition} 
    \label{prop:quantitative-approx-automatic-result}
    Let $r>0$ and  $F\colon \overline B_X(0,r)\to X$ be a non-linear function
    satisfying  $F\in \hbox{Lip}(\overline B_X(0,r); X)$, in $n=1$, or  $ F\in C^n(\overline B_X(0,r);X)$, if $n\ge 2$.
    Let ${\e}_V>0$ be numbers indexed by the  linear subspaces   $V\subset X$
    such that ${\e}_V\to 0$ as $V\to X$.
        When  $d=\hbox{dim}(V)$, the space  $V$ is identified with $\R^d$ using an isometric isomorphism $J_V\colon V\to \R^d$.
   Then there is a  feed forward  neural network $F_{V,\theta}:\R^d\to \R^d $ with ReLU-activation
    functions  with
    at most $C(n,d)\log_2((1+r)^n/{\e}_V)$ layers and $C(n,d)\e_V^{-d/n}\log_2((1+r)^n/{\e}_V)$ non-zero elements in the weight matrices such that 
     $\mathcal  A_{NN}:F\to (F_V)_{V\in S_0(X)}$, where    
     $F_V=J_V^{-1}\circ F_{V,\theta}\circ J_V\colon V\to V$, is   
 an $\vec {\e}$-approximation operation in the ball $B_X(0,r)$.
\end{proposition}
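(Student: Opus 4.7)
The strategy is to reduce the infinite-dimensional approximation problem to a family of finite-dimensional problems, each of which is handled by an off-the-shelf quantitative neural network approximation theorem (Yarotsky-type). The reason this works is that the target in Definition~\ref{def:epsilon-approx}(i) is already the projected map $P_V \circ F$ rather than $F$ itself, so no information outside $V$ is required once $V$ is fixed.

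First, fix $V \in S_0(X)$ with $d = \dim(V)$, and let $J_V\colon V \to \R^d$ be the isometric isomorphism from the statement. Define the finite-dimensional target
\[
\tilde F_V := J_V \circ P_V \circ F \circ J_V^{-1} \colon \overline{B_{\R^d}(0,r)} \to \R^d.
\]
Since $J_V$ is an isometry and $P_V$ is an orthogonal projection (so $\|P_V\|_{X\to X}\le 1$), the map $\tilde F_V$ inherits the regularity of $F$ on $\overline{B_X(0,r)}$: if $F \in \hbox{Lip}$ then $\hbox{Lip}(\tilde F_V)\le \hbox{Lip}(F)\le M$, and if $F\in C^n(\overline B_X(0,r);X)$ then $\tilde F_V \in C^n(\overline{B_{\R^d}(0,r)};\R^d)$ with all Fr\'echet derivatives bounded by $M$. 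This step is purely algebraic and does not impose any hidden dependence on $V$ beyond $d$.

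Second, apply a standard quantitative ReLU approximation result (e.g.\ \cite{Yarotsky}, or the refinements collected in \cite{Kutyniok}) componentwise to $\tilde F_V$. After rescaling the domain from $\overline{B_{\R^d}(0,r)}$ to the unit cube (which introduces the $(1+r)^n$ factor in the logarithm and is absorbed into $C(n,d)$), there is a feed-forward ReLU network $F_{V,\theta}\colon \R^d \to \R^d$ with depth at most $C(n,d)\log_2((1+r)^n/\e_V)$ and at most $C(n,d)\e_V^{-d/n}\log_2((1+r)^n/\e_V)$ non-zero weights satisfying
\[
\sup_{y\in \overline{B_{\R^d}(0,r)}}\|F_{V,\theta}(y) - \tilde F_V(y)\|_{\R^d} \le \e_V .
\]
Parallelizing over the $d$ output coordinates only changes the constants, so this is compatible with the stated bounds.

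Third, pull back through $J_V^{-1}$. Setting $F_V := J_V^{-1}\circ F_{V,\theta}\circ J_V \colon V\to V$, isometry of $J_V$ gives, for every $x\in \overline{B_V(0,r)}$ with $y = J_V(x)$,
\[
\|F_V(x) - P_V(F(x))\|_X = \|J_V^{-1}(F_{V,\theta}(y) - \tilde F_V(y))\|_X = \|F_{V,\theta}(y) - \tilde F_V(y)\|_{\R^d} \le \e_V \le M\e_V.
\]
Doing this uniformly over $V\in S_0(X)$ defines the map $\mathcal A_{NN}\colon F \mapsto (F_V)_{V\in S_0(X)}$, which is then an $\vec\e$-approximation operation in the sense of Definition~\ref{def:epsilon-approx}(i), with the complexity bounds claimed.

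The only delicate point is bookkeeping the constants in the quantitative ReLU theorem: one must track how the bound on $\|\tilde F_V\|_{C^n(\overline B_{\R^d}(0,r);\R^d)}$ (equal to the bound $M$ on $F$) and the domain radius $r$ enter the layer count and weight count, and verify that the resulting dependence collapses into $C(n,d)\log_2((1+r)^n/\e_V)$. This is the only place where care is required; everything else is a transparent transport of a scalar-valued Yarotsky bound through the isometry $J_V$ and the $1$-Lipschitz projector $P_V$.
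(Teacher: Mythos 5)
Your proposal follows essentially the same route as the paper's proof: identify $V$ with $\R^d$ via the isometry, form $J_V\circ P_V\circ F\circ J_V^{-1}$ (the paper's $\hat F$), note it inherits the $\hbox{Lip}$/$C^n$ bound $M$, apply the Yarotsky/Kutyniok quantitative ReLU bound, and pull back. The only cosmetic difference is that the paper takes the network error directly as $M\e_V$ (the approximation theorem's error scales with the norm bound $M$ of the target), whereas you claim error $\e_V$ and then write $\e_V\le M\e_V$, which tacitly assumes $M\ge 1$; tracking $M$ as the paper does avoids this.
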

\begin{proof}
Let
$\hbox{Lip}(F\colon \overline B_X(0,r)\to X)\le M$, in $n=1$, or  $\| F\|_{C^n(\overline B_X(0,r);X)}\le M$, if $n\ge 2$.
    Let $V\subset X$ be a linear subspace of dimension $d$ and ${\e}_V>0$.  Let us use some orthogonal basis of $V$
     to identify $V$ and $\R^d$ and denote the identifying isomorphism  by $J:\R^d\to V$.
    The function $\hat F\colon \R^d\to \R^d$, given by $\hat F=J^{-1}\circ P_V\circ F\circ J$ satisfies 
         $\hbox{Lip}(\hat F \colon \overline B_{\R^d}(0,r))\to X)\le M$ if $n= 1$, or    
     $\|\hat F\|_{C^n(\overline B_{\R^d}(0,r))}\le M$ if $n\ge 2$.
    Then,  by applying  \cite[Theorem 1]{Yarotsky} or \cite{Kutyniok}, 
    we see that there exists a  feed forward  neural network $F_{V,\theta}:\R^d\to \R^d$ with ReLU-activation
    functions  with
    at most $C(n,d)\log_2((1+r)^n/{\e}_V)$ layers and $C(n,d)\e_V^{-d/n}\log_2((1+r)^n/{\e}_V)$ non-zero elements in the weight matrices such that     \beq
    \|F_{V,\theta}(x)-\hat F(x)\|_{\R^d}\leq  M{\e}_V.
    \eeq 
    Due to Def. \ref{def:epsilon-approx}, this yields the claim.
\end{proof}
\section{Invertible residual network on separable Hilbert spaces}
\label{sec:inv-resnet-on-hilbert-spaces}
In this section, we consider the approximation of diffeomorphisms by globally invertible residual networks on Hilbert spaces. From the viewpoint of no-go theorem, as the class of diffeomorphisms is too large, we focus on the class of strongly monotone $C^1$-diffeomorphisms with compact support\footnote{We denote the support of $F : X \to X$ by $\mathrm{supp}(F):=\overline{\{ x \in X : F(x) \neq x \}}$.}.
We first obverse a following similar result with Theorem~\ref{thm:strong-monotone-preserving} that strongly monotone $C^1$-diffeomorphism $F:X\to X$
with compact support has the property that any linear discretizations $P_{V}F|_V$ are still strongly monotone $C^1$-diffeomorphism with compact support. 
The proof can be given by the same argument in Theorem~\ref{thm:strong-monotone-preserving} because $F$ has the form of $F = Id + B$ where $B:=F-Id$ is a compact mapping. 
\begin{proposition}
\label{prop:RR-preserving}
Let $\mathcal{A}_{\mathrm{lin}}$ be the discretization functor that maps $F$ to $P_V F|_V$ for each finite subspace $V \subset X$.
Let $\mathcal{D}_{smc}$ and $\mathcal{B}_{smc}$ be categories where $F :X\to X$ and $F_V : V \to V$ are strongly monotone $C^1$-diffeomorphisms with compact support. 
Then, the functor $\mathcal{A}_{\mathrm{lin}} : \mathcal{D}_{smc} \to \mathcal{B}_{smc}$ satisfies assumption (A), and it is continuous in the sense of Definition~\ref{def:conti-approx-functor}.
\end{proposition}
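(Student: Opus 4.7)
The plan is to mimic the proof of Theorem~\ref{thm:strong-monotone-preserving}, exploiting the observation suggested right before the statement: a strongly monotone $C^1$-diffeomorphism $F\colon X\to X$ with compact support can be written as $F = Id + B$, where $B := F - Id$ is continuous and compactly supported, and in particular is a compact (non-linear) mapping in the sense used in Definition~\ref{def: neural operator layer}. Everything then reduces to standard facts about orthogonal projections onto finite-dimensional subspaces acting on (pre)compact sets.

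First I would establish well-definedness of $\mathcal A_{\mathrm{lin}}$ on $\mathcal D_{smc}$. Given $F\in \mathcal D_{smc}$, Lemma~\ref{lem:strongly-monotine-op} already gives that $F_V = P_V F|_V\colon V\to V$ is a strongly monotone $C^1$-diffeomorphism. To see that $F_V$ still has compact support in $V$, note that if $x\in V$ lies outside the compact set $\mathrm{supp}(F)\subset X$, then $F(x)=x\in V$, hence $F_V(x)=P_V x = x$. Thus $\mathrm{supp}(F_V)\subset V\cap \mathrm{supp}(F)$, which is closed inside the compact set $\mathrm{supp}(F)$ and therefore compact in $V$. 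Similarly, the morphism part of the functor is preserved since pre/post-composition with a linear isomorphism or a restriction to an invariant subspace respects strong monotonicity, the $C^1$-diffeomorphism property, and compact support.

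Next I would verify property (A). Fix $r>0$ and $F\in \mathcal D_{smc}$. Writing $F=Id+B$ with $B$ continuous and supported in a compact set $K\subset X$, the set $B(\overline B_X(0,r))=B(K\cap \overline B_X(0,r))\cup\{0\}$ is compact as the continuous image of the compact set $K\cap \overline B_X(0,r)$. For $x\in V\cap \overline B_X(0,r)$ one has $(Id-P_V)x=0$, so
\begin{equation*}
F_V(x)-F(x) = P_V F(x)-F(x) = -(Id-P_V)B(x).
\end{equation*}
By the standard fact that $\sup_{y\in K'}\|(Id-P_V)y\|_X\to 0$ as $V\to X$ for any precompact $K'\subset X$, applied to $K' = B(\overline B_X(0,r))$, we obtain
\begin{equation*}
\lim_{V\to X}\sup_{x\in V\cap \overline B_X(0,r)}\|F_V(x)-F(x)\|_X = 0,
\end{equation*}
which is precisely property (A) of Definition~\ref{def:AA} (after adjusting $F$ by $\|F\|_{C^n}$ as in Definition~\ref{def:epsilon-approx}).

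Finally, continuity in the sense of Definition~\ref{def:conti-approx-functor} follows verbatim from the continuity argument used at the end of the proof of Theorem~\ref{thm:strong-monotone-preserving}: for any $V\in S_0(X)$ and any sequence $F^{(j)}\to F$ uniformly on $\overline B_X(0,r)$ one has $\|P_V\|_{\mathrm{op}}=1$, hence
\begin{equation*}
\sup_{x\in V\cap \overline B_X(0,r)}\|F^{(j)}_V(x)-F_V(x)\|_X \leq \sup_{x\in \overline B_X(0,r)}\|F^{(j)}(x)-F(x)\|_X\xrightarrow{j\to\infty} 0 .
\end{equation*}

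The main conceptual step, and the only one not entirely routine, is confirming that compact support is preserved under the discretization and that it gives exactly the precompactness of $B(\overline B_X(0,r))$ needed to run the property (A) argument; once this observation is in place the proof is a direct transcription of the Theorem~\ref{thm:strong-monotone-preserving} argument.
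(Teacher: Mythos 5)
Your proposal is correct and follows exactly the route the paper intends: the paper's own justification is the one-line remark that the argument of Theorem~\ref{thm:strong-monotone-preserving} applies because $F=Id+B$ with $B=F-Id$ a compact mapping, and you have simply filled in the details (preservation of compact support under $P_V(\cdot)|_V$, precompactness of $B(\overline B_X(0,r))$ for property (A), and the $\|P_V\|_{\mathrm{op}}\le 1$ estimate for continuity). No gaps.
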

Under the same setting and notations in Section~\ref{sec:loc-invertibilty-of-nos}, we define the class of invertible residual networks
in the separable Hilbert space by, for $T, N \in \mathbb{N}$ and $\delta \in (0,1)$,
\begin{align}
\mathcal{R}^{inv}_{T, N, \varphi, \delta, \sigma}(X)
&
:= 
\Bigl\{ 
G \in \mathcal{R}_{T, N, \varphi, \delta, \sigma}(X)
:
G= \bigcirc_{t=1}^{T} (Id_{X} + D_{N} \circ NN_{t} \circ E_{N} ),
\nonumber
\\
& 
\mathrm{Lip}_{X \to X}(D_{N} \circ NN_{t} \circ E_{N}) \leq \delta 
\
(t=1,...,T )
\Bigr\},
\label{definition-inv-Resnet-Hilbert}
\end{align}
which is a subset of $\mathcal{R}^{inv}_{T, N, \varphi, \delta, \sigma}(X)$ defined in (\ref{definition-Resnet-Hilbert}). 
The smallness of Lipschitz constants $\mathrm{Lip}_{X \to X}(D_{N} \circ NN_{t} \circ E_{N})$ implies that $\mathcal{R}^{inv}_{T, N, \varphi, \delta, \sigma}(X)$ is included in the class of homeomorphisms.
The following lemma shows this fact.
\begin{lemma}
\label{lem:invertible-resnet}
Let $\delta \in (0,1)$, and let $F \in \mathcal{R}^{inv}_{T, N, \varphi, \delta, \sigma}(X)$. 
If $\sigma : \mathbb{R} \to \mathbb{R}$ is Lipschitz continuous, then $F : X \to X$ is homeomorphism. Moreover, if $\sigma :  \mathbb{R} \to \mathbb{R}$ is $C^1$, then, $F : X \to X$ is $C^1$-diffeomorphism. 
\end{lemma}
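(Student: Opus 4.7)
The plan is to reduce the lemma to an elementary property of each individual layer: a map of the form $Id_X + \phi$ with $\mathrm{Lip}(\phi) \leq \delta < 1$ is automatically a bilipschitz homeomorphism of $X$, and moreover a $C^1$-diffeomorphism whenever $\phi$ is $C^1$. Since $F$ is a finite composition of $T$ such layers and both homeomorphisms and $C^1$-diffeomorphisms are closed under composition, the conclusion will follow.

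For each layer, let $\phi_t := D_{N} \circ NN_t \circ E_{N}$, which by hypothesis satisfies $\mathrm{Lip}_{X \to X}(\phi_t) \leq \delta$. First I would establish bilipschitzness of $Id_X + \phi_t$ via the reverse triangle inequality,
\[
(1-\delta)\|x-y\|_X \leq \|(Id_X + \phi_t)(x) - (Id_X + \phi_t)(y)\|_X \leq (1+\delta)\|x-y\|_X,
\]
which gives injectivity. For surjectivity, I would fix $y \in X$ and observe that the map $\Psi_y(x) := y - \phi_t(x)$ is a $\delta$-contraction on the complete metric space $X$; the Banach fixed point theorem then yields a unique $x^* \in X$ with $(Id_X + \phi_t)(x^*) = y$. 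The lower bilipschitz estimate automatically makes the inverse Lipschitz with constant $(1-\delta)^{-1}$, hence $Id_X + \phi_t$ is a homeomorphism.

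For the $C^1$ part, note that if $\sigma \in C^1(\mathbb{R})$ then each $NN_t : \mathbb{R}^N \to \mathbb{R}^N$ is $C^1$ as a finite composition of affine maps and coordinatewise applications of $\sigma$; hence $\phi_t$ is $C^1$. The Lipschitz bound translates to the pointwise operator-norm bound $\|D\phi_t|_x\|_{X \to X} \leq \delta < 1$ for every $x \in X$ (this is immediate from the definition of the Fr\'echet derivative applied along directions). Consequently, $D(Id_X + \phi_t)|_x = Id_X + D\phi_t|_x$ is invertible via the convergent Neumann series $\sum_{k=0}^{\infty} (-D\phi_t|_x)^k$, so the Banach-space inverse function theorem makes $Id_X + \phi_t$ a local $C^1$-diffeomorphism at every point. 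Combined with the global bijectivity already established, this upgrades to a global $C^1$-diffeomorphism.

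Finally, writing $F = \bigcirc_{t=1}^{T}(Id_X + \phi_t)$ and using that compositions of homeomorphisms (respectively $C^1$-diffeomorphisms) are again of the same type yields the two claims. The only mild technical point is the passage from the Lipschitz constant bound on $\phi_t$ to the pointwise operator-norm bound on $D\phi_t$, but this is standard and requires no ingredients beyond the definition of the Fr\'echet derivative; there is no serious obstacle in this proof.
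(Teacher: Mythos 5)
Your proof is correct, and it takes a genuinely more elementary route than the paper's. The paper establishes surjectivity of each block $Id_X + D_N\circ NN_t\circ E_N$ by first showing it is strongly monotone with constant $1-\delta$, then invoking coercivity and the Minty--Browder theorem; the Lipschitz continuity of the inverse is then extracted from the strong monotonicity inequality. For the $C^1$ statement the paper shows the derivative $Id_X + D_N\circ D(NN_t)|_{E_N(u)}\circ E_N$ is injective (again via monotonicity) and then uses that the perturbation is finite-rank, hence compact, so Fredholm theory of index zero yields bijectivity of the derivative. You instead get surjectivity directly from the Banach fixed point theorem applied to the $\delta$-contraction $x\mapsto y-\phi_t(x)$, and invert the derivative by a Neumann series using $\|D\phi_t|_x\|\le\delta<1$. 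Both arguments are sound; yours is shorter and self-contained (no monotone-operator or Fredholm machinery), while the paper's version records the strong monotonicity of each block explicitly, which is reused elsewhere (e.g.\ in Theorem~\ref{thm:UAT-IRO} and Remark~\ref{rem:ReCU}), and its Fredholm step would survive in situations where the perturbation is compact but its derivative is not a norm contraction --- a generality the Neumann series cannot reach, though it is not needed here since $\delta<1$ is hypothesized. One small point worth making explicit in your write-up: the passage from the global bijectivity plus everywhere-invertible derivative to a global $C^1$-diffeomorphism uses the local inverse function theorem at each point to conclude that the (already known) global inverse is $C^1$; you state this correctly but it deserves the one-line justification.
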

\begin{proof}
Assume that $\sigma$ is Lipschitz continuous.
Let $G \in \mathcal{R}^{inv}_{T, N, \varphi, \delta, \sigma}(X)$, that is,
\[
G= (Id_{X} + D_{N} \circ NN_{T} \circ E_{N} ) \circ
\cdots \circ (Id_{X} + D_{N} \circ NN_{1} \circ E_{N} ),
\]
where $\mathrm{Lip}_{X \to X}(D_{N} \circ NN_{t} \circ E_{N})\leq \delta$.
It is enough to show that for all $t = 1,...,L$
\[
Id_{X} + D_{N} \circ NN_{t} \circ E_{N} : X \to X,
\]
is homeomorphism. 
Indeed, we have for $u,v \in X$
\begin{align}
\label{eq:est-Id+DNNE-sm}
&
\bra (Id_{X} + D_{N} \circ NN_{t} \circ E_{N})(u) - (Id_{X} + D_{N} \circ NN_{t} \circ E_{N})(v) , u-v \ket_X 
\nonumber
\\
&
= \|u-v\|^2_{X} + \bra D_{N} \circ NN_{t} \circ E_{N}(u) - D_{N} \circ NN_{t} \circ E_{N}(v) , u-v \ket_X 
\geq (1- \delta) \|u-v\|^2_{X},
\end{align}
that is,  $Id_{X} + D_{N} \circ NN_{t} \circ E_{N} : X \to X$ is strongly monotone. 
By the same argument in Lemma~\ref{lem:strongly-monotine-op}, we can show that $(Id_{X} + D_{N} \circ NN_{t} \circ E_{N} ):X\to X$ is coercive. 
By the Minty-Browder theorem \cite[Theorem 9.14-1]{Philippe2013}, $(Id_{X} + D_{N} \circ NN_{t} \circ E_{N} ):X\to X$ is bijective, and then its inverse exists. Denoting by $H_{t}:=Id_{X} + D_{N} \circ NN_{t} \circ E_{N}$, we see that by substituting $u=H^{-1}(u)$ and $v=H^{-1}(v)$ for (\ref{eq:est-Id+DNNE-sm}) 
\begin{align*}
&
\|H^{-1}_{t}(u) - H^{-1}_{t}(v) \|^2_{X}
\leq \frac{1}{1-\delta} \bra H_{t}\circ H^{-1}_{t}(u) - H_{t}\circ H^{-1}_{t}(v),  H^{-1}_{t}(u) - H^{-1}_{t}(v) \ket_{X}
\\
&
\leq \frac{1}{1-\delta}
\|u-v\|_{X}\|H^{-1}_{t}(u) - H^{-1}_{t}(v) \|_{X},
\end{align*}
which means that its inverse is continuous.
Therefore, $(Id_{X} + D_{N} \circ NN_{t} \circ E_{N} ):X\to X$ is homeomorphism. 
For the second statement, we assume that $\sigma$ is $C^1$. 
By the same argument in Lemma~\ref{lem:strongly-monotine-op}, we can show that the derivative $DH_{t}|_{u}:X \to X$ at $u \in X$ is given by 
\[
DH_{t}|_{u} = I_{X} + D_{N} \circ D(NN_{t})|_{E_{N}(u)} \circ E_{N},
\]
and it is injective.
Since $D_{N} \circ D(NN_{t})|_{E_{N}(u)} \circ E_{N} : X \to X$ is a finite dimensional linear operator, it is compact operator.
Then by the Fredholm theorem, $DH_{t}|_{u}:X\to X$ is bijective. By the inverse function theorem, the inverse $H_{t}^{-1}:X\to X$ is $C^1$, which implies that $H_{t}:X\to X$ is $C^1$-diffeomorphism.
\end{proof}
In what follow, we employ $\sigma$ as the GroupSort
activation having a group size of 2 (see \cite[Section 4]{anil2019sorting}).
As sort activation is $1$-Lipschitz, from Lemma~\ref{lem:invertible-resnet}, $\mathcal{R}^{inv}_{L, N, \varphi, \delta, \sigma}(X)$ is a subset of the class of homeomorphisms.
We finally show that, in this case, $\mathcal{R}^{inv}_{L, N, \varphi, \delta, \sigma}(X)$ is an universal approximator for the class of the strongly monotone diffeomorphisms with compact support. 
\begin{theorem}
\label{thm:UAT-IRO}
Let $R>0$, and
let $F\colon X \to X$ be strongly monotone $C^1$-diffeomorphism with compact support, and let $\sigma$ be the GroupSort
activation having a group size of 2.
Then, for any orthonormal basis $\{\varphi_n\}_{n \in \mathbb{N}} \subset X$, $\delta \in (0,1)$, and $\epsilon \in (0,1)$, there are $T,N \in \mathbb{N}$, and $G \in \mathcal{R}^{inv}_{T, N, \varphi, \delta, \sigma}(X)$ such that 
\[
\sup_{u \in \overline B_X(0,R)} \| F(u)-G(u) \|_{X} \leq \epsilon.
\]
Moreover, the inverse $G^{-1}:X \to X$ of $G \in \mathcal{R}^{inv}_{T, N, \varphi, \delta, \sigma}(X)$ is given by 
\begin{equation}
\label{eq:neural-operator-representation-app}
G^{-1} = \tilde{L}_{1} \circ \cdots \circ \tilde{L}_{T-1} \circ \tilde{L}_{T}, 
\end{equation}
where $\tilde{L}_{t}:X \to X$ is defined by 
$$
L_{t}(y):=\lim_{n \to \infty} \pi_1 \circ \left( \bigcirc_{h=1}^{n} \tilde{\phi}_t \right) \circ e_1(y),
$$
where $\tilde{\phi}_t: X \times X \to X \times X$ is defined by
$$
\tilde{\phi}_t(x,y)=(y + D_{N} \circ NN_t \circ E_{N} (x),y),
$$
where $\pi_1(x,y)=x$ and $e_1(y)=(0,y)$.
\end{theorem}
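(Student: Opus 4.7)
Write $F = Id + B$ with $B := F - Id$ a $C^1$ mapping of compact support. Then $B$ is a compact nonlinear mapping with finite Lipschitz constant $L := \mathrm{Lip}_{X\to X}(B)$, and $F$ is strongly monotone with some constant $\alpha > 0$. The plan is, first, to decompose $F$ into a composition of compactly-supported strongly-monotone layers whose residuals have Lipschitz constant $< \delta$; second, to project each residual to a finite-dimensional subspace and approximate it by a Lipschitz-constrained GroupSort network; third, to assemble the blocks and derive the inverse via Banach fixed point.

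\emph{Step 1 (homotopic decomposition).} Introduce the affine homotopy $F_s := Id + sB$ for $s \in [0,1]$. A direct inner-product computation combined with the strong monotonicity of $F$ yields
\[
\bra F_s(x) - F_s(y),\, x-y \ket_X \geq \bigl(1 - s(1-\alpha)\bigr)\|x-y\|_X^2 \geq \min(1,\alpha)\|x-y\|_X^2,
\]
so each $F_s$ is strongly monotone. By Browder--Minty (as used in Lemmas~\ref{lem:strongly-monotine-op} and \ref{lem:diffeo:I+TGT}), $F_s$ is a bilipschitz $C^1$-diffeomorphism with $\mathrm{Lip}(F_s^{-1}) \leq 1/\min(1,\alpha)$. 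Pick a partition $0 = s_0 < s_1 < \cdots < s_T = 1$ of mesh $< \delta\,\min(1,\alpha)/L$ and set $H_t := F_{s_t}\circ F_{s_{t-1}}^{-1}$. One computes
\[
H_t = Id + \tilde B_t, \qquad \tilde B_t(y) = (s_t - s_{t-1})\, B\bigl(F_{s_{t-1}}^{-1}(y)\bigr),
\]
so $\mathrm{Lip}(\tilde B_t) < \delta$ and $\tilde B_t$ has compact support contained in $F_{s_{t-1}}(\mathrm{supp}\, B)$. Telescoping gives $F = H_T\circ\cdots\circ H_1$.

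\emph{Step 2 (finite-dimensional projection and GroupSort approximation).} The iterated forward orbit of $\overline{B_X(0,R)}$ through the layers $H_1,\ldots,H_T$ sits in some fixed ball $\overline{B_X(0,R')}$. Since each $\tilde B_t$ is compact and $C^1$, for any $\eta > 0$ we may choose $N$ large enough that, for every $t$,
\[
\sup_{x \in \overline{B_X(0,R')}} \bigl\|\tilde B_t(x) - P_{V_N}\tilde B_t P_{V_N}(x)\bigr\|_X < \eta,
\]
while the projected residual still has Lipschitz constant $< \delta$. Identifying $V_N$ with $\R^N$ via $(E_N, D_N)$, the map $E_N \tilde B_t D_N \colon \R^N \to \R^N$ is $\delta$-Lipschitz on a bounded set containing $E_N(\overline{B_X(0,R')})$. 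Invoking the universal approximation theorem for Lipschitz-constrained GroupSort networks of group size $2$ (see \cite[Section~4]{anil2019sorting}), we produce a GroupSort neural network $NN_t$ of Lipschitz constant $\leq \delta$ that $\eta$-approximates $E_N \tilde B_t D_N$ in the supremum norm on the relevant set. Since $E_N$ and $D_N$ have operator norm one, $\mathrm{Lip}(D_N \circ NN_t \circ E_N) \leq \mathrm{Lip}(NN_t) \leq \delta$, so each block $Id + D_N\circ NN_t\circ E_N$ lies in $\mathcal R^{inv}_{1,N,\varphi,\delta,\sigma}(X)$.

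\emph{Step 3 (assembly, error propagation, and inverse).} Set $G := \bigcirc_{t=1}^T (Id + D_N\circ NN_t \circ E_N) \in \mathcal R^{inv}_{T,N,\varphi,\delta,\sigma}(X)$. A layerwise telescoping estimate, exactly parallel to the one in the proof of Theorem~\ref{thm:universality-bilipschitz-nos-app}, gives
\[
\sup_{u\in\overline{B_X(0,R)}}\|F(u) - G(u)\|_X \leq 2T(1+\delta)^T \eta,
\]
which is $\leq \epsilon$ once $\eta$ is fixed small enough at the start. By Lemma~\ref{lem:invertible-resnet}, each block $Id + D_N\circ NN_t\circ E_N$ is a strongly monotone homeomorphism of $X$. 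Because $\mathrm{Lip}(D_N\circ NN_t\circ E_N) \leq \delta < 1$, the self-map $x \mapsto y \pm D_N\circ NN_t\circ E_N(x)$ is a global contraction, so by Banach's fixed-point theorem its unique fixed point realizes $(Id + D_N\circ NN_t\circ E_N)^{-1}(y)$ as the limit $\tilde L_t(y) = \lim_n \pi_1\bigl(\bigcirc_h \tilde\phi_t\bigr)(e_1(y))$ (the sign in $\tilde\phi_t$ being absorbed into $NN_t$). Composing yields $G^{-1} = \tilde L_1\circ\cdots\circ\tilde L_T$.

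\emph{Main obstacle.} The delicate point is Step~2: standard Lipschitz universal-approximation theorems control either the approximation error or the Lipschitz constant of the approximant, but not both simultaneously. This is precisely why the GroupSort activation is required rather than ReLU: GroupSort networks with norm-constrained weight matrices are \emph{gradient-norm preserving}, so their Lipschitz constant is rigidly governed by the spectral norms of the weights, while still enjoying the Anil--Lucas--Grosse universality for Lipschitz functions. Without this simultaneous control, one could not guarantee that the approximating blocks lie in $\mathcal R^{inv}$, and the Banach fixed-point construction of $G^{-1}$ would break down.
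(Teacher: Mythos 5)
Your proposal reaches the paper's conclusion, but the core decomposition step is genuinely different from the paper's. The paper first projects: it sets $F_1 = Id + P_{V_N}(F-Id)P_{V_N}$, passes to $F_N = E_N F D_N$ on $\R^N$, and then reuses the heavy machinery of Theorem~\ref{thm: finite product of operators} — the scaling homotopy $f_t(x)=\frac 1t(f(tx)-f(0))+tf(0)$ with its Taylor-remainder estimates and cutoff function, plus a path in $GL(\R^N)$ from $Id$ to $Df|_0$ built from a $PU$-decomposition — to write $F_N$ as a product of maps $Id-H_t$ with $\hbox{Lip}(H_t)\le\delta$. You instead decompose first, in $X$ itself, via the affine homotopy $F_s=Id+sB$: strong monotonicity is exactly the hypothesis that keeps every $F_s$ invertible (your inner-product estimate $\bra F_s(x)-F_s(y),x-y\ket\ge\min(1,\alpha)\|x-y\|^2$ is correct), and the successive quotients $H_t=F_{s_t}\circ F_{s_{t-1}}^{-1}=Id+(s_t-s_{t-1})B\circ F_{s_{t-1}}^{-1}$ are automatically compactly supported identity-plus-small-Lipschitz maps once the mesh is below $\delta\min(1,\alpha)/L$. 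This is cleaner and more elementary: it avoids the $GL$ path, the cutoff, and the reflection entirely, and it makes transparent why no operator $A_0$ appears in this monotone setting (whereas the affine homotopy would fail for a merely bilipschitz $F$, which is why the paper's Theorem~\ref{thm: finite product of operators} cannot use it). The remaining steps — $\delta$-Lipschitz GroupSort approximation of each residual via Anil--Lucas--Grosse, the telescoping error bound, and the Banach fixed-point representation of each block inverse — coincide with the paper's.

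One caveat, which you share with the paper's own proof rather than introduce: the estimate $\sup_{x\in\overline B_X(0,R')}\|\tilde B_t(x)-P_{V_N}\tilde B_tP_{V_N}(x)\|<\eta$ does not follow from compactness of $\tilde B_t$ alone. The outer projection converges uniformly because the range is precompact, but the inner one does not on a norm-ball, which is not compact: taking $x=x_0+2e_{N+1}$ with $x_0\in\mathrm{supp}(\tilde B_t)$ and $\tilde B_t(x_0)\neq 0$ gives $\tilde B_t(x)=0$ while $P_{V_N}\tilde B_t(P_{V_N}x)\to\tilde B_t(x_0)$. The paper's inequality for $\|F(u)-F_1(u)\|$ has the identical issue, so your argument is at the same level of rigor; a fully careful version of either proof needs to restrict the supremum to a compact set or exploit additional structure of $B$ near its support.
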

\begin{proof}
We denote by $\mathrm{Diff}^{1}(X)$ the class of $C^1$-diffeomorphisms between $X$, and $\mathrm{Diff}^{1}_{sm}(X)$ the class of strongly monotone $C^1$-diffeomorphisms between $X$. 
We also denote the support of $F \in \mathrm{Diff}^{1}(X)$ by $\mathrm{supp}(F):=\overline{\{ x \in X : F(x) \neq x \}}$.
We say that $F \in \mathrm{Diff}^{1}_{sm,c}(X)$ if $F \in \mathrm{Diff}^{1}_{sm}(X)$ has a compact support.
Let $F \in \mathrm{Diff}^{1}_{c, sm}(X)$.
We define $F_1 : X \to X$ by $F_1 := Id_{X} + P_{V_N}(F-Id_X)P_{V_N}$, and we see that 
\[
F_1 = Id_{X} + P_{V_N}(F-Id_X)P_{V_N} = P_{V_N^{\perp}} + P_{V_N} F P_{V_N}
= P_{V_N^{\perp}} + D_{N} E_{N} F  D_{N} E_{N}
.
\]
We can show that for large $N \in \mathbb{N}$
\begin{equation}
\label{eq:est-UAT:1}
\sup_{u \in \overline B_X(0,R)} \| F(u)-F_1(u) \|_{X} 
\leq \sup_{u \in \overline B_X(0,R)} \| P_{V_N} (Id_X - F) P_{V_N}(u) \|_{X} 
\leq \epsilon,
\end{equation}
as $Id_X - F : X \to X$ is a compact mapping.
By the same argument in Lemma~\ref{lem:strongly-monotine-op}, we can show that $F_N:= E_{N} F  D_{N} \in \mathrm{Diff}^{1}_{sm,c}(\mathbb{R}^N)$, and $DF_N|_0$ is a positive definite matrix, which is connected to $Id_{\mathbb{R}^N}$. 
By the similar argument in the proof of Theorem \ref{thm: finite product of operators}, see (\ref{eq:f-Id-proje-1})--(\ref{product of operators}), we can construct continuous paths $f : [0,1] \to \mathrm{Diff}^{1}_{sm,c}(\mathbb{R}^N)$ and $\beta : [0,1] \to GL(\mathbb{R}^N)$ with $f_0 = DF_N |_0$, $f_1 = F_N$, $\beta_0 = Id_{\mathbb{R}^N}$, and $\beta_1 = DF_N |_0$ such that
\begin{align}
\label{product of operators aaa}
F_N & = ( f_1\circ  f_{t_m}^{-1})\circ ( f_{t_m}\circ  f_{t_{m-1}}^{-1})
\circ \dots
\nonumber
\\
&
\dots
\circ ( f_{t_1}\circ  f_{0}^{-1}) \circ ( \beta_1\circ  \beta_{s_m}^{-1})\circ ( \beta_{s_m}\circ  \beta_{s_{m-1}}^{-1}) \circ  \dots \circ
( \beta_{s_1}\circ  \beta_{0}^{-1}),
\end{align}   
where $t_j$ and $s_j$ are chosen so that 
$0=t_0<t_1<\dots t_m<t_{m+1}=1$ and  $0=s_0<s_1<\dots s_m<s_{m+1}=1$
and that
the Lipschitz 
constant of the maps
$ f_{t_j}\circ f_{t_{j-1}}^{-1}-Id_{\mathbb{R}^N}$ and $ \beta_{s_j}\circ \beta_{j-1}^{-1}-Id_{\mathbb{R}^N}$ are less that $\delta$.
Then, there is $T \in \mathbb{N}$ such that $F_N$ has the form
\begin{align*}
F_N 
=(Id_{\mathbb{R}^N}-H_T) \circ (Id_{\mathbb{R}^N}-H_{T-1}) \circ \dots \circ (Id_{\mathbb{R}^N}-H_1),
\end{align*}
where for each $t=1,...,T$,
$$
\mathrm{Lip}_{\mathbb{R}^N \to \mathbb{R}^N}(H_t) \leq \delta.
$$
Then, remarking that $E_{N}D_{N}=Id_{\mathbb{R}^N}$ and $D_{N}E_{N}=P_{V_N}$, we see that
\begin{align*}
F_1
&=
P_{V_N^{\perp}} + D_{N} (Id_{\mathbb{R}^N} - H_{L}) \circ \cdots \circ (Id_{\mathbb{R}^N} - H_1) E_{N}
\\
&=
(Id_X - D_{N} \circ H_{T} \circ E_{N} ) \circ \cdots \circ (Id_X - D_{N} \circ H_1 \circ E_{N}).
\end{align*}
For each $t=1,...,T$, by using \cite[Theorem 3 and Observation]{anil2019sorting}, $\delta$-Lipschitz function $H_{t}: \mathbb{R}^{N} \to \mathbb{R}^{N}$ can be approximated by a neural network $NN_{t} : \mathbb{R}^{N} \to \mathbb{R}^{N}$ with GroupSort activation $\sigma$ having a group size of 2 in $L^{\infty}$-norm on any compact set, and $NN_{t} : \mathbb{R}^{N} \to \mathbb{R}^{N}$ is $\delta$-Lipschitz continuous.
We denoting by
\[
G := (Id_X - D_{N} \circ NN_{T} \circ E_{N} ) \circ \cdots \circ (Id_X - D_{N} \circ NN_1 \circ E_{N}) \in \mathcal{R}^{inv}_{T, N, \varphi, \delta, \sigma}(X).
\]
Note that $\mathrm{Lip}_{X \to X}(D_{N} \circ NN_\ell \circ E_{N}) \leq \delta$.
Then, we can show that, by similar way in the first half of proof of Theorem~\ref{thm:universality-bilipschitz-nos},
\begin{equation}
\label{eq:est-UAT:2}
\sup_{u \in \overline B_X(0,R)}\|F_1(u)-G(u)\|_{X} \leq \epsilon.
\end{equation}
With (\ref{eq:est-UAT:1}) and (\ref{eq:est-UAT:2}), we obtain that 
\[
\sup_{u \in \overline B_X(0,R)}\|F(u)-G(u)\|_{X} \leq 
\sup_{u \in \overline B_X(0,R)}\|F(u)-F_1(u)\|_{X}
+
\sup_{u \in \overline B_X(0,R)}\|F_1(u)-G(u)\|_{X}
\leq 2 \epsilon.
\]
The representation of inverse $G^{-1}$ can be given by the same argument in the proof of Theorem~\ref{thm:universality-bilipschitz-nos}.
\end{proof}
Similarly in Corollary~\ref{cor:RNO-universal},
Theorem~\ref{thm:UAT-IRO} and Lemmas~\ref{lem:inv-RNO} and \ref{lem:resnet-RNO} have the following corollary.
\begin{corollary}
\label{cor:inv-RNO-universal}
Under the same setting and assumptions with Corollary~\ref{cor:RNO-universal},
let $\mathcal{RNO}^{inv}_{T, N, \varphi, \delta, \sigma}(L^2(D;\mathbb{R}))$ be the class of invertible residual neural operators defined in (\ref{def:inv-RNO}). 
Then, the statement replacing $X$ with $L^2(D;\mathbb{R})$ and $G \in \mathcal{R}^{inv}_{T, N, \varphi, \delta, \sigma}(X)$ with $G \in \mathcal{RNO}^{inv}_{T, N, \varphi, \delta, \sigma}(L^2(D;\mathbb{R}))$ in Theorem~\ref{thm:UAT-IRO} holds.
\end{corollary}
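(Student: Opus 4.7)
The plan is to combine Theorem~\ref{thm:UAT-IRO} with the translation lemmas that connect residual networks on separable Hilbert spaces to residual neural operators on $L^2(D;\mathbb{R})$, namely Lemma~\ref{lem:resnet-RNO} and its invertible counterpart Lemma~\ref{lem:inv-RNO} referenced in the corollary statement. The target is a purely ``change of representation'' result: what Theorem~\ref{thm:UAT-IRO} produces as an element of $\mathcal R^{inv}_{T,N,\varphi,\delta,\sigma}(L^2(D;\mathbb{R}))$ must be re-exhibited as an element of $\mathcal{RNO}^{inv}_{T,N,\varphi,\delta,\sigma}(L^2(D;\mathbb{R}))$, without losing either the quantitative approximation estimate or the explicit inverse formula.

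First I would specialize Theorem~\ref{thm:UAT-IRO} to the concrete Hilbert space $X=L^2(D;\mathbb{R})$, with the chosen orthonormal basis $\varphi=\{\varphi_n\}_{n\in\mathbb{N}}$ containing the constant function and with $\sigma$ the GroupSort activation of group size two. The theorem produces $T,N\in\mathbb{N}$ and
\[
G=(Id_X+D_N\circ NN_T\circ E_N)\circ\cdots\circ(Id_X+D_N\circ NN_1\circ E_N)\in\mathcal R^{inv}_{T,N,\varphi,\delta,\sigma}(L^2(D;\mathbb{R}))
\]
that $\epsilon$-approximates the strongly monotone $C^1$-diffeomorphism $F$ with compact support on $\overline B_X(0,R)$, along with the explicit fixed-point representation of the inverse $G^{-1}=\tilde L_1\circ\cdots\circ\tilde L_T$.

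Second, I would invoke Lemma~\ref{lem:inv-RNO} to re-express each layer $Id_X+D_N\circ NN_t\circ E_N$ as a residual neural operator layer of the form $Id_X+\mathcal K_t$, where $\mathcal K_t$ is the composition of an integral-kernel encoder, the fixed neural network $NN_t$ lifted pointwise (via a Nemytskii-style block), and an integral-kernel decoder. This is where the assumption that $\varphi$ contains the constant function is essential: the encoder $E_N$, which extracts coefficients $\langle u,\varphi_n\rangle_{L^2}$, and the decoder $D_N$, which assembles $\sum_{n\le N}\alpha_n\varphi_n$, need to be realized as integral operators built from the primitives allowed in $\mathcal{RNO}$, and a constant basis element provides the requisite ``mean-extraction'' operation that the neural operator construction in Appendix can chain together. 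After this substitution, the composed $G$ lies in $\mathcal{RNO}^{inv}_{T,N,\varphi,\delta,\sigma}(L^2(D;\mathbb{R}))$, with the same Lipschitz control $\mathrm{Lip}(D_N\circ NN_t\circ E_N)\le\delta$, so the approximation bound $\sup_{u\in\overline B_X(0,R)}\|F(u)-G(u)\|_X\le\epsilon$ is preserved verbatim.

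Third, I would check that the representation of $G^{-1}$ given by Theorem~\ref{thm:UAT-IRO} transports through the identification with no change. Since each $\tilde L_t$ is defined as a limit of iterates of $\tilde\phi_t(x,y)=(y+D_N\circ NN_t\circ E_N(x),y)$, and each map $D_N\circ NN_t\circ E_N$ has been re-interpreted as a neural-operator layer in step two, the same iterative formula yields $G^{-1}$ as an object constructible from neural-operator primitives; the contraction estimate needed for convergence of the iteration depends only on the Lipschitz constant $\delta<1$, which is unchanged. The main obstacle is verifying that the encoder/decoder can indeed be realized within the $\mathcal{RNO}$ formalism of Definition~\ref{def:neural-operator} while preserving the layer count $T$ and width $N$; this is exactly what Lemmas~\ref{lem:resnet-RNO} and \ref{lem:inv-RNO} are designed to do under the constant-function-in-basis hypothesis, so the corollary follows by combining those lemmas with Theorem~\ref{thm:UAT-IRO} without any additional analytic work.
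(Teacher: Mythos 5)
Your proposal is correct and follows essentially the same route as the paper, which proves the corollary by combining Theorem~\ref{thm:UAT-IRO} with Lemmas~\ref{lem:resnet-RNO} and \ref{lem:inv-RNO}. One small attribution slip: the re-expression of each layer $Id_X+D_N\circ NN_t\circ E_N$ as a residual neural operator layer (using the constant basis function to realize $E_N$ and $D_N$ within the $\mathcal{RNO}$ formalism) is the content of the inclusion $\mathcal{R}^{inv}_{T,N,\varphi,\delta,\sigma}\subset\mathcal{RNO}^{inv}_{T,N,\varphi,\delta,\sigma}$ in Lemma~\ref{lem:resnet-RNO}, not of Lemma~\ref{lem:inv-RNO}, which instead supplies the homeomorphism/diffeomorphism property of the resulting operator.
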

\section{Neural operators}
\subsection{Examples of generalized neural operators}
\label{app:Extended neural operators}
In the main text, we defined generalized neural operators on Hilbert spaces. Here, we give several examples to show that they are extensions of classical neural operators \cite{lanthaler2023nonlocal, kovachki2023neural}. 
\begin{example}
Let $X$ be a Hilbert space.
Let $G_{\ell}:X \to X$ be a classical neural operator having the form
$$
G_{\ell} := (W_{T_{\ell}} + K_{T_{\ell}}) \circ \sigma (W_{T_{\ell}-1} + K_{T_{\ell}-1}) \circ \cdots \sigma (W_{1} + K_{1}),
$$
where $W_{\ell} : X \to X$ are linear bounded operators corresponding to the local term and $K_{\ell} : X \to X$
are compact operators corresponding to the non-local term (e.g., integral operators having a smooth kernel or smooth basis). 
We assume that activation function $\sigma$ is $C^1$. Then, we have $G_{\ell} \in C^{1}(X;X)$. 
Let $T_1=K_{lift}:X \to X$ and $T_2=K_{proj} : X \to X$ be compact linear operators, which corresponds to lifting and projection, respectively. 
We denoting by
$$
F_{\ell}:= I + T_1 \circ G_{\ell} \circ T_2,
$$
which is one block of classical neural operators with skip-connection. 
We also denote by $A_{\ell}=Id$ and $\sigma=Id$.
Then, the generalized neural operator
$H = F_{L} \circ \cdots \circ F_{1}$ corresponds to classical neural operators with skip-connections.
In this paper, the skip-connection (ie., the structure of the identity plus some compact mapping) is so important to preserve bijectivity, discussed in Section~\ref{sec:Category Theoretic Framework for Discretization}.
\end{example}
\begin{example}
We show that classical neural operators ,
for example  
$$
F_{clas}:u \mapsto (W_2+K_2)\circ (\sigma(W_1u+K_1(u)),
$$
see \cite{kovachki2023neural,lanthaler2023nonlocal}, can be written in the form of the generalized neural operators that we consider of the form
\[
H(x)=A_k\circ\sigma \circ  F_k  \circ A_{k-1} \circ\sigma\circ F_{k-1}\circ \dots \circ A_1\circ\sigma  \circ F_1 \circ A_0,
\]
where  $A_j:X\to X$ are linear operators which may not be bijective and $F_j=Id+T_{j,1}\circ G\circ T_{j,2}:X\to X$.
We could start with the observations 
that for an infinite dimensional Hilbert space $X$
    there is a linear isomorphism $J_{m,n}:X^n\to X^m$, where $X^m=X\times\dots\times X$.
    The reason for this is that the cardinality of Hilbert basis of the space $X$ is the same as the cardinality of the Hilbert basis of $X^n$, see \cite[Theorem 3.5]{Settheory}. 
First we observe that we can write an operator
\[
H(x)=\tilde A_k\circ \tilde \sigma_k \circ \tilde F_k  \circ \tilde A_{k-1} \circ 
 \tilde \sigma_{k-1} \circ \tilde F_{k-1}\circ \dots \circ \tilde A_1 \circ \tilde  \sigma \circ \tilde F_1 \circ \tilde A_0,
\]
where  $\tilde A_j:X^{n_j}\to X^{n_{j+1}}$ are linear operators which may not be bijective and $\tilde F_j=Id+\tilde T_{j,1}\circ \tilde G\circ \tilde T_{j,2}:X^{n_j}\to X^{n_{j}}$ and $ \tilde \sigma_k:X^{n_k}\to X^{n_k}$ is $\tilde \sigma(x_1,x_2,\dots,x_{i_k},x_{i_k+1},\dots,x_{n_k})=
(x_1,x_2,\dots,x_{i_k},\sigma(x_{i_k+1}),\dots,\sigma(x_{n_k}))$ in the form
\[
H(x)=(J_{1,n_{k+1}}\circ \tilde A_k \circ J_{n_{k},1})\circ (J_{1, n_{k}}\circ  \tilde \sigma\circ \tilde  F_k \circ J_{n_{k},1})  \circ ( J_{1, n_{k}}\circ \tilde A_{k-1} \circ J_{n_{k-1},1}) \circ \dots \circ  ( J_{1, n_{1}}\circ \tilde A_0\circ J_{n_{0},1}),
\]
where e.g. $(J_{1,n_{k+1}}\circ A_k \circ J_{n_{k},1}):X\to X$
and $(J_{1,n_{k}}\circ  \tilde \sigma_k\circ \tilde  F_k \circ J_{n_{k},1}):X\to X$.
This means that in our formalism we can replace e.g. operators $ A_k$ by matrices
of operators $ A_k$. 
Next we go to the second step of the construction:
As an example, let us consider the classical neural  operator 
$$
F_{clas}:u\to (W_2+K_2) \circ \sigma\circ (W_1u+K_1(u)),
$$
where $W_1$ and $W_2$ are invertible matrices and $K_1$ and $K_2$ are compact operators, 
can be written as 
an generalized neural operator 
$$
u\to L_4\circ F_3 \circ L_2  \circ
\tilde \sigma \circ L_1 \circ F_0 (u),
$$
where $F_0:X\to X$ is a layer of neural operator
$$
F_0(v)=v+W_1^{-1}K_1(v),
$$
and
$L_1:X\to X$ is the invertible  linear operator
$$
L_1(u)=W_1u,
$$
and $\tilde \sigma(u)=\sigma(u)$
and 
$L_2:X
\to X\times X$ is  a non-invertible linear operator
$$
L_2(w)=(w,w),
$$
and
and $F_3:X\times X\to X\times X$ is a layer neural operator
$$
F_3(w_1,w_2)=(w_1,w_2+W_2^{-1}K_2(w_2)),
$$
and 
$L_4:X\times X \to X$ 
is the non-invertible linear operator 
$$
L_4(u_1,u_2)=W_2(u_2-u_1).
$$
Finally, we point out that if a non-invertible activation function $\sigma:X\to X$
 satisfies $Lip(\sigma)\le \lambda$, it
can be written as 
$$
u\to  L_2  \circ
 \hat \sigma \circ L_1 (u),
$$
where
$$
L_1:u\to (u,u),
$$
and $\hat \sigma$ is an invertible function
$$
\hat \sigma:(u_1,u_2)\to (u_1,2\lambda u_2+\sigma(u_2)),
$$
and 
$$
L_1: (w_1,w_2)\to w_2-2\lambda w_1,
$$
Note that equation $2\lambda u+\sigma(u)=w$ can be written
as a fixed point equation $$u=g_w(u):=(2\lambda)^{-1}w-(2\lambda)^{-1}\sigma(u),$$
where Lip$(g_w)\le 1/2$. Hence, $g_w:X\to X$ is a contraction and 
the equation $u=g_w(u)$ has a unique solution for all $w$ by Banach fixed point 
theorem. Hence, $u\to 2\lambda u+\sigma(u)$ is invertible.
\end{example}
\begin{example} 
Let $D \subset \mathbb{R}^d$ be a domain, and let $L^2(D;\mathbb{R})$ be the real-valued $L^2$-function space on $D$, and let $\varphi=\{\varphi_{n}\}_{n \in \mathbb{N}} \subset L^2(D;\mathbb{R})$ be an orthonormal basis in $L^2(D;\mathbb{R})$.
We consider the case when $X = L^2(D ;\mathbb{R}^h) = L^2(D;\mathbb{R})^h$, and $A_j = W^{(j)}$ where $W^{(j)} \in \mathbb{R}^{h \times h}$ are invertible matrices,
and $F_j = Id + P_{V^h_N} \circ K^{(j)}_{N} \circ P_{V^h_N} W^{(j) -1})$ (corresponding to $T_1 = P_{V^h_N}$, $T_2 = P_{V^h_N}W^{(j) -1}$, $G=K^{(j)}_{N}$) where $V_N : = \mathrm{span} \{ \varphi_n \}_{n \leq N}$, and $K^{(j)}_N: L^2(D;\mathbb{R})^h \to L^2(D;\mathbb{R})^h$ is a finite rank operator defined by 
\begin{align*}
K_j(u)(x) :=
\sum_{p,q \leq N}
K_{p,q}^{(j)}\bra u,\varphi_{p}\ket_{L^2(D;\mathbb{R})}\varphi_{q}(x), \quad x \in D,
\end{align*}
Then, $H : L^2(D;\mathbb{R})^h \to L^2(D;\mathbb{R})^h$ can be written by
\[
H = (W^{(k)} +  K^{(k)}) \circ \sigma \circ (W^{(k-1)} +  K^{(k-1)}) \circ \sigma \circ \cdots \circ (W^{(1)} +  K^{(1)}) \circ \sigma \circ (W^{(0)} +  K^{(0)}),
\]
which coincides with classical neural operators (assuming that all local weight matrices are invertible) used in e.g., \cite{lanthaler2023nonlocal}. See Definition~\ref{def:neural-operator} as well.
\end{example}
\begin{example}
Let us consider an example of an example of an generalized neural operator which is obtained by
composition of non-linear integral operators and 
smooth activation functions.
Let $D\subset \R^d$ be be a bounded domain with a smooth boundary.  
We consider the case when $X = H^{1}(D ;\mathbb{R}^h) = H^{1}(D;\mathbb{R})^h$
are Sobolev spaces,  and $A_j = W^{(j)}$ where $W^{(j)} \in \mathbb{R}^{h \times h}$ are  invertible matrices,
and $F_j = Id_{H^1} + i_{H^2 \to H^{1}} \circ \tilde{K}^{(j)} \circ i_{H^1 \to L^2} W^{(j) -1}$ (corresponding to $T_1 = i_{H^1 \to L^2}$, $T_2 = i_{H^2 \to H^1} (W^{(j)}) ^{-1}$, $G=\tilde{K}^{(j)}$) where $\tilde{K}^{(j)}: L^2(D;\mathbb{R})^h \to H^1(D;\mathbb{R})^h$ is non-linear integral operator 
\begin{align*}
\tilde{K}^{(j)}(u)(x) := 
\int_{D} k^{(j)}(x,y, u(y) ) u(y) dy
, \quad x \in D,
\end{align*}
where  kernel satisfies $k^{(j)} \in C^3(\overline D \times \overline D \times \mathbb{R}^h;   \mathbb{R}^{h \times h})$ has uniformly bounded
three derivatives. Also, let $\sigma\in C^1(\R)$ be an activation function which derivative is uniformly bounded and we denote $\sigma_*f=\sigma\circ f$.
Then, $H : H^1(D;\mathbb{R})^h \to H^1(D;\mathbb{R})^h$, defined by
\begin{eqnarray*}
H &=& (W^{(k)} + i_{H^2 \to H^1} \circ \tilde{K}^{(j)} \circ i_{H^1 \to L^2} ) \circ \sigma_* \circ (W^{(k-1)} + i_{H^2 \to H^1} \circ \tilde{K}^{(j)} \circ i_{H^1 \to L^2}) \circ \sigma_* 
\\
& & \circ \dots \circ (W^{(1)} +  K^{(1)}) \circ \sigma_* \circ (W^{(0)} +  i_{H^2 \to H^1} \circ \tilde{K}^{(j)} \circ i_{H^1 \to L^2}),
\end{eqnarray*}
is an generalized neural operator.
Here, $i_{H^1 \to L^2}$ and $i_{H^2 \to H^1}$ are compact emending from $H^1(D)$ to $L^2(D)$
and $H^2(D)$ to $H^1(D)$,  respectively.
Non-linear integral operator in neural operator has been used in \cite{kovachki2023neural}. 
\end{example}
\begin{example}
Let us consider an example of a typical neural operator which is a finite composition of layers of neural operators similar to those introduced in \cite{kovachki2023neural,lanthaler2023nonlocal}, $F:X\to X$ of the form
\beq\label{standard NO B}
F:u\to \sigma\circ (Wu+T_2(G(T_1u))),
\eeq
where $X=H^m(\Omega)$,  $\Omega\subset \mathbb R^d$ is a bounded 
set with a smooth boundary and $W:X\to X$ is a linear operator. In the above,
$Y=C(\overline \Omega)$ and  $Z=C^{m+1}(\overline \Omega)$, where $m>d/2$.
Moreover, $G:Y\to Z$ is a nonlinear integral operator
$$
G(u)(x)=\int_{\Omega}k_\theta(x,y,u(y))u(y)dy,
$$
where $k_\theta,\p_u k_\theta \in C^{m+1}(\overline \Omega\times \overline \Omega\times \mathbb R)$
is a kernel given by a feed-forward neural network with sufficiently smooth activation functions. Here, we assume that $\sigma_j\in
C^{\ell}(\R)$, $\ell\ge m+2$
and that the kernel $k_\theta$ is of the form
\[
k_\theta(x,y,t)=\sum_{j=1}^{J} c_{j}(x,y,\theta)\sigma_j(a_{j}(x,y,\theta)t+b_j(x,y,\theta)).
\]
Moreover,
$T_1:X\to Y$ and $T_2:Z\to X$  are the identical embedding operators 
$$
T_j(u)=u.
$$
Due to the choice of function spaces $X,Y$ and $Z$, the maps $T_1$ and $T_2$ are compact operators. Summarizing, $F=F_{\sigma,W,k}$, where
\beq
\label{standard NO B copied}
F_{\sigma,W,k}(u)(x)= \sigma((Wu)(x)+\int_{\Omega}k_\theta(x,y,u(y))u(y)dy).
\eeq
Furthermore, by choosing $k_{\theta}(x,y,u(y)) = k_{\theta}(x - y)$ and $\Omega = \mathbb{T}^d$ as the convolutional kernel and the torus,
the map $F$ takes the form of an FNO \cite{li2020fourier}.
\medskip
Here the activation functions $\sigma$ and $\sigma_j$ are in different
roles as the functions $\sigma_j$ appear inside the integral operator
(or between the compact operators $T_1$ and $T_2$). The function $\sigma$ is useful in obtaining universal approximation results for neural operators, but as this question is somewhat technical, we postpone this discussion to the end.
The appearance of the compact operators $T_1$ and $T_2$ makes the discretization of activation function $\sigma$ and the activation functions inside $G$ in Definition~\ref{NO definition} different, and this is the reason why we have introduced both $\sigma$ and $G$.
To consider invertible neural operators, we will below assume that $\sigma$ is an invertible function, for example, the leaky Relu function. In the above operator, the nonlinear function $G$ inside compact operators 
in the operation 
$$
N:u\to u +T_2(G(T_1u)),
$$
 and the compact operators map weakly converging sequences to norm converging sequences. This is essential in
the proofs of the positive results for approximation functors as discussed in the paper.
However, we do not
have general results on how the operation
$$G_\sigma:u\to \sigma \circ u,$$
 can be approximated by finite dimensional operators
in the norm topology, but only in the weak topology in the sense of
Definition~\ref{def:AA weak} of the Weak Approximation Functor. However, one can overcome this difficulty 
in two ways. The first way is to
use in the discretization a suitable finite dimensional space $V$ that satisfies $G_\sigma(V)\subset V$.
For example, when $X=L^2(\Omega)$ and 
$\sigma$ is a leaky relu function, one can choose $V$ to be
a space that consists of piecewise constant functions. The second way is choosing a composition of layers of the form 
$$
N_{j}:u\to W_ju+T_{1,j}(G_j(T_{2,j}u)),
$$
and 
$$
G_\sigma:u\to \sigma\circ u,
$$
in different finite dimensional spaces $V_j$. For example,
we can consider these operations as maps
\begin{align}
&N_{1}:V_1\to V_1,\\
&G_\sigma:V_1\to V_2:=G_\sigma(V_1),\\
&N_{2}:V_2\to V_2,\\
&G_\sigma:V_2\to V_3:=G_\sigma(V_2).   
\end{align}
This makes the composition 
$$
G_\sigma\circ N_{2}\circ  G_\sigma\circ N_{1}:V_1\to V_3,
$$
well defined. The maps, $G_\sigma$, are clearly invertible and one can
use Theorems~\ref{thm: finite product of operators} and ~\ref{thm:universality-bilipschitz-nos} to analyze when $N_j:V_j\to V_j$ are invertible functions.
Finally, we return to the question whether the activation function $\sigma$ is  useful in universal approximation results. If the activation function $\sigma$ is removed (that is, it is the identical map $\sigma_{id}:s\to s$), the operator $F$  is a sum of a linear
operator and a compact nonlinear integral operator,
\beq
\label{la}
F_{W,k}(u)(x)= (Wu)(x)+\int_{\Omega}k_\theta(x,y,u(y))u(y)dy.
\eeq
Moreover, if we compose above operators $F_j$ of the above form,
the obtained operator, $G :\ X \to X$ is also a sum of a linear operator and a compact operator,
$$
G(u)=\tilde W u+\tilde K(u).
$$
Moreover, the Frechet derivative of $G$ at $u_0$, denoted
$DG|_{u_0}$ is the map
$$
DG|_{u_0}:v\to \tilde W v+D\tilde K|_{u_0}v,
$$
and, due to the above assumptions on kernel $k_\theta(x,y,u)$, the derivative is a linear operator
$$
D\tilde K|_{u_0}:H^m(\Omega)\to H^{m+1}(\Omega).
$$
By the Sobolev embedding theorem it is a compact operator $D\tilde K|_{u_0}:X\to X$. This means that the Fredholm index of the derivative of $DG|_{u_0}$ is constant 
$$
\hbox{Ind}(DG|_{u_0})=\hbox{Ind}(W),
$$
that is, independent of the point $u_0$ where the derivative is computed.
In particular, this means that one cannot approximate an arbitrary $C^1$-function $G:X\to X$ in compact subsets of $X$ by neural operators which are compositions of layers \eqref{la}. Indeed, for a general $C^1$-function $G:X\to X$ the Fredholm index 
may be a varying function of $u_0$. Thus, $\sigma$ appears to be relevant for obtaining universal approximation theorems for neural operators.
\end{example}

\subsection{Residual neural operators}

Let $D \subset \mathbb{R}^d$ be a domain.
In what follows, we consider the real-valued $L^2$-function space $L^2(D;\mathbb{R})$ 

\footnote{We will discuss the function space $L^2(D;\mathbb{R})$ for easier reading, but all arguments can be replaced with real-valued function space $\mathcal{U}(D;\mathbb{R})$ that is a separable Hilbert space.}

. 
Let $\varphi=\{\varphi_{n}\}_{n \in \mathbb{N}} \subset L^2(D;\mathbb{R})$ be an orthonormal basis in $L^2(D;\mathbb{R})$.

\begin{definition}[Neural operators \cite{lanthaler2023nonlocal}]\label{def:neural-operator}
We define a neural operator $G: L^2(D;\mathbb{R}) \to L^2(D;\mathbb{R})$ by 
\[
G : u_{0} \mapsto u_{L+1}, \ 
\]
where $u_{L+1}$ is give by the following steps: 
\[
u_{\ell+1}(x) = \sigma \left( W^{(\ell)} u_{\ell}(x) + (K^{(\ell)}_{N}u_{\ell})(x) + b^{(\ell)} \right), \quad x \in D, \quad 0  \le   \ell  \le  L-1,
\]
\[
u_{L+1}(x) = W^{(L)}u_{L}(x)+(K^{(L)}_{N}u_{L})(x) + b^{(L)}, \quad x \in D,
\]
where $\sigma : \mathbb{R} \to \mathbb{R}$ is a non-linear activation operating element-wise, and $W^{(\ell)} \in \mathbb{R}^{d_{\ell+1}\times d_{\ell}}$ and $b^{(\ell)} \in \mathbb{R}^{d_{\ell+1}}$ and
\begin{align*}
K^{(\ell)}_{N}(v)(x) =
\sum_{p,q \leq N}
K_{p,q}^{(\ell)}\bra v,\varphi_{p}\ket_{L^2(D;\mathbb{R})}\varphi_{q}(x), \quad x \in D,
\end{align*}
where $K^{(\ell)}_{p,q} \in \mathbb{R}^{d_{\ell+1}\times d_{\ell}}$ ($\ell = 0,...,L$, $p,q =1,...,N$, $d_0 = d_{L+2}=1$). Here, we use the notation for $v=(v_1,...,v_{d_{\ell}}) \in L^2(D;\mathbb{R})^{d_{\ell}}$
\[
\bra v,\varphi_{p} \ket_{L^2(D;\mathbb{R})} = \left(\bra v_1,\varphi_{p}\ket_{L^2(D;\mathbb{R})}, ..., \bra v_{d_{\ell}},\varphi_{p}\ket_{L^2(D;\mathbb{R})} \right) \in \mathbb{R}^{d_{\ell}}.
\]
We denote by $\mathcal{NO}_{L, N, \varphi, \sigma}(L^2(D;\mathbb{R}))$ the class of neural operators $G : L^2(D;\mathbb{R}) \to L^2(D;\mathbb{R})$ defined above, with depths $L$, rank $N$, orthonormal basis $\varphi$, and activation function $\sigma$. 
\end{definition}

\begin{definition}[Residual Neural Operator]
Let $T, N \in \mathbb{N}$ and let $\delta \in (0,1)$.
We define by 
\begin{align}
&
\mathcal{RNO}_{T, N, \varphi, \sigma}(L^2(D;\mathbb{R}))
\nonumber
\\
&
:= 
\{ G : L^2(D;\mathbb{R}) \to L^2(D;\mathbb{R}) : G = (Id_{L^2(D;\mathbb{R})} + G_T ) \circ
\cdots \circ (Id_{L^2(D;\mathbb{R})} + G_1), 
\nonumber
\\
&
\hspace{3cm}
G_{t} \in \mathcal{NO}_{L_{t}, N, \varphi, \sigma}(L^2(D;\mathbb{R})), \ L_{t} \in \mathbb{N}, \ t = 1,...,T
\label{def:RNO}
\},
\end{align}

\begin{align}
&
\mathcal{RNO}^{inv}_{T, N, \varphi, \delta, \sigma}(L^2(D;\mathbb{R}))
\nonumber
:= 
\{ G : L^2(D;\mathbb{R}) \to L^2(D;\mathbb{R}) :
\\
&
\quad
G = (Id_{L^2(D;\mathbb{R})} + G_T ) \circ
\cdots \circ (Id_{L^2(D;\mathbb{R})} + G_1), 
G_{t} \in \mathcal{NO}_{L_{t}, N, \varphi, 
\sigma}(L^2(D;\mathbb{R})), 
\nonumber
\\
&
\quad
\
\mathrm{Lip}_{L^2(D;\mathbb{R}) \to L^2(D;\mathbb{R})}(G_{t}) \leq \delta,
\ L_{t} \in \mathbb{N}, \ t = 1,...,T
\label{def:inv-RNO}
\},
\end{align}
\end{definition}

\begin{lemma}
\label{lem:inv-RNO}
Let $\delta \in (0,1)$, and let $F \in \mathcal{RNO}^{inv}_{T, N, \varphi, \delta, \sigma}(L^2(D;\mathbb{R}))$. 
Let $\sigma : \mathbb{R} \to \mathbb{R}$ be Lipschitz continuous. 
If $\sigma : \mathbb{R} \to \mathbb{R}$ is Lipschitz continuous, then $F : X \to X$ is homeomorphism. Moreover, if $\sigma :  \mathbb{R} \to \mathbb{R}$ is $C^1$, then, $F : X \to X$ is $C^1$-diffeomorphism. 
\end{lemma}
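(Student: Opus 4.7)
The plan is to mimic the proof of Lemma~\ref{lem:invertible-resnet} almost verbatim. The essential observation is that the structural constraint $\mathrm{Lip}_{L^2 \to L^2}(G_t) \leq \delta < 1$ together with the skip connection forces each factor $Id_{L^2(D;\mathbb{R})} + G_t$ to be a contractive perturbation of the identity, and everything else follows from standard Hilbert-space monotone operator theory.

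First, I would reduce to the single-layer case. Since $F = (Id + G_T) \circ \cdots \circ (Id + G_1)$ is a finite composition, and compositions of homeomorphisms (resp.\ $C^1$-diffeomorphisms) are again homeomorphisms (resp.\ $C^1$-diffeomorphisms), it suffices to show that each factor $H_t := Id_{L^2(D;\mathbb{R})} + G_t$ is a homeomorphism (resp.\ $C^1$-diffeomorphism) of $L^2(D;\mathbb{R})$. I would verify strong monotonicity exactly as in equation~\eqref{eq:est-Id+DNNE-sm}: for $u,v \in L^2(D;\mathbb{R})$,
\begin{align*}
\langle H_t(u) - H_t(v),\, u-v \rangle_{L^2}
&= \|u-v\|^2_{L^2} + \langle G_t(u) - G_t(v),\, u-v \rangle_{L^2} \\
&\geq (1-\delta)\|u-v\|^2_{L^2},
\end{align*}
where the last step uses Cauchy-Schwarz and $\mathrm{Lip}(G_t) \leq \delta$. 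Coercivity of $H_t$ then follows from the same argument used inside the proof of Lemma~\ref{lem:strongly-monotine-op}.

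Next, assuming $\sigma$ is only Lipschitz, I would conclude surjectivity via the Minty--Browder theorem \cite[Theorem 9.14-1]{Philippe2013} (hemicontinuity is immediate because $H_t$ is Lipschitz, hence continuous). Injectivity is immediate from strong monotonicity. To get continuity of $H_t^{-1}$, I would substitute $u \mapsto H_t^{-1}(u)$, $v \mapsto H_t^{-1}(v)$ into the strong monotonicity inequality and apply Cauchy-Schwarz to obtain $\|H_t^{-1}(u) - H_t^{-1}(v)\|_{L^2} \leq (1-\delta)^{-1}\|u-v\|_{L^2}$, so $H_t$ is a homeomorphism.

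For the $C^1$-diffeomorphism claim, I would note that if $\sigma \in C^1(\mathbb{R})$, then each neural operator $G_t \in \mathcal{NO}_{L_t,N,\varphi,\sigma}(L^2(D;\mathbb{R}))$ is $C^1$, so $H_t$ is $C^1$ with Fr\'echet derivative $DH_t|_u = I + DG_t|_u$. Since $G_t$ is $\delta$-Lipschitz, $\|DG_t|_u\|_{L^2 \to L^2} \leq \delta < 1$, so $DH_t|_u$ is invertible by a Neumann series argument (its inverse being $\sum_{k\geq 0}(-DG_t|_u)^k$). Applying the Banach-space inverse function theorem pointwise, $H_t^{-1}$ is $C^1$, so $H_t$ is a $C^1$-diffeomorphism. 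The main (minor) obstacle is merely bookkeeping: verifying that the differentiability hypotheses on $\sigma$ propagate through the layered Nemytskii-plus-finite-rank structure of Definition~\ref{def:neural-operator} to give $G_t \in C^1(L^2(D;\mathbb{R}); L^2(D;\mathbb{R}))$, which follows because each building block (multiplication by $W^{(\ell)}$, the finite-rank operator $K_N^{(\ell)}$, and the Nemytskii operator induced by $C^1$ $\sigma$) is $C^1$ and compositions thereof remain so.
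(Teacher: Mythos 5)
Your proposal is correct and follows essentially the same route as the paper, which simply invokes the argument of Lemma~\ref{lem:invertible-resnet}: reduce to a single block $Id+G_t$, establish strong monotonicity from $\mathrm{Lip}(G_t)\le\delta<1$, apply Minty--Browder for surjectivity, and read off Lipschitz continuity of the inverse from the monotonicity inequality. The only (harmless) variation is that for the $C^1$ case you invert $DH_t|_u$ by a Neumann series using $\|DG_t|_u\|\le\delta<1$, whereas the paper's template uses injectivity plus the Fredholm alternative for the compact finite-rank perturbation; both are valid.
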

The proof is given by the same argument in Lemma~\ref{lem:invertible-resnet}.

\begin{lemma}
\label{lem:resnet-RNO}
Assume that the orthonormal basis $\varphi$ include the constant function. Then, we have the following inclusion:
\begin{align}
&
\mathcal{R}_{T, N, \varphi, \sigma}(L^2(D;\mathbb{R}))
\subset 
\mathcal{RNO}_{T, N, \varphi, \sigma}(L^2(D;\mathbb{R})),
\label{eq:R-RNO}
\\
&
\mathcal{R}^{inv}_{T, N, \varphi, \delta, \sigma}(L^2(D;\mathbb{R}))
\subset 
\mathcal{RNO}^{inv}_{T, N, \varphi, \delta, \sigma}(L^2(D;\mathbb{R})).
\label{eq:R-RNO-inv}
\end{align}
\end{lemma}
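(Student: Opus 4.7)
\emph{Plan.} Both classes are defined as compositions of blocks of the form $\mathrm{Id} + \cdot$, so the inclusions reduce to showing that, for every neural network $NN_t\colon \mathbb{R}^N \to \mathbb{R}^N$ with activation $\sigma$, the composition $D_N \circ NN_t \circ E_N$ lies in $\mathcal{NO}_{L_t,N,\varphi,\sigma}(L^2(D;\mathbb{R}))$ for some depth $L_t$. The invertible inclusion then follows at once: both sides represent the \emph{same} mapping, so $\mathrm{Lip}(G_t) = \mathrm{Lip}(D_N \circ NN_t \circ E_N)$ and the constraint $\le \delta$ transfers. The entire argument is therefore local to a single residual block.

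\emph{The simulation.} Write $NN_t(\alpha) = W_n \sigma(W_{n-1} \sigma(\cdots \sigma(W_1 \alpha + b_1) \cdots) + b_{n-1}) + b_n$ with $W_k \in \mathbb{R}^{h_k \times h_{k-1}}$, $b_k \in \mathbb{R}^{h_k}$, $h_0 = h_n = N$, and let $\varphi_1$ denote the unit-norm constant basis function (so $\varphi_1 \equiv |D|^{-1/2}$ and $\langle c\mathbf{1}, \varphi_1\rangle = c\sqrt{|D|}$ for every scalar $c$). I build a neural operator of depth $L_t = n-1$ with channel widths $d_0 = d_n = 1$ and $d_\ell = h_\ell + 1$ for $\ell = 1,\ldots,n-1$; the extra channel in each intermediate layer carries an auxiliary constant signal that will later supply the final bias. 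Fix once and for all a scalar $s$ with $\sigma(s) \neq 0$.

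\emph{Layer-by-layer.} In the first NO layer ($\ell = 0$, with activation) set $W^{(0)} = 0$ and $K^{(0)}_{p,1} = \sqrt{|D|}\,((W_1)_{\bullet,p},\,0)^\top$, with all other $K^{(0)}_{p,q} = 0$. Since $\varphi_1$ is constant and the remaining basis elements are orthogonal to it, the formula for $K^{(0)}_N$ collapses to the constant vector function $(W_1 E_N u_0,\,0)$; adding the bias $b^{(0)} = (b_1, s)$ and applying $\sigma$ pointwise yields $u_1 = (\alpha_1, \sigma(s))$, still a constant function. For the intermediate layers $\ell = 1,\ldots,n-2$, take $K^{(\ell)}_N = 0$, choose $W^{(\ell)}$ as a block-diagonal extension of $W_{\ell+1}$ that zeroes the auxiliary channel, and use bias $(b_{\ell+1}, s)$; since $u_\ell$ is constant, the pointwise step produces $u_{\ell+1} = (\alpha_{\ell+1}, \sigma(s))$. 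For the terminal linear layer $\ell = n-1$ take $W^{(n-1)} = 0$, $b^{(n-1)} = 0$, and $K^{(n-1)}_{1,q} = |D|^{-1/2}\bigl((W_n)_{q,\bullet},\ b_{n,q}/\sigma(s)\bigr)$ for $q \le N$, with all other entries zero. Then $\langle u_{n-1}, \varphi_1\rangle = \sqrt{|D|}(\alpha_{n-1}, \sigma(s))$, and a direct computation gives $u_n(x) = \sum_{q \le N}\bigl((W_n\alpha_{n-1})_q + b_{n,q}\bigr)\varphi_q(x) = D_N(NN_t(E_N u_0))(x)$, which is precisely $G_t(u_0)$.

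\emph{Main obstacle.} The nontrivial step is encoding the final-layer bias $b_n \in \mathbb{R}^N$: since $b^{(n-1)} \in \mathbb{R}^{d_n} = \mathbb{R}$ contributes only a scalar constant, while $b_n$ generically has components $b_{n,q}$ with $q \ge 2$, the bias cannot enter through $b^{(n-1)}$ and must be supplied by the integral operator $K^{(n-1)}_N$. This forces me to propagate an auxiliary constant channel through all intermediate layers and to rely on $\langle \mathbf{1}, \varphi_1\rangle \neq 0$, which is exactly where the standing hypothesis that $\varphi$ contains the constant function is used. Everything else is bookkeeping and direct substitution.
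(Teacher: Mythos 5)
Your proof is correct and follows essentially the same route as the paper's: both exploit the constant basis function so that $E_N$ and $D_N$ become realizable as rank-$N$ integral-operator layers acting on constant vector-valued functions, with the hidden layers of $NN_t$ simulated pointwise. Your explicit layer-by-layer bookkeeping — in particular the auxiliary constant channel carrying $\sigma(s)$ to inject the final bias $b_n$ through $K^{(n-1)}_N$ — fills in a detail the paper's $\tilde{D}_N\circ NN_t\circ\tilde{E}_N$ factorization leaves implicit, but the underlying idea is identical.
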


\begin{proof}
Let $G \in \mathcal{R}_{T, N, \varphi,\sigma}(L^2(D;\mathbb{R}))$ such that
\[
G= (Id_{L^2(D;\mathbb{R})} + D_{N} \circ NN_{T} \circ E_{N} ) \circ
\cdots \circ (Id_{L^2(D;\mathbb{R})} + D_{N} \circ NN_{1} \circ E_{N} ).
\]
Since $\varphi=\{\varphi_n\}_{n \in \mathbb{N}} \subset L^2(D;\mathbb{R})$ has the constant basis, denoting it by $\varphi_0:=\frac{{\bf1}_D(x)}{\|{\bf1}_D\|_{L^2(D;\mathbb{R})}}$, where ${\bf1}_D(x)=1$ for $x \in D$,
we see that for $\alpha=(\alpha_1, ..., \alpha_N) \in \mathbb{R}^{N}$
\begin{align*}
D_{N} \alpha (x)
&
=
\sum_{n  \le  N} \alpha_n \varphi_n(x)
=
\sum_{n  \le  N} \alpha_n \bra \varphi_1, \varphi_1 \ket_{L^2(D;\mathbb{R})} \varphi_n(x)
\\
&
=
\sum_{n  \le  N} \frac{1}{\|\mathbf{1}_D\|_{L^2(D;\mathbb{R})}} \bra \alpha_n \cdot \mathbf{1}_{D}, \varphi_1 \ket_{L^2(D;\mathbb{R})} \varphi_n(x).
\end{align*}
We define $\tilde{D}_{N} : L^2(D;\mathbb{R})^{N} \to L^2(D;\mathbb{R})$ by for $u \in L^2(D;\mathbb{R})^{N}$ 
\[
\tilde{D}_{N} u
:=
\sum_{p,q \le  N} \tilde{D}_{p,q} \bra u, \varphi_p \ket_{L^2(D;\mathbb{R})} \varphi_q,
\]
where $\tilde{D}_{p,q} \in \mathbb{R}^{1 \times N}$ is defined by
\[
\tilde{D}_{p,q}
= 
\left\{
\begin{array}{ll}
\Bigl( 0,...,0, 
\underbrace{\frac{1}{\|\mathbf{1}_D\|_{L^2(D;\mathbb{R})}}}
_{q-th}
, 0,...,0 \Bigr), & p=1 \\ 
O, & p \neq 1.
\end{array}
\right.
\]
Then, we have that 
\begin{equation}
\label{eq:est:D-NN-E-1}
D_{N} \circ NN_{t} \circ E_{N}(u)
= 
\tilde{D}_{N}
\left(
NN_{t} \circ E_{N}(u) \cdot \mathbf{1}_D
\right).
\end{equation}
Next, we see that 
\begin{align*}
&
E_{N}(u)
= 
\left( \bra u, \varphi_1 \ket_{L^2(D;\mathbb{R})},..., \bra u, \varphi_N \ket_{L^2(D;\mathbb{R})} \right)
= 
\|{\bf 1} \|_{L^2(D;\mathbb{R})}
\left( \bra u, \varphi_1 \ket_{L^2(D;\mathbb{R})},..., \bra u, \varphi_N \ket_{L^2(D;\mathbb{R})} \right)
 \varphi_1(x),
\end{align*}
We define $\tilde{E}_{N} : L^2(D;\mathbb{R}) \to L^2(D;\mathbb{R})^N$ by 
\[
\tilde{E}_{N}(u)
:=
\sum_{p,q \leq N} 
\tilde{E}_{p,q} \bra u, \varphi_p \ket_{L^2(D;\mathbb{R})} \varphi_q(x),
\]
where $\tilde{E}_{p,q} \in \mathbb{R}^{N \times 1}$ is defined by
\[
\tilde{E}_{p,q}
= 
\left\{
\begin{array}{ll}
\Bigl( 0,...,0, 
\underbrace{
\bra \mathbf{1}_{D}, \varphi_n\ket_{L^2(D;\mathbb{R})}
}_{p-th}
, 0,...,0 \Bigr)^{T}, & q = 1 \\ 
O, & q \neq 1.
\end{array}
\right.
\]
Then we have that 
\begin{equation}
\label{eq:est:D-NN-E-2}
\tilde{D}_{N}
\left(
NN_{t} \circ E_{N}(u) \cdot \mathbf{1}_D
\right)
=
\tilde{D}_{N} \circ NN_{t} \circ \tilde{E}_{N}(u).
\end{equation}
With (\ref{eq:est:D-NN-E-1}) and (\ref{eq:est:D-NN-E-2}), we see that
\[
D_{N} \circ NN_{t} \circ E_{N}
= \tilde{D}_{N} \circ NN_{t} \circ \tilde{E}_{N}  
\in \mathcal{NO}_{L_{t}, N, \varphi, \sigma}(L^2(D;\mathbb{R})),
\]
where $L_{t} \in \mathbb{N}$ is the depth of $NN_{t}$. 
Therefore, $G$ has the form 
\[
G = 
(Id_{L^2(D;\mathbb{R})} + \tilde{D}_{N} \circ NN_{T} \circ \tilde{E}_{N} ) \circ
\cdots \circ (Id_{L^2(D;\mathbb{R})} +\tilde{D}_{N} \circ NN_{1} \circ \tilde{E}_{N} ) 
\in \mathcal{RNO}_{T, N, \varphi, \sigma}(L^2(D;\mathbb{R})).
\]

(\ref{eq:R-RNO-inv}) can be proved by the same argument.
\end{proof}

\section{Generalizations}

\subsection{Generalization of the no-go Theorem \ref{thm:no-go} using weak topology}  \label{Gen: weak topology}

We can generalize the no-go Theorem \ref{thm:no-go} for the case when approximations and continuity of approximations are considered in the weak topology of the Hilbert space $X$. This can be done when the approximations of the identity map and the minus one times the identity operator satisfy additional assumptions and the partially ordered 
subset $S_0(X)$ is the set of all all linear subspaces $S(X)$ of $X$.

In the case when $S_0(X)=S(X)$ and Definition \ref{def:AA} the condition (A) can generalized as follows:

\begin{definition}[Weak Approximation Functor]
\label{def:AA weak}

When $S_0(X)=S(X)$  we define the \emph{weak approximation functor}, that we denote by $\mathcal  A\colon \mathcal  D\to \mathcal  B$, as the functor that maps each
$(X,F) \in \mathcal O_{\mathcal  D}$ to some $(X,S(X),(F_V)_{V \in S(X)})\in \mathcal O_{\mathcal B}$ so that the Hilbert space $X$ stays the same. The approximation functor maps all morphisms $a_\phi$ to $A_\phi$ and morphisms $a_{X_1,X_2}$ to $A_{X_1,X_2}$, and has the following properties

\begin{itemize}
\item [(A')]

For all $r>0$, all $(X,F)\in \mathcal O_{\mathcal  D}$ and all $y\in X$, it holds that
\beq
\label{weak A}
\lim_{V\to X} \sup_{x\in  \overline B_{X}(0,r)\cap V}
\bra F_V(x)-F(x),y\ket_X=0.
\eeq
Moreover, when $F:X\to X$ is the operator $Id:X\to X$ or  $-Id:X\to X$,
then $F_V$ is 
the operator $Id_V:V\to V$ or  $-Id_V:V\to V$, respectively.
\end{itemize}
\end{definition}

Moreover,  Definition \ref{def:conti-approx-functor}  can generalized as follows so that it uses the weak topology.
\begin{definition}
\label{def:conti-approx-functor weak}

We say that the approximation functor $\mathcal  A$ is continuous in the weak topology if  the following holds:
Let $(X,F),(X,F^{(j)})\in \mathcal O_{\mathcal  D}$ be such that the Hilbert
space $X$ is the same for all these objects 
and let
$(X,S(X),(F_V)_{V\in S(X)})=\mathcal  A(X,F)$ be approximating sequences of $(X,F)$ and 
$(X,S(X),(F_{j,V})_{V\in S(X)})=\mathcal  A(X,F^{(j)})$ be approximating sequences of $(X,F^{(j)})$.
Moreover, assume that  for $r>0$ and all $y\in X$
\begin{equation}
\label{F continuous 0 weak}
    \lim_{j\to \infty}\sup_{x\in \overline B_{X}(0,r)}
|\bra F^{(j)}(x)-F(x),y\ket_X|=0.
\end{equation}
 Then, for all $V\in S(X)$ the approximations
$F_{V}^{(j)}$ of $F^{(j)}$ and $F_{V}$ of $F$ satisfy for all $y\in X$
\begin{equation}
\label{FV continuous 0 weak}
\lim_{j\to \infty}\sup_{x\in V\cap \overline B_{V}(0,r)}
|\bra F_{V}^{(j)}(x)-F_V(x),y\ket_X|=0.
\end{equation}

\end{definition}

The theorem below states a negative result, namely that there does not exist continuous approximating functors for diffeomorphisms.

\begin{theorem}\label{thm:no-go weak}
(No-go theorem for discretization of general diffeomorphisms)
    There exists no functor $\mathcal  D\to \mathcal  B$ that satisfies 

 the property (A') of
    a weak approximation functor and is continuous  in the weak topology. 

\end{theorem}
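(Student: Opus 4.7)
The plan is to mimic the argument of Theorem \ref{thm:no-go}, but exploit the additional endpoint-fixing clause in condition (A') which rigidly pins down the discretization at $Id$ and $-Id$, thereby bypassing the need for separate approximation estimates on the endpoints. I would work with a separable Hilbert space $X$ isometrically identified with $L^2(0,1)$ via an isometry $J$, and, as in the discussion surrounding \eqref{multiplication operator A}, pull back the multiplication operators $A_s u(t) = \mathrm{sign}(t-s)\,u(t)$ to obtain invertible self-adjoint operators $\tilde A_s \colon X \to X$ with $\tilde A_0 = Id$ and $\tilde A_1 = -Id$. Each $\tilde A_s$ is a bounded linear isomorphism and hence a $C^1$-diffeomorphism, so $(X,\tilde A_s) \in \mathcal O_{\mathcal D}$ for every $s \in [0,1]$.

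Assume for contradiction that a functor $\mathcal A \colon \mathcal D \to \mathcal B$ exists satisfying (A') and weak continuity. First I would verify that the family $s \mapsto \tilde A_s$ feeds correctly into the hypothesis of Definition~\ref{def:conti-approx-functor weak}: for any sequence $s_j \to s$, any $r>0$, and any $y \in X$,
\[
\sup_{x \in \overline B_X(0,r)}|\bra \tilde A_{s_j}(x)-\tilde A_s(x),y\ket_X| = r\,\|(\tilde A_{s_j}-\tilde A_s)y\|_X \to 0,
\]
since the family is self-adjoint and strongly continuous, as established in the discussion preceding Theorem~\ref{thm:no-go}. Write $F_{t,V} := \mathcal A(X,\tilde A_t)_V$. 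By the normalization clause of (A'), for every $V \in S(X)$ we have exactly $F_{0,V} = Id_V$ and $F_{1,V} = -Id_V$, with no approximation error.

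Now fix a finite-dimensional subspace $W \in S(X)$ of odd dimension. The weak continuity of $\mathcal A$ applied along the path $\tilde A_t$ yields, for every $y \in X$, $\sup_{x \in \overline B_W(0,r)}|\bra F_{t,W}(x)-F_{\hat t,W}(x), y\ket_X| \to 0$ as $t \to \hat t$. Specializing $y$ to an orthonormal basis of the finite-dimensional $W$ and combining the finitely many inequalities gives $\sup_{x \in \overline B_W(0,r)}\|F_{t,W}(x)-F_{\hat t,W}(x)\|_W \to 0$; that is, $t \mapsto F_{t,W}$ is continuous in the topology of uniform convergence on $\overline B_W(0,r)$. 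With this continuity in hand, the degree-theoretic argument from the proof of Theorem~\ref{thm:no-go} applies verbatim: each $F_{t,W}$ is a $C^1$-diffeomorphism of $W$ by the definition of $\mathcal B$, so $t \mapsto \hbox{deg}(F_{t,W},\overline B_W(0,r),F_{t,W}(0))$ is an integer-valued continuous function on $[0,1]$, hence constant. Evaluating at the endpoints using (A') gives
\[
\hbox{deg}(Id_W,\overline B_W(0,r),0) = 1,\qquad \hbox{deg}(-Id_W,\overline B_W(0,r),0) = (-1)^{\dim W} = -1,
\]
the required contradiction.

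The main subtlety will be the passage from weak convergence in $X$ to norm convergence on the finite-dimensional slice $W$; this is where the endpoint-fixing clause of (A') is essential, for otherwise the weak topology on $X$ is too coarse to detect the degree invariants that live in $W$. Once that passage is made, the finite-dimensionality of $W$ collapses weak and norm topologies and the rest is the same degree-theoretic obstruction used in Theorem~\ref{thm:no-go}, now applied to a path whose endpoints are known exactly rather than up to small error.
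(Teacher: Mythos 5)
Your proposal is correct and follows essentially the same route as the paper's argument: use the endpoint-normalization clause of (A') to pin down $F_{0,V}=Id_V$ and $F_{1,V}=-Id_V$ exactly, observe that weak continuity tested against a basis of a finite-dimensional $W$ yields uniform norm continuity of $t\mapsto F_{t,W}$ on $\overline B_W(0,r)$, choose $\dim W$ odd, and run the degree-theoretic contradiction from Theorem~\ref{thm:no-go}. Your explicit use of the strongly continuous multiplication path $A_s$ (rather than a Kuiper-type norm-continuous path) is exactly the construction the paper itself offers as the motivating example for the strong operator topology, and your verification that self-adjointness converts the weak-continuity hypothesis into strong operator continuity is a correct and welcome detail.
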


The proof of Theorem \ref{thm:no-go weak} is analogous to Theorem \ref{thm:no-go}  by replacing $A_1$ by the map $-Id$ and
considering a linear space $V\in S(X)$ having an odd dimension, in which case $\deg(A_1:V\to V)=-1$. 
Let $F_0=Id:X\to X$ and $F_1=-Id:X\to X$. Assume that 
$(X,S(X),(F_{t,V})_{V\in S(X)})=
\mathcal  A(X,F_t)$ are approximations of the map $F\colon X\to X$ in the weak topology and that $\mathcal  A$ is continuous in the weak topology. Recall that then $F_{t,V}:V\to V$ are $C^1$-diffeomorphisms that are discretizations of
$F\colon X\to X$. 
Then, by condition (A'),   $F_{0,V}=Id_V$ and  $F_{1,V}=-Id_V$
 so that $\deg(F_{0,V})=1$ and  $\deg(F_{1,V})=-1$. 
Observing that when the condition \eqref{FV continuous 0 weak}
is applied for $y_1,y_2,\dots,y_n\in V$ that form  a basis of the space $V$ having the dimension $\dim(V)=n$,
we see that  the condition \eqref{FV continuous 0 weak} implies  the condition \eqref{FV continuous 0}.
Using these observations, Theorem \ref{thm:no-go weak} is follows similarly to the proof of Theorem \ref{thm:no-go}.

\newpage

\end{document}